\documentclass{article}

\usepackage{microtype}
\usepackage{graphicx}
\usepackage{subcaption}
\usepackage{booktabs} 

\usepackage{hyperref}

\usepackage[accepted]{icml2026}

\usepackage{amsmath}
\usepackage{amssymb}
\usepackage{mathtools}
\usepackage{amsthm}

\usepackage[capitalize,noabbrev]{cleveref}

\theoremstyle{plain}
\newtheorem{theorem}{Theorem}[section]
\newtheorem{proposition}[theorem]{Proposition}
\newtheorem{lemma}[theorem]{Lemma}

\theoremstyle{definition}
\newtheorem{definition}[theorem]{Definition}
\newtheorem{assumption}[theorem]{Assumption}
\theoremstyle{remark}
\newtheorem{remark}[theorem]{Remark}

\usepackage[textsize=tiny]{todonotes}

\icmltitlerunning{Enhancing LLM Training via Spectral Clipping}

\usepackage{amssymb,amsmath,amsthm,dsfont}

\providecommand{\lin}[1]{\ensuremath{\left\langle #1 \right\rangle}}
\providecommand{\abs}[1]{\left\lvert#1\right\rvert}
\providecommand{\norm}[1]{\left\lVert#1\right\rVert}

  \providecommand{\R}{\mathbb{R}} 
  \providecommand{\N}{\mathbb{N}} 
  
  \DeclareMathOperator{\E}{{\mathbb E}}
  \providecommand{\E}[1]{{\mathbb E}\left.#1\right. }     

  \DeclareMathOperator*{\argmin}{arg\,min}

  \DeclareMathOperator*{\diag}{diag}

  \providecommand{\0}{\mathbf{0}}

  \let\ggg\gg
  \renewcommand{\gg}{\mathbf{g}}

  \let\lll\ll
  \renewcommand{\ll}{\mathbf{l}}

  \providecommand{\uu}{\mathbf{u}}

  \providecommand{\xx}{\mathbf{x}}

  \providecommand{\mA}{\mathbf{A}}
  \providecommand{\mB}{\mathbf{B}}
  \providecommand{\mC}{\mathbf{C}}
  \providecommand{\mD}{\mathbf{D}}
  \providecommand{\mE}{\mathbf{E}}
  
  \providecommand{\mG}{\mathbf{G}}
  
  \providecommand{\mI}{\mathbf{I}}

  \providecommand{\mM}{\mathbf{M}}
  \providecommand{\mN}{\mathbf{N}}
  
  \providecommand{\mP}{\mathbf{P}}
  \providecommand{\mQ}{\mathbf{Q}}
  \providecommand{\mR}{\mathbf{R}}
  \providecommand{\mS}{\mathbf{S}}
  \providecommand{\mT}{\mathbf{T}}
  \providecommand{\mU}{\mathbf{U}}
  \providecommand{\mV}{\mathbf{V}}
  
  \providecommand{\mX}{\mathbf{X}}
  \providecommand{\mY}{\mathbf{Y}}
  \providecommand{\mZ}{\mathbf{Z}}

  \providecommand{\cB}{\mathcal{B}}

  \providecommand{\cN}{\mathcal{N}}
  \providecommand{\cO}{\mathcal{O}}

  \providecommand{\cS}{\mathcal{S}}

\providecommand{\inlinecomment}[3]{%
  {\color{#1}#2: #3}}%
\newcommand\commenter[2]%
{%
  \expandafter\newcommand\csname i#1\endcsname[1]{\inlinecomment{#2}{#1}{##1}}
  \expandafter\newcommand\csname #1\endcsname[1]{\mycomment{#1}{##1}{#2}}
}

\definecolor{mydarkblue}{rgb}{0,0.08,0.45}

\usepackage{url}

\newcommand{\defeq}{:=}

\providecommand{\clipSp}{\operatorname{clip}^{\operatorname{sp}}}
\providecommand{\clipG}{\operatorname{clip}^{\operatorname{g}}}
\providecommand{\clipCw}{\operatorname{clip}^{\operatorname{cw}}}
\providecommand{\sign}{\operatorname{sign}}
\providecommand{\orth}{\operatorname{orth}}
\providecommand{\clip}{\operatorname{clip}}
\providecommand{\dist}{\operatorname{dist}}
\providecommand{\diag}{\operatorname{diag}}
\providecommand{\trace}{\operatorname{trace}}
\providecommand{\proj}{\operatorname{proj}}
\providecommand{\rms}{\operatorname{rms}}
\providecommand{\vect}{\operatorname{vec}}

\begin{document}

\twocolumn[
  \icmltitle{Enhancing LLM Training via Spectral Clipping}



  \icmlsetsymbol{equal}{*}

  \begin{icmlauthorlist}
    \icmlauthor{Xiaowen Jiang}{yyy,sch}
    \icmlauthor{Andrei Semenov}{comp}
    \icmlauthor{Sebastian U. Stich}{yyy}
  \end{icmlauthorlist}

  \icmlaffiliation{yyy}{CISPA Helmholtz Center for Information Security, Saarbrücken, Germany}
  \icmlaffiliation{comp}{EPFL}
  \icmlaffiliation{sch}{Universität des Saarlandes, Saarbrücken, Germany}

  \icmlcorrespondingauthor{Xiaowen Jiang}{xiaowen.jiang@cispa.de}
  \icmlcorrespondingauthor{Andrei Semenov}{andrii.semenov@epfl.ch}
  \icmlcorrespondingauthor{Sebastian U. Stich}{stich@cispa.de}

  \icmlkeywords{Machine Learning, ICML}

  \vskip 0.3in
]



\printAffiliationsAndNotice{}  

\begin{abstract}
While spectral-based optimizers like Muon operate directly on the spectrum of updates, standard adaptive methods such as AdamW do not account for the spectral structure of weights and gradients, leaving them vulnerable to two empirical issues in large language model (LLM) training:\
(i)~the optimizer updates can have large spectral norms, potentially destabilizing training and degrading generalization; (ii)~stochastic gradient noise can exhibit sparse spectral spikes, with a few dominant singular values much larger than the rest. We propose \emph{SPECTRA}, a general framework addressing these by (i)~\emph{post}-spectral clipping of updates to enforce spectral-norm constraints (ii) optional \emph{pre}-spectral clipping of gradients to suppress spectral noise spikes. We prove that post-clipping constitutes a Composite Frank-Wolfe method with spectral-norm constraints and weight regularization. We further analyze how pre-clipping mitigates sparse spectral spikes. We propose efficient soft spectral clipping via Newton-Schulz iterations, avoiding expensive SVD. Experiments on LLM pretraining show SPECTRA uniformly improves validation loss for various optimizers, including AdamW, Signum, Mars, and AdEMAMix, with the best-performing variants achieving state-of-the-art results. Models trained with SPECTRA exhibit smaller weight norms, confirming the link between spectral clipping and regularization. 
\end{abstract}

\section{Introduction}
The optimization of Large Language Models (LLMs) has increasingly shifted focus from simple gradient-based updates to more advanced preconditioning and normalization techniques. While standard optimizers like AdamW~\citep{adamw} rely on coordinate-wise normalization to handle ill-conditioned curvature, a growing body of work suggests that controlling the spectral properties of weight matrices and updates can be beneficial for  training stability and generalization~\citep{miyato2018spectral,bartlett2017spectrally,SCG,davis2025spectral}. Consequently, spectral-based optimizers such as Shampoo~\citep{shampoo},
Spectral Descent~\citep{carlson2015preconditioned} and Muon~\citep{muon,d-muon}—which operate directly on the spectrum of update matrices—have emerged as theoretically grounded paths toward faster convergence.

However, the transition to purely spectral optimizers is not yet ubiquitous. Recent optimizer benchmarks reveal that advanced coordinate-wise methods, such as AdEMAMix~\citep{ademamix} and Mars~\citep{mars}, often match or exceed the performance of current spectral approaches~\citep{semenov2025benchmarking}. Despite their effectiveness, these coordinate-wise algorithms do not account for the global spectral structure of weights and gradients, leaving them vulnerable to two fundamental issues. 

First, the spectral norms of optimizer updates are often uncontrolled. As detailed in Section 2, popular optimizers like Signum~\citep{signum} or AdamW~\citep{adamw} can produce update directions with excessive spectral norms, potentially destabilizing training and degrading generalization. Second, stochastic gradient noise frequently exhibits ``sparse spectral spikes", by which we mean low-rank components with singular values orders of magnitude larger than the signal. Standard coordinate-wise or global clipping fails to distinguish these artifacts, often suppressing the informative signal alongside the noise.

\textbf{Contributions.}
To bridge the gap between the empirical strength of coordinate-wise methods and the theoretical benefits of spectral control, we introduce \emph{SPECTRA} (SPEctral Clipping for TRaining Acceleration). SPECTRA is a general, optimizer-agnostic framework that: (i) applies \emph{post-spectral clipping} to the update matrix to strictly enforce spectral constraints, and (ii) optionally applies \emph{pre-spectral clipping} to raw gradients to filter spectral noise spikes. Our main contributions are as follows:

\textbf{1) Algorithmic Framework.} We propose SPECTRA, a flexible wrapper applicable to any base optimizer with decoupled weight decay (e.g., AdamW, Signum, Mars, AdEMAMix). We develop a GPU-efficient approximation of spectral clipping via Newton-Schulz iterations based solely on fast square matrix–matrix multiplications.

\textbf{2) Implicit Regularization \& Spectral Norm Constraints.} We prove that applying post-spectral clipping to the standard update rule with decoupled weight decay is equivalent to a Composite Frank-Wolfe algorithm. This reformulation reveals that SPECTRA minimizes a spectral-norm constrained objective with implicit weight regularization, simultaneously enhancing stability and promoting low-complexity solutions. Furthermore, we provide an explicit geometric interpretation of the hyperparameters, showing how they directly control the radius of the spectral constraint ball and the strength of the regularization.

\textbf{3) Robustness to Spectral Spikes.}
We empirically demonstrate the existence of spectral noise spikes in stochastic gradients during LLM training. We then provide theoretical insights on why spectral clipping can help mitigate the adverse effects of such noise structures during optimization and can be significantly better than global norm clipping. 

\textbf{4) Empirical SOTA.} 
We evaluate SPECTRA on both synthetic and LLM pretraining tasks ranging from 124M to 1.5B parameters. We demonstrate that SPECTRA consistently improves the validation loss of base optimizers and enables the use of larger learning rates. Applying SPECTRA to coordinate-wise methods achieves the best performance among state-of-the-art methods. Additionally, models trained with SPECTRA exhibit smaller weight norms, confirming the theoretical link to weight regularization. 
The code is available at
\url{https://github.com/mlolab/llm-spectral-clipping}.

\textbf{Notations.} 
We denote the set of integers $\{1, \ldots, n\}$ by $[n]$. For a matrix $\mX$, we let $\sigma_i(\mX)$ be its $i$-th largest singular value. We use $\|\mX\|_2$ and $\|\mX\|_F$ to denote the spectral and Frobenius norms of $\mX$, respectively, and $\|\xx\|_2$ for the Euclidean norm of a vector $\xx$. For both matrices and vectors, we use $\|\mX\|_{\infty} = \max_{i,j}\{|\mX_{i,j}|\}$
to denote the $\ell_{\infty}$ norm.
Other notations are discussed as needed within the context.

We discuss the closely related work in
the main text and defer the reader to Appendix~\ref{sec:related-work} for additional references and broader context.

\section{Preliminaries and Setup}
The architecture of transformer-based LLMs consists of a large number of linear transformations, parameterized by weight matrices such as the query, key, value, and output projections in attention modules, as well as matrices in feed-forward layers. Without loss of generality, we focus on the
training of a single such weight matrix, denoted by 
$\mX \in \R^{m \times n}$ and initialized at $\mX_0$, using stochastic gradient 
methods. 
The widely-used update rule with decoupled weight decay is given by:
\begin{equation}
	\label{eq:standard-update-rule}
	\mX_{k+1} = (1-\lambda \eta_k) \mX_k - \eta_k \mU_k, 
	\quad
	k = 0, 1, 2 \ldots \;,
\end{equation}
where $\lambda \ge 0$ is the weight decay factor, $\eta_k > 0$ is the learning rate, and $\mU_k$ denotes the update matrix computed by the specific optimizer using the stochastic gradient.  

This formulation contains various popular optimizers used in LLM training, including AdamW~\citep{adamw}, AdEMAMix~\citep{ademamix}, Signum~\citep{signum}, Mars~\citep{mars} and Muon~\citep{muon,d-muon}. 
The distinction lies in the computation of the update matrix $\mU_k$.
For instance, SGD-M uses $\mU_K = \mM_k$ where $\mM_k$ is a momentum buffer 
accumulated via Polyak momentum~\citep{polyak-momentum}, or
Nesterov meomentum~\citep{nesterov-book}, or other recursive 
variants~\citep{storm,jiang2025non}. Signum (Sign-SGD with momentum) sets $\mU_k = \sign(\mM_k)$ where $\sign$ is the entry-wise sign operator. For AdamW, the update is the 
momentum normalized by the square root of the second-moment estimates.

\begin{figure*}[tb!]
    \centering
    \includegraphics[width=0.9\textwidth]{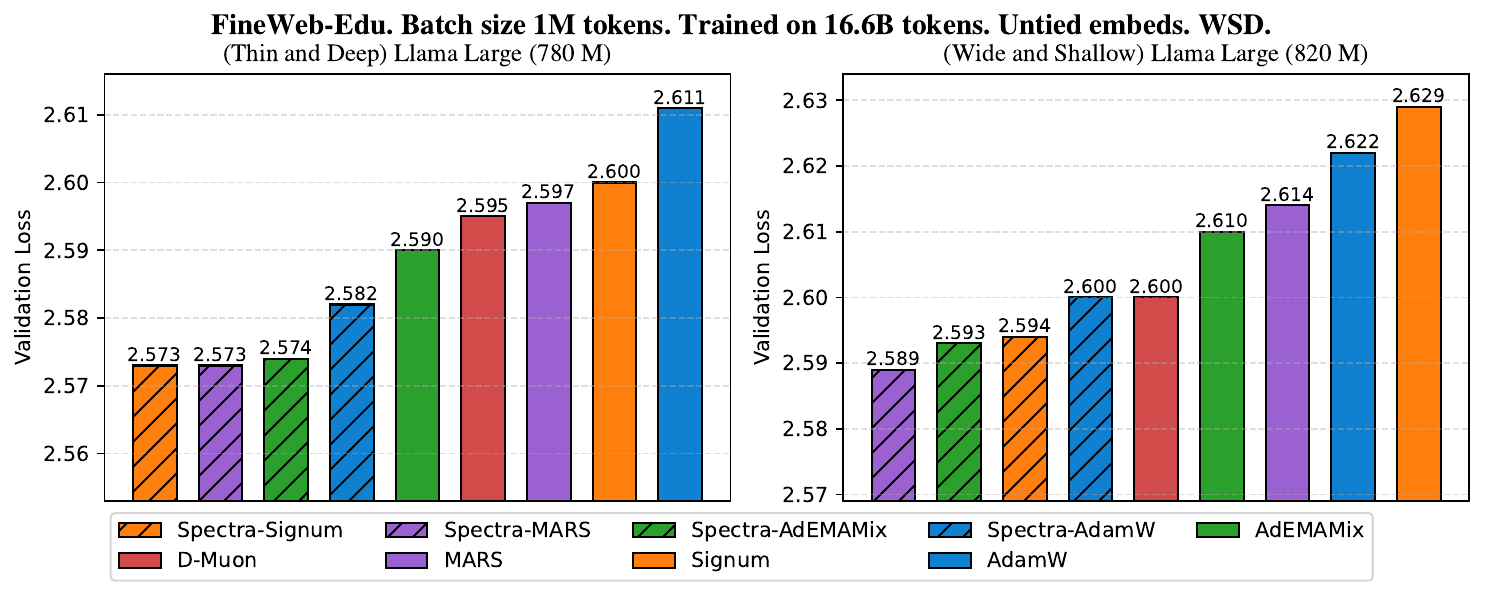}
    \vspace{-2mm}
    \caption{
    Final validation loss comparison for large Llama-style models trained for the Chinchilla optimal horizon. 
    'Thin and deep' models have smaller embedding dimensions but have more layers than 'wide and shallow' ones.
    The running time comparisons w/wo using SPECTRA
    are provided in Figure~\ref{fig:run_time_iter_820M}. 
    We use a total batchsize of 1012 and 992 for 780M and 820M model respectively with 1024 sequence length.
    The hyperparameters such as learning rate used for each method are reported in the tables in Section~\ref{sec:lLM-pretrain-appendix}.}
    \label{fig:780M-16k}
    \vskip - 0.2 in
\end{figure*}

\textbf{Phenomenon I: Excessive Spectral Norm of Update Matrices.} 
Controlling the spectral norm of weight matrices is known to be crucial for both stable training~\citep{miyato2018spectral} and 
improvement of generalization~\citep{bartlett2017spectrally}, particularly for long runs~\citep{SCG}.
Specifically, for any input vector with bounded norm, the Euclidean norm of the output after the linear transformation is controlled by 
the spectral norm of the weight matrix, effectively limiting the propagation of perturbations through the network. On the other hand, the weight norm is determinined by the accumulated updates during training. 
Using update rule~\eqref{eq:standard-update-rule} with $\eta_k \equiv \eta$, we have
for any $k \ge 1$:
\[
\|\mX_k\|_2 \le (1 - \lambda \eta)^k \|\mX_0\|_2 + \frac{1 - (1-\lambda \eta)^k}{\lambda} \max_{i < k}\{\|\mU_i\|_2\} \;.
\] 
This exposes a vulnerability: if the spectral norms of the updates $\|\mU_i\|_2$ are not controlled, the weight matrix can grow rapidly.
It has been observed that standard adaptive optimizers like AdamW can produce updates with excessive magnitudes, particularly in the early phases of training~\citep{radam} or prior to loss spikes~\citep{wortsman2023small}. 
For sign-based methods (e.g., Signum), the spectral norm of the update $\mU_k = \sign(\mM_k)$ is at least $\sqrt{\max(m,n)}$
and can be as large as $\sqrt{mn}$ (since for any $\mX \in \R^{m \times n}$, we have
$\frac{\|\sign(\mX)\|_F}{\sqrt{\min(m,n)}} \le \| \sign(\mX) \|_2 \le \|\sign(\mX)\|_F$).
Consequently, there exists risks of training instability and excessive final weight norms.
Mitigating this risk 
often requires the use of small learning rates or extended warm-up periods to prevent divergence~\citep{lamb, gilmer2021loss}, which might slow down the convergence.

\textbf{Phenomenon II: Sparse Spectral Spikes in Stochastic Gradients.}
Recent studies identify gradient and loss spikes as a predominant source of training instability~\citep{spam}. In this work, we deepen this understanding by analyzing LLM training dynamics, revealing that the singular value spectrum of raw stochastic gradients is often heavy-tailed—specifically, a few singular values (``spectral outliers'') are orders of magnitude larger than the rest. This aligns with findings characterizing Transformer gradient noise as highly non-Gaussian~\citep{simsekli2019tail, zhang2020adaptive}. By decomposing the stochastic gradient into a signal component (approximated by large-batch gradients) and a residual noise component, we trace this phenomenon primarily to the noise. We observe that the noise often contains low-rank structures—\emph{sparse spectral spikes}—whose magnitudes far exceed that of the signal. Furthermore, the subspace spanned by these spikes is typically nearly orthogonal to the principal signal directions, suggesting these high-magnitude components represent pure noise. Detailed empirical evidence is provided in Section~\ref{sec:sparse-spike-noise-empirical}.
    
\textbf{Remedy: Spectral Clipping Operator.} 
To mitigate the phenomena of excessive update norms and sparse noise spikes, we propose to apply a spectral clipping operator to the update matrices $\mU_k$ (and optionally to the raw stochastic gradients). Formally, let $\mX \in \R^{m \times n}$ be a fixed matrix with compact SVD decomposition $\mX = \mU \mS \mV^T$, where $\mU \in \R^{m \times q}$, $\mS \in \R^{q \times q}$ and $\mV \in \R^{q \times n}$ with $q \defeq \min(m,n)$. 
We define the spectral clipping operator as:
\[
	\clipSp_c(\mX) \defeq \mU \diag\bigl(\clip_c(\mS_{11}),\ldots, \clip_c(\mS_{qq})\bigr) \mV^T \;,
\]
where
$c > 0$ is the clipping threshold, and 
$\clip_c(x) \defeq \sign(x) \min(|x|, c)$ is the standard scalar clipping function. 

By construction,
this operator ensures $\|\clipSp_c(\mU)\|_2 \le c$, 
which strictly controls the spectral norm of the updates. (The update norm comparisons between AdamW and Spectra-AdamW are shown in Figure~\ref{fig:adamw_up} and~\ref{fig:spc_adamw_up}.)
In the presence of sparse spike noise, spectral clipping
can effectively remove those large noise spikes while preserving the scale and structure of the remaining signal components. (Rigorous arguments can be found in Section~\ref{sec:sparse-spike-noise-theory}.)

Another two commonly used clipping operators are coordinate-wise clipping:
$\clipCw_c(\mX)_{i,j} \defeq \clip_c(\mX_{i,j})$ and global clipping: $\clipG_c(\mX) \defeq \min(1, c/\|\mX\|_F)\mX$. 
Coordinate-wise clipping ignores correlations between entries and fails to effectively limit the spectral norm, as discussed earlier.
While global clipping can control the spectral norm,
it achieves this by simultaneously suppressing the useful signal components along with the noise. We discuss the theoretical advantages of spectral clipping over global clipping in Section~\ref{sec:sparse-spike-noise-theory}.

\section{Post-Spectral Clipping: Spectral Norm Constrained Optimization with Implicit Weight Regularization}

In this section, we formally introduce \textit{Post-Spectral Clipping} and provide its theoretical justification. We show that adding this simple operation
to the update can be mathematically equivalent to a Composite Frank-Wolfe algorithm solving a spectral-norm constrained optimization problem with weight
norm regularization. This dual nature ensures that the optimizer simultaneously enforces strict spectral norm constraints while promoting low-complexity solutions (via the regularization), thereby potentially enhancing both training stability and generalization performance.

Specifically, we propose to use the following modified update rule.
For $k = 0, 1, 2, \ldots$,
\begin{equation}
	\label{eq:update-rule}
	\mX_{k+1} = (1-\lambda \eta_k) \mX_k - \alpha \eta_k \clipSp_{c_k}(\mU_k) \;,
\end{equation}
where $\alpha > 0$ is a scaling factor. 
Compared to~\eqref{eq:standard-update-rule}, the effective update direction $\alpha \clipSp_{c_k}(\mU_k)$ has spectral norm bounded by $\alpha c_k$. 

We now discuss its connection to the composite Frank-Wolfe method~\citep{harchaoui2015conditional,yurtsever2018conditional}.
Consider the following general composite constrained optimization problem:
\begin{equation}
	\label{eq:composite-constrained-problem}
\min_{\mX \in Q}
\bigl\{
F(\mX) \defeq f(\mX) + \psi(\mX)
\bigr\} \;,
\end{equation}
where $f: \R^{m \times n} \to \R$ is the loss function (e.g. cross-entropy loss), $\psi: \R^{m \times n} \to \R \cup \{ + \infty\}$ is a proper closed convex function and 
$Q \subseteq \R^{m \times n}$ is a compact convex set. We now analyze the instantiation of~\eqref{eq:update-rule} using the standard polyak momentum update:
\begin{align}
	\label{eq:SGDM-spectral-clipping}
\mX_{k+1}
&=
(1 - \lambda \eta_k) 
\mX_k - \alpha \eta_k \clipSp_{c_k}(\mM_k)
\;,	
\end{align}
where $\mM_{k+1} = (1 - \beta_k) \mM_k + \beta_k \gg(\mX_{k+1})$, 
$\beta_k \in (0,1]$ is the momentum parameter, and $\gg(\mX_{k+1})$ is the stochastic gradient of $f$ at $\mX_{k+1}$.
In what follows, we denote the spectral norm ball with radius $D_2 > 0$ as:
\[ Q_2 \defeq \{ \mX \in \R^{m \times n} : \|\mX\|_2 \le D_2 \} \;. \] 

\begin{proposition}
	\label{proposition:frank-wolfe-reformulation}
The update rule~\eqref{eq:SGDM-spectral-clipping} can be equivalently reformulated as the following stochastic composite Frank-Wolfe method:
\begin{equation}
\label{eq:frank-wolfe}
\left\{
\begin{aligned}
\mV_{k+1} &\approx \argmin_{\mX \in Q}
\left\{
\langle \mM_k, \mX \rangle + \psi(\mX)
\right\} \\
\mX_{k+1} &= (1 - \gamma_k)\,\mX_k + \gamma_k\,\mV_{k+1}
\end{aligned}
\right. \;,
\end{equation}
by choosing 
$\gamma_k = \lambda \eta_k$,
$c_k \equiv \frac{\lambda D_2}{\alpha}$
and $\psi(\mX) = \frac{\lambda}{2 \alpha} \|\mX\|_F^2$, 
provided that $Q = Q_2$ and each subproblem
for computing $\mV_{k+1}$ is solved exactly. 
\end{proposition}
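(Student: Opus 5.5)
We solve the linear minimization subproblem in closed form and then match the resulting Frank-Wolfe step with the update rule~\eqref{eq:SGDM-spectral-clipping}. With $Q = Q_2$ and $\psi(\mX) = \frac{\lambda}{2\alpha}\|\mX\|_F^2$, completing the square turns the subproblem into
\[
\argmin_{\|\mX\|_2 \le D_2} \Bigl\{ \langle \mM_k, \mX\rangle + \tfrac{\lambda}{2\alpha}\|\mX\|_F^2 \Bigr\}
= \argmin_{\|\mX\|_2 \le D_2} \tfrac{\lambda}{2\alpha}\Bigl\| \mX + \tfrac{\alpha}{\lambda}\mM_k \Bigr\|_F^2 .
\]
So the subproblem is the Frobenius projection of $-\frac{\alpha}{\lambda}\mM_k$ onto the spectral-norm ball of radius $D_2$. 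Since the objective is strongly convex and $Q_2$ is convex and compact, this minimizer exists and is unique.

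\textbf{Key step: the projection is spectral clipping.} For any $\mY$ with SVD $\mY = \mU\mS\mV^T$ we show
\[
\proj_{Q_2}(\mY) = \mU \diag\bigl(\min(\mS_{ii}, D_2)\bigr)\mV^T = \clipSp_{D_2}(\mY).
\]
We first bound the objective below. By von Neumann's trace inequality, $\langle \mX, \mY\rangle \le \sum_i \sigma_i(\mX)\sigma_i(\mY)$, and hence
\[
\|\mX-\mY\|_F^2 \ge \sum_i \bigl(\sigma_i(\mX)-\sigma_i(\mY)\bigr)^2 .
\]
Under the constraint $\sigma_i(\mX)\le D_2$, each term on the right is minimized by taking $\sigma_i(\mX) = \min(\sigma_i(\mY), D_2)$. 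Choosing $\mX = \mU\diag(\min(\mS_{ii},D_2))\mV^T$, which shares singular vectors with $\mY$, attains this lower bound and is feasible. Uniqueness then follows from strong convexity.

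Alternatively, one can verify the variational inequality $\langle \mY - \mX^*, \mZ - \mX^*\rangle \le 0$ for all $\mZ\in Q_2$. Here $\mY - \mX^* = \mU\diag\bigl((\mS_{ii}-D_2)_+\bigr)\mV^T$, and the inequality reduces to $\langle \mU_i\mV_i^T, \mZ\rangle \le D_2$ whenever $\|\mZ\|_2 \le D_2$.

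This is the only step with real content, namely the unitary-invariance argument, and I expect it to be the main obstacle. Everything else is bookkeeping.

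\textbf{Matching the updates.} Positive homogeneity of clipping gives $\clipSp_{D_2}(t\mY) = t\,\clipSp_{D_2/t}(\mY)$ for $t>0$. Since singular vectors of $-\mM_k$ are those of $\mM_k$ up to sign, we obtain
\[
\mV_{k+1} = -\tfrac{\alpha}{\lambda}\clipSp_{\lambda D_2/\alpha}(\mM_k) = -\tfrac{\alpha}{\lambda}\clipSp_{c_k}(\mM_k), \qquad c_k = \tfrac{\lambda D_2}{\alpha}.
\]
Substituting this and $\gamma_k = \lambda\eta_k$ into the Frank-Wolfe step gives
\[
\mX_{k+1} = (1-\lambda\eta_k)\mX_k - \lambda\eta_k\cdot\tfrac{\alpha}{\lambda}\clipSp_{c_k}(\mM_k),
\]
which is exactly~\eqref{eq:SGDM-spectral-clipping}. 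The momentum recursion is identical in both formulations. Each step above is reversible, so the two methods are equivalent.

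Finally, we note that $\gamma_k\in(0,1]$ requires $\lambda\eta_k\le 1$. This is implicit in~\eqref{eq:standard-update-rule}, and we assume $\lambda > 0$, without which $c_k$ and $\psi$ would be undefined.
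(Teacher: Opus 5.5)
Your proof is correct and follows the paper's argument: complete the square, recognize the subproblem as the Frobenius projection of $-\frac{\alpha}{\lambda}\mM_k$ onto $Q_2$, identify that projection with $\clipSp_{D_2}$ (the paper's spectral-projection lemma), and use homogeneity to rescale the threshold to $c_k = \frac{\lambda D_2}{\alpha}$ before substituting $\gamma_k = \lambda\eta_k$. The only difference is how the projection formula is justified. The paper reduces to the diagonal case by unitary invariance and calls the remaining step ``clear,'' whereas your von Neumann (Mirsky) lower bound proves that step explicitly.
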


From the equivalent formulation, we can immediately derive standard convergence guarantees for~\eqref{eq:SGDM-spectral-clipping}.

\begin{assumption}
\label{assump:standard-noise}
At any point $\mX \in \R^{m \times n}$, the stochastic gradient oracle returns  an independent estimator $\gg(\mX)$ such that $\E[\gg(\mX)] = \nabla f(\mX)$ and
$\E[\|\gg(\mX) - \nabla f(\mX)\|_F^2] \le \sigma^2$.
\end{assumption}

\begin{assumption}
\label{assump:L-smooth}
There exists $L_F > 0$ such that for any $\mX, \mY \in \R^{m \times n}$, $\|\nabla f(\mX) - \nabla f(\mY)\|_F \le L_F \|\mX - \mY\|_F$.
\end{assumption}

\begin{theorem}
	\label{theorem:SGDM-spectral-clipping-Frank-Wolfe}
Let SGD-M with spectral clipping~\eqref{eq:SGDM-spectral-clipping}
be applied to problem~\eqref{eq:composite-constrained-problem} 
with $Q = Q_2$ and 
$\psi(\mX) \defeq \frac{b}{2}\|\mX\|_F^2$
under convexity of $f$ and Assumption~\ref{assump:standard-noise}.
Suppose we use a batch size of $B$ for computing the stochastic gradient $\gg(\mX_{k})$.
By choosing $c_k \equiv b D_2$, 
$\lambda \eta_k = \frac{2}{k+2}$, $\alpha = \frac{\lambda}{b}$, and 
$\beta_k \equiv 1$,
we have for any $K \ge 1$:
\[
\E[F(\mX_K) - F^\star] \le \frac{2C_f}{K+1} 
+ \frac{D_F \sigma}{\sqrt{B}} \;,
\]
where $ 
C_f \defeq \sup_{\mX,\mS \in Q_2, \gamma \in [0,1], 
\mY = \mX + \gamma (\mS - \mX)} \{ \frac{2}{\gamma^2}(f(\mY) - f(\mX) - \langle \nabla f(\mX), \mY - \mX \rangle) \}
$ and $D_F \defeq \sup_{\mX,\mY \in Q}\{\|\mX - \mY\|_F\}$.
If Assumption~\ref{assump:L-smooth} additionaly holds,
then by choosing $\beta_k = \frac{1}{(k+1)^{2/3}}$ and $\mM_0 = \gg(\mX_0)$,
we have for any $K \ge 1$: 
\[
\E[F(\mX_K) - F^\star] \le \frac{2C_f}{K+1} 
+ \frac{12.4 L_FD_F^2 + 6.2 D_F \sigma / \sqrt{B}}{K^{1/3}} \;.
\]
\end{theorem}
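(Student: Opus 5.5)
By Proposition~\ref{proposition:frank-wolfe-reformulation}, with $c_k \equiv bD_2$, $\alpha=\lambda/b$ and $\gamma_k=\lambda\eta_k = \frac{2}{k+2}$, the iteration~\eqref{eq:SGDM-spectral-clipping} is exactly the stochastic composite Frank--Wolfe method~\eqref{eq:frank-wolfe}. Here $Q=Q_2$, $\psi=\frac b2\|\cdot\|_F^2$ (indeed $\frac{\lambda}{2\alpha}=\frac b2$), and the linear minimization oracle is solved exactly with the estimator $\mM_k$ in place of $\nabla f(\mX_k)$. So it suffices to prove the convergence rate of composite FW with an inexact gradient. The plan is the standard one-step analysis.

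\textbf{One-step inequality.} Write $\mV_{k+1}\in\argmin_{\mX\in Q_2}\{\langle \mM_k,\mX\rangle+\psi(\mX)\}$ and $\mY=\mX_k+\gamma_k(\mV_{k+1}-\mX_k)=\mX_{k+1}$.
\begin{itemize}
\item By the definition of $C_f$, $f(\mX_{k+1})\le f(\mX_k)+\gamma_k\langle\nabla f(\mX_k),\mV_{k+1}-\mX_k\rangle+\frac{\gamma_k^2}{2}C_f$.
\item By convexity of $\psi$, $\psi(\mX_{k+1})\le(1-\gamma_k)\psi(\mX_k)+\gamma_k\psi(\mV_{k+1})$.
\item Replace $\nabla f(\mX_k)$ by $\mM_k$, paying the error term $\gamma_k\langle\nabla f(\mX_k)-\mM_k,\mV_{k+1}-\mX^\star\rangle$ plus a similar term with $\mX^\star-\mX_k$.
\item Use the optimality of $\mV_{k+1}$ against $\mX^\star$: $\langle\mM_k,\mV_{k+1}\rangle+\psi(\mV_{k+1})\le\langle\mM_k,\mX^\star\rangle+\psi(\mX^\star)$.
\item Use convexity of $f$: $\langle\nabla f(\mX_k),\mX^\star-\mX_k\rangle\le f^\star$-type bounds.
\end{itemize}
Together these give
\[
h_{k+1}\le(1-\gamma_k)h_k+\frac{\gamma_k^2}{2}C_f+\gamma_k\langle\nabla f(\mX_k)-\mM_k,\mV_{k+1}-\mX_k\rangle,
\]
where $h_k=F(\mX_k)-F^\star$. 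By Cauchy--Schwarz and $\mV_{k+1},\mX_k\in Q_2$ (all iterates stay in $Q_2$ as convex combinations, assuming $\mX_0\in Q_2$), the last term is at most $\gamma_k D_F\|\nabla f(\mX_k)-\mM_k\|_F$. Taking expectations and using Jensen, it suffices to bound $\epsilon_k\defeq\E\|\mM_k-\nabla f(\mX_k)\|_F$ (aligning the index convention of $\mM_k$ with $\mX_k$ as in~\eqref{eq:frank-wolfe}).

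\textbf{Case $\beta_k\equiv1$.} Here $\mM_k$ is a minibatch gradient, so $\epsilon_k\le\sigma/\sqrt B$. Unrolling with $\gamma_k=\frac2{k+2}$ gives the standard induction $h_K\le\frac{2C_f}{K+1}+D_F\sigma/\sqrt B$: the error term is a weighted average of constants with weights summing to at most one, and $\gamma_0=1$ removes $h_0$.

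\textbf{Case $\beta_k=(k+1)^{-2/3}$.} Decompose
\[
\mM_{k}-\nabla f(\mX_{k})=(1-\beta)\bigl(\mM_{k-1}-\nabla f(\mX_{k-1})\bigr)+(1-\beta)\bigl(\nabla f(\mX_{k-1})-\nabla f(\mX_k)\bigr)+\beta(\text{noise}).
\]
Using $L_F$-smoothness and $\|\mX_k-\mX_{k-1}\|_F\le\gamma_{k-1}D_F$, and the fact that the noise terms are martingale differences, one obtains a recursion for $e_k=\E\|\cdot\|_F^2$ of the form
\[
e_k\le(1-\beta_k)e_{k-1}+\frac{L_F^2D_F^2\gamma_{k-1}^2}{\beta_k}+\beta_k^2\frac{\sigma^2}{B}.
\]
Since $\gamma_k\sim 2/k$ and $\beta_k\sim k^{-2/3}$, induction (as in Mokhtari et al.'s stochastic FW) yields $e_k\lesssim (L_FD_F+\sigma/\sqrt B)^2/(k+1)^{2/3}$, i.e.\ $\epsilon_k\le\frac{c_1L_FD_F+c_2\sigma/\sqrt B}{(k+1)^{1/3}}$. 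Plugging this into the one-step recursion gives
\[
\sum_k\frac{\prod(1-\gamma)\,\gamma_k}{(k+1)^{1/3}}=\cO(K^{-1/3}),
\]
and hence the stated bound. I expect the \textbf{main obstacle} to be carrying out this variance recursion carefully enough to get the explicit constants $12.4$ and $6.2$; the base case uses $\mM_0=\gg(\mX_0)$.
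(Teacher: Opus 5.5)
Your proposal is correct, and its skeleton is the paper's: the reduction to composite Frank--Wolfe via Proposition~\ref{proposition:frank-wolfe-reformulation}, the one-step inequality with error term bounded by $\gamma_k D_F\|\mM_k-\nabla f(\mX_k)\|_F$, and, for $\beta_k\equiv 1$, the unrolling with $\gamma_k=\frac{2}{k+2}$. The paper's Theorem~\ref{theorem:frank-wolfe-convex} writes this with weights $a_k=k$, $A_k=\frac{k(k+1)}{2}$, which is the same computation; it also allows an inexact subproblem solution, which you do not need. There is one slip. After comparing $\mV_{k+1}$ with $\mX^\star$ via optimality, the leftover term is $\gamma_k\langle\nabla f(\mX_k)-\mM_k,\mV_{k+1}-\mX^\star\rangle$, not $\mV_{k+1}-\mX_k$. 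The Cauchy--Schwarz bound is unaffected, since both points lie in $Q_2$.

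Where you genuinely differ is in controlling the momentum error.

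\emph{Your route.} You run the mean-square recursion familiar from stochastic Frank--Wolfe with momentum. Young's inequality handles the drift and costs a factor $1/\beta$, martingale orthogonality removes the cross term, and you take a square root at the end.

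\emph{The paper's route.} It unrolls the linear recursion for $\mE_k$ exactly (Lemma~\ref{lemma:momentum-gradient-error-standard-noise}). The drift terms are bounded pathwise by $L_FD_F\gamma_{i-1}$ and summed with the triangle inequality. Only the accumulated noise is handled by Jensen and orthogonality. This reduces everything to the two scalar sequences $S_k$ and $B_k$ of Lemma~\ref{lemma:momentum-coefficients-bounds}, with $S_k\le 3k^{-1/3}$ and $B_k\le k^{-2/3}$ by short inductions. Hence $\E\|\mE_k\|_F\le(3L_FD_F+\sigma/\sqrt B)\,k^{-1/3}$ follows with no Young parameter to tune, and with the sharper contraction $(1-\beta)^2$ on the noise.

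\emph{What each buys.} Your route gives stronger mean-square control at the price of a slightly messier induction. The induction does close, with constants of the same order: roughly $2.2L_FD_F$ and $1.3\sigma/\sqrt B$. The noise constant exceeds $1$ precisely because your contraction factor is $1-\beta$ rather than $(1-\beta)^2$. Combine this with the sharp form $\sum_{i=1}^{K-1}(i+1)i^{-1/3}\le 0.6K^{5/3}+1.5K^{2/3}-2.1$ from the proof of Lemma~\ref{lemma:SimpleRecurrence1} and with $A_K=\frac{K(K+1)}{2}$. The resulting coefficients land well under the stated $12.4$ and $6.2$, so the constants you flagged are not a real obstacle.
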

The curvature constant $C_f$ is the standard quantity in Frank-Wolfe analysis~\citep{pmlr-v28-jaggi13}, which measures the maximum non-linearity within the set $Q$. In the presence of noise, the rate of $1/K^{1/3}$ matches the standard stochastic Frank-Wolfe with momentum on a single function~\citep{FW-momentum}.  

\begin{remark}
In Theorem~\ref{theorem:SGDM-spectral-clipping-Frank-Wolfe}, we first fix the problem constraints $(D_2, b)$ and then set 
the parameters $(c,\lambda,\alpha)$.
In practice, one can choose $c$, $\alpha$ and $\lambda$ 
to determine the size of $D_2$ and the level of regularization. Then the resulting algorithm is implicitly solving problem~\eqref{eq:composite-constrained-problem} with $\psi(\mX) \defeq \frac{\lambda}{2\alpha} \|\mX\|_F^2$  
and $Q \defeq \{\mX : \|\mX\|_2 \le \frac{\alpha c}{\lambda} \}$. 
\end{remark}

\textbf{Connection to Muon.}
Muon~\citep{muon} applies the orthogonalization operator to the momentum by normalizing all the singular values to $1$. Clearly, $\frac{1}{c}\clipSp_c(\mX) = \orth(\mX) \defeq 
\mU_X \mV_X^T$ for any $c \le \sigma_{\min}(\mX)$.
Thus, if we let $\alpha \to \infty$ and choose $c = \frac{1}{\alpha}$, then method~\eqref{eq:SGDM-spectral-clipping} recovers Muon~\citep{muon} as a special case, which solves the constrained problem without regularization ($b = 0$).

\subsection{Extensions: A Family of Spectral Optimizers}
Previously, we have focused on the quadratic regularizer $\psi(\mX) = \frac{b}{2}\|\mX\|_F^2$ which leads to a simple spectral clipping operation. However, our framework is flexible: Different choices of the regularizer yield different algorithmic update rules. 
This allows us to characterize a broad family of \textit{spectral optimizers}. Consider the general subproblem:
\begin{equation}
	\label{eq:general-subproblem}
	\mX^\star = \argmin_{\mX \in Q_2} \bigl\{ 
	\langle \mG, \mX \rangle + \psi(\mX) \bigr\} \;,
\end{equation}

\textbf{Rotational Invariant Regularizers.}
Consider the case where $\psi$ is separable and acts solely on the singular values of $\mX$. In this setting, the optimal update takes the form $\mX^\star = - \mU_G \mS^\star \mV_G^T$, where $\mU_G$ and $\mV_G$ are the left and right singular vectors of  $\mG$, and $\mS^\star$ is a diagonal matrix with entries $\mS^\star_{ii} = \sigma_i^\star \ge 0$. (See Section~\ref{sec:other-regularizers-appendix} for full derivations for general unitarily invariant $\psi$.) This recovers several known operators as special cases.

1) Nuclear Norm: $\psi(\mX)=b\sum_{i=1}^q \sigma_i (\mX)$.
    This acts as a soft-thresholding operator: $\sigma_i^\star = \begin{cases}
	0 & \text{if } \sigma_i(\mG) \le b \\
	D_2 & \text{if } \sigma_i(\mG) > b
	\end{cases}$. 
    
2)Schatten $p$-norm ($p > 1$): $\psi(\mX) = \frac{b}{p} \|\mX\|_p^p 
	= \frac{b}{p} \sum_{i=1}^q \sigma_i(\mX)^p$ and $\sigma_i^\star = \min\bigl( ( \frac{\sigma_i(\mG)}{b} )^{\frac{1}{p-1}}, D_2 \bigr)$.
    
3) Unnormalized Matrix Entropy~\citep{Matrix-Exponentiated-Gradient}: 
	$\psi(\mX)
	=b\sum_{i=1}^q \sigma_i(\mX) \log(\sigma_i(\mX)) - \sigma_i(\mX)$.
	Let $\psi_i(x) = b x\log(x) - bx$ in~\eqref{eq:one-dimensional-subproblem}. We have:
	$\sigma_i^\star = \min\bigl( \exp(\frac{\sigma_i(\mG)}{b}), D_2 \bigr)$.

\textbf{$\ell_{\infty}$-norm Regularization and its Connection to Spectrally Clipped Signum.}
We next discuss the choice of $\psi(\mX) = \frac{b}{2} \|\mX\|_{\infty}^2 
\defeq \frac{b}{2} \max_{i,j}\{ |\mX_{i,j}| \}^2$. 
For the original constrained problem~\eqref{eq:general-subproblem},
there exists an optimal dual variable $\mu^\star$ such that: 
$\mX^\star = \argmin_{\mX}\{ \langle \mG, \mX \rangle + \frac{b}{2} \|\mX\|_{\infty}^2 + \mu^\star (\|\mX\|_2 - D_2) \}$. 
Using the first-order optimality condition, we have: 
$
\mathbf{0} \in \mG + b\|\mX^\star\|_{\infty}\partial(\|\mX^\star\|_{\infty})
+ \mu^\star \partial(\|\mX^\star\|_2) \;.
$
If $\mu^\star = 0$, then $\mX^\star$ is equal to the unconstrained solution:
\[
\tilde{\mX}^{\star} \defeq -\frac{\|\mG\|_1}{b} \sign(\mG)
\in
\argmin_{\mX \in \R^{m \times n}}
\bigl\{
\langle \mG, \mX \rangle + \frac{b}{2} \|\mX\|_{\infty}^2 
\bigr\}
\;,
\]
where $\|\mX\|_1 = \sum_{i,j} |\mX_{i,j}|$ denote the entrywise $\ell_1$ -norm.
Otherwise,
it does not admit a simple closed-form solution.
We can try to solve it by using standard convex optimization methods such as projected gradient descent.
On the other hand, a natural approximation is to project the unconstrained solution onto the feasible set:
\[ 
\tilde{\mU} \defeq \proj_{Q_2}(\tilde{\mX}^\star)
= -\frac{\|\mG\|_1}{b} \clipSp_{\frac{b D_2}{\|\mG\|_1}}\bigl( \sign(\mG) \bigr)
\;.
\]
When $\|\sign(\mG)\|_2 \le \frac{bD_2}{\|\mG\|_1}$, then $\mX^\star = \tilde{\mU}$. If we substitute this approximate solution into our update rule~\eqref{eq:update-rule} with $\mG$ chosen as the momentum, we recover \textit{Signum with post-spectral clipping}. This suggests that spectrally clipped Signum can be viewed as an approximate Frank-Wolfe step for minimizing a spectrally constrained loss regularized by the $\ell_{\infty}$-norm (limiting the magnitude of each coordinate).

\section{Robustness to Sparse Spectral Spikes} 
In this section, we first provide empirical evidence of 
the existence of sparse spectral spikes in LLM training. 
Then we give some theoretical insights on why spectral clipping can help mitigate the adverse effects of such noise structures during optimization.

\subsection{Empirical Observations}
\label{sec:sparse-spike-noise-empirical}
We first record the singular value distributions of stochastic gradients and update directions in a 124M parameter LLaMA-style transformer trained with AdamW/Signum. We observe from Figure~\ref{fig:adamw_sg} that raw stochastic gradients exhibit heavy-tailed spectrum with high condition numbers across layers. While the AdamW/Signum updates show a more uniform spectral distribution, they fail to impose a hard ceiling 
(Figure~\ref{fig:adamw_up} and~\ref{fig:signum_up}).
We next investigate the noise structure by decomposing gradients into signal and noise components ($\mathbf{N} = \mathbf{g} - \mathbf{G}$). We observe that stochastic noise manifests as ``spectral spikes" whose top singular values frequently exceed the signal norm. Moreover, the dominant noise spikes are nearly orthogonal to the principal directions of the signal (Figure~\ref{fig:noise_adamw_1}). The detailed setup and results can be found in Section~\ref{sec:spike-noise-exp-appendix}.

These findings provide a empirical justification for spectral clipping. Since high-magnitude noise components are geometrically separated from the signal, spectral clipping effectively truncates the noise spikes without significantly distorting the underlying signal direction.

\subsection{Theoretical Insights}
\label{sec:sparse-spike-noise-theory}
Let $\mG \in \R^{m \times n}$ be the `true' signal.
Suppose we receive a stochastic signal $\gg = \mG + \mN$ where $\mN = \ell \mU_N \mV_N^T$ is a low-rank noise with zero-mean such that $\ell \ggg \|\mG\|_2$ (spike). 
We show that the spectrally clipped matrix largely preserves the signal with controlled variance.

\begin{definition}
\label{df:noise-rank-r}
Let $\mU \in \R^{d \times r}$ be a zero-mean random matrix with
orthonormal column, i.e., $\E[\mU] = \0$ and $\mU^T \mU = \mI_r$. 
We say that $\mU$ has
bounded anisotropy with parameter $\kappa > 0$, if 
$ \E[\mU \mU^T] \preceq \frac{\kappa r}{d} \mI_d \;,$
where $\mI_d$ is the identity matrix in $\R^{d \times d}$.
\end{definition}

\begin{lemma}
	\label{lemma:sp-clip-large-ell-rank-r}
Let $\mG \in \R^{m \times n}$ be a fixed matrix with $q \defeq \min(m,n) \ggg r$. 
Let $\gg = \mG + \mN$ 
where $\mN = \ell \mU_N \mV_N^T$, $\ell > 0$, and $\mU_N \in \R^{m \times r}$ and $\mV_N \in \R^{n \times r}$ satisfy Definition~\ref{df:noise-rank-r} with parameter $\kappa \le \frac{q}{25 r^2}$. Let $\tilde{\gg} = \clipSp_c(\gg)$. 
Suppose that $\ell \ge 9 \sqrt{r} \|\mG\|_2$. Then for any $c \ge \|\mG\|_2$, it holds that:
$
\E_{\mN}[ \langle \mG, \tilde{\gg} \rangle ] \ge \frac{1}{3} \|\mG\|_F^2$ and
$\E_{\mN}[\|\tilde{\gg}\|_F^2] \le r \min(c, \ell + \|\mG\|_2)^2 + \|\mG\|_F^2$.
\end{lemma}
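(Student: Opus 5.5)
The variance bound is deterministic, so I will prove it first from Weyl's inequalities for singular values. Since $\mN = \ell \mU_N\mV_N^T$ has rank $r$ with all nonzero singular values equal to $\ell$, we have $\sigma_i(\gg) \le \sigma_i(\mN) + \|\mG\|_2 = \ell + \|\mG\|_2$ for $i \le r$. For $j \ge 1$, we have $\sigma_{r+j}(\gg) \le \sigma_{r+1}(\mN) + \sigma_j(\mG) = \sigma_j(\mG)$. Because $\|\tilde\gg\|_F^2 = \sum_i \min(c,\sigma_i(\gg))^2$, splitting the sum at $i=r$ gives $\|\tilde\gg\|_F^2 \le r\min(c,\ell+\|\mG\|_2)^2 + \sum_j \sigma_j(\mG)^2$. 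This holds pointwise, so it also holds in expectation. The same Weyl bound gives $\sigma_{r+1}(\gg) \le \|\mG\|_2 \le c$, so clipping only acts on the top-$r$ singular triplets $(\sigma_i,\uu_i,\mathbf{v}_i)$ of $\gg$. I will use this structural fact repeatedly. The reverse inequality $\sigma_r(\gg) \ge \ell - \|\mG\|_2$ also gives a spectral gap of order $\ell$ between the top $r$ and the rest.

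For the inner-product bound, I will write $\mP_U, \mP_V$ for the projections onto the top-$r$ left and right singular subspaces of $\gg$ and decompose
\[
\tilde\gg = (\mI-\mP_U)\,\gg\,(\mI-\mP_V) + \textstyle\sum_{i\le r}\min(\sigma_i,c)\,\uu_i\mathbf{v}_i^T .
\]
The first piece equals $(\mI-\mP_U)(\mG+\mN)(\mI-\mP_V)$. By Wedin's $\sin\Theta$ theorem with gap $\ge \ell-\|\mG\|_2 \ge 8\sqrt r\|\mG\|_2$, the principal angles $s$ between $\mathrm{span}(\mP_U)$ and $\mathrm{span}(\mU_N)$ (and likewise for the right subspaces) satisfy $\|\sin\Theta\|_2 \le s \le \|\mG\|_2/(\ell-\|\mG\|_2) \le 1/(8\sqrt r)$.

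Its contribution is then $\langle \mG,(\mI-\mP_U)\mG(\mI-\mP_V)\rangle = \|(\mI-\mP_U)\mG(\mI-\mP_V)\|_F^2 \ge \|\mG\|_F^2 - \|\mP_U\mG\|_F^2 - \|\mG\mP_V\|_F^2$, plus the $\mN$-residual term. For $\|\mP_U\mG\|_F^2$, I will bound it by $2\|\mU_N^T\mG\|_F^2 + 2rs^2\|\mG\|_2^2$. Bounded anisotropy gives $\E\|\mU_N^T\mG\|_F^2 = \trace(\mG^T\E[\mU_N\mU_N^T]\mG) \le \frac{\kappa r}{m}\|\mG\|_F^2$, and similarly on the right side. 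With $\kappa \le q/(25r^2)$ each such term is at most $\frac{1}{25r}\|\mG\|_F^2$, and $rs^2\|\mG\|_2^2 \le \|\mG\|_F^2/64$. Summing these, the first piece contributes at least roughly $(1-\tfrac{4}{25}-\tfrac{1}{16})\|\mG\|_F^2$ in expectation. The $\mN$-residual $(\mI-\mP_U)\mN(\mI-\mP_V)$ has norm $\le \ell s^2$ and rank $\le r$, so its inner product with $\mG$ is at most about $r\ell s^2\|\mG\|_2 \lesssim \|\mG\|_2^2/8$.

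The main obstacle is the second piece, $\sum_{i\le r}\min(\sigma_i,c)\,\uu_i^T\mG\mathbf{v}_i$, because $\min(\sigma_i,c)$ can be as large as $\ell$, which amplifies the perturbation. Write $\uu_i = \mU_N a_i + e_i$ and $\mathbf{v}_i = \mV_N b_i + f_i$ with $\|e_i\|,\|f_i\| \le s$. The leading term is then controlled by $\|\mU_N^T\mG\mV_N\|$, whose expectation is small by anisotropy. However, the cross terms scale like $\ell s\|\mG\|_2 \approx \|\mG\|_2^2$, and naive bounds would give $r\|\mG\|_2^2$, which is not dominated by $\|\mG\|_F^2$.

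I will first try to avoid bounding these cross terms in absolute value and use exact cancellation instead. The first-order perturbation of the top singular triplets is $\sigma_i e_i \approx (\mI-\mU_N\mU_N^T)\mG\mV_N b_i$, so $\sigma_i\, e_i^T\mG\mathbf{v}_i \approx \|(\mI-\mP)\mG\mV_N b_i\|^2 \ge 0$. The rescaling by $\min(\sigma_i,c)/\sigma_i \in (0,1]$ preserves this sign, so these terms help rather than hurt, and only the second-order remainder (of size $\lesssim rs\|\mG\|_2^2 \le \|\mG\|_2^2/8$) needs absolute control. If this sign argument becomes messy, I will use the fallback that $\E[\mN]=\0$ handles the unclipped case $c \ge \ell+\|\mG\|_2$ directly. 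In that case $\E\langle\mG,\gg\rangle = \|\mG\|_F^2$, and I will interpolate using the $(1-c/\sigma_i)_+$ weights. Collecting the constants then gives $\E\langle\mG,\tilde\gg\rangle \ge \tfrac13\|\mG\|_F^2$.
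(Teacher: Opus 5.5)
There is a genuine gap, and it lies in the term you treat as easy. In the top-$r$ piece, let $\mC_U=\mU_N^T\mU_g$ and $\mC_V=\mV_N^T\mV_g$ (with columns $a_i$, $b_i$). The leading part is then $\langle \mU_N^T\mG\mV_N,\ \mC_U\diag(\min(\sigma_i,c))\,\mC_V^T\rangle$, and it is \emph{not} controlled by anisotropy. An absolute-value bound gives only
$$\sqrt r\,\min(c,\ell+\|\mG\|_2)\,\E\|\mU_N^T\mG\mV_N\|_F\le \frac{\min(c,\ell+\|\mG\|_2)}{25\sqrt r}\,\|\mG\|_F .$$
For $c\gtrsim\ell$ this grows linearly in $\ell$, and $\ell$ is unbounded in the lemma. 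The bound is not merely loose. When all top-$r$ values are clipped, the term is essentially $c\,\trace(\mU_N^T\mG\mV_N)=c\,\langle\mG,\mU_N\mV_N^T\rangle$, whose realizations genuinely scale with $c$; only its \emph{mean} is small. By contrast, the first-order cross terms you worry about are harmless. Each carries a factor $\|\mU_N^T\mG\|_F$ or $\|\mG\mV_N\|_F$, whose expectation is at most $\sqrt{\kappa r/q}\,\|\mG\|_F$, multiplied by $\min(c,\ell+\|\mG\|_2)\,s\le\frac54\|\mG\|_2$. So they are $O(r\sqrt{\kappa/q})\|\mG\|_2\|\mG\|_F$, which is how the paper bounds them. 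Your sign identity is in fact exact, $\sigma_i e_i=(\mI-\mU_N\mU_N^T)\mG\mathbf{v}_i$, but it does not touch the leading term.

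Closing the argument requires two ingredients that are missing from your plan.

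\textbf{Cancellation.} Write $\min(\sigma_i,c)=\min(c,\ell)+\delta_i$ with $|\delta_i|\le\|\mG\|_2$; the $\delta$-part is then anisotropy-small. Subtract $\min(c,\ell)\langle\mG,\mU_N\mV_N^T\rangle$, which has zero mean because $\E[\mU_N]=\0$ and $\mU_N,\mV_N$ are independent.

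\textbf{Rotation alignment.} You need $\|\mI_r-\mC_U\mC_V^T\|_F\le\sqrt r\,s^2+\|\mC_U-\mC_V\|_F$ to be $O(\|\mG\|_F/\ell+\sqrt r\,s^2)$. Then the remainder $\min(c,\ell)\,\E\bigl[\|\mU_N^T\mG\mV_N\|_F\,\|\mI_r-\mC_U\mC_V^T\|_F\bigr]$ is $O(\kappa r/q)\|\mG\|_F^2$. This does not follow from Wedin. Every nonzero singular value of $\mN$ equals $\ell$, so Wedin pins down only the two subspaces, and the bases inside them may rotate arbitrarily. What must be shown is that the left and right rotations agree. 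The paper proves exactly this in Lemma~\ref{lemma:rotation-alignment}. It projects $\gg\mV_g=\mU_g\Sigma_g$ onto $\mU_N$ and $\gg^T\mU_g=\mV_g\Sigma_g$ onto $\mV_N$, obtaining $\ell\mC_V=\mC_U\Sigma_g-\mU_N^T\mG\mV_g$ and $\ell\mC_U=\mC_V\Sigma_g-\mV_N^T\mG^T\mU_g$. Hence $\|\mC_U-\mC_V\|_F\le(\|\mU_N^T\mG\|_F+\|\mG\mV_N\|_F)/(2\ell-\|\mG\|_2)$.

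Your fallback $\min(\sigma_i,c)=\sigma_i-(\sigma_i-c)_+$ is exactly the paper's starting point $\tilde{\gg}=\gg-\mU_g\mD\mV_g^T$. However, it leads to the same term $(\ell-c)\langle\mG,\mU_g\mV_g^T\rangle$ and needs the same two ingredients, so ``interpolate'' does not finish the proof.

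Two smaller points. Your bound on $\E_{\mN}[\|\tilde{\gg}\|_F^2]$ is correct and cleaner than the paper's: $\sigma_{r+j}(\gg)\le\sigma_j(\mG)$ gives it pointwise, rather than via $\E\sum_{i\le r}\sigma_i^2(\gg)\ge r\ell^2$. Separately, your estimates $r\ell s^2\|\mG\|_2$ and $rs\|\mG\|_2^2$ are of order $\frac{\sqrt r}{8}\|\mG\|_2^2$, not $\frac18\|\mG\|_2^2$. That is not dominated by $\|\mG\|_F^2$ when $\mG$ has rank one and $r$ is large. Pair with $\|\mG\|_F$ instead, using $|\langle\mG,\mA\rangle|\le\sqrt r\,\|\mA\|_2\|\mG\|_F$ for rank-$r$ $\mA$, to get $\sqrt r\,\ell s^2\|\mG\|_F\le\frac{9}{64}\|\mG\|_2\|\mG\|_F$.
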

Intuitively, we can treat $\mG$ as a perturbation to the large spike noise $\mN$. By the standard matrix perturbation theory, the top singular values of $\gg$ are 
close to those of $\mN$ (dominated by $\ell$), and the remaining singular values are close to those of $\mG$. Thus, we have $\clipSp_c(\mG + \ell \mU_N \mV_N^T) \approx \mG + c \mU_N \mV_N^T$. The proof can be found in
Section~\ref{sec-proof-lemma-sp-clip-large-ell}. 

\textbf{Comparisons.} 
Compared with the raw stochastic signal $\gg$ which satisfies 
$\E[\|\gg\|_F^2] = \|\mG\|_F^2 + r \ell^2$. Spectral clipping effectively reduces the variance term from $r \ell^2$ to $r c^2$. 
We next show that global clipping cannot preserve the signal as effectively as spectral clipping.

\begin{lemma}
	\label{lemma:global-clip-large-ell-rank-r}
Consider the same setting as in Lemma~\ref{lemma:sp-clip-large-ell-rank-r}. Let $\tilde{\gg} = \clipG_c(\gg)$.
Suppose that $\ell \ge 3 \|\mG\|_F$. 
If $c \le \sqrt{r} (\ell - \|\mG\|_2)$, then it holds that:
$
\frac{4c}{5 \sqrt{r} \ell} \|\mG\|_F^2
\le \E_{\mN}[\langle \mG, \tilde{\gg} \rangle] \le 
\frac{c}{\sqrt{r} \ell}\|\mG\|_F^2 
$ and
$
\E_{\mN}[\|\tilde{\gg}\|_F^2] = c^2  
$.
If $\sqrt{r}(\ell - \|\mG\|_2) \le c \le \ell + \|\mG\|_2$, then we have: 
$
\E_{\mN}[\langle \mG, \tilde{\gg} \rangle] \le \|\mG\|_F^2
$ 
and
$ 
\frac{4}{9}r\ell^2 \le \E_{\mN}[\|\tilde{\gg}\|_F^2] \le c^2
$.
Finally, if $c \ge \ell + \|\mG\|_2$, then it holds that:
$
\E_{\mN}[\langle \mG, \tilde{\gg} \rangle] = \|\mG\|_F^2
$ and
$ 
\E_{\mN}[\|\tilde{\gg}\|_F^2] = \|\mG\|_F^2 + r\ell^2 
$.
\end{lemma}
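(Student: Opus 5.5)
Global clipping is a scalar rescaling $\clipG_c(\gg) = \min(1, c/\|\gg\|_F)\,\gg$, so everything reduces to controlling the random scalar $\|\gg\|_F$. I would first compute
\[
\|\gg\|_F^2 = \|\mG\|_F^2 + 2\ell\langle \mG, \mU_N\mV_N^T\rangle + r\ell^2,
\]
using $\|\mU_N\mV_N^T\|_F^2 = \operatorname{tr}(\mV_N\mU_N^T\mU_N\mV_N^T) = r$. The cross term is bounded by $|\langle \mG, \mU_N\mV_N^T\rangle| \le \|\mG\|_2\,\|\mU_N\mV_N^T\|_* = r\|\mG\|_2$, and also by $\sqrt{r}\,\|\mG\|_F$ via Cauchy--Schwarz. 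I would aim for the deterministic sandwich
\[
\sqrt{r}(\ell - \|\mG\|_2) \le \|\gg\|_F \le \ell + \|\mG\|_2 .
\]
This needs care. The natural bound is $\|\gg\|_F \le \|\mG\|_F + \sqrt{r}\,\ell$, so the upper endpoint $\ell+\|\mG\|_2$ as written looks wrong for $r>1$. I expect the intended regime boundaries are really $\sqrt{r}\ell \pm$ (perturbation), and I would either prove them with whichever of the two cross-term bounds makes the stated thresholds valid, or adjust them. Pinning down these thresholds is the main obstacle. For the lower endpoint, $\|\gg\|_F \ge \|\gg\|_2 \ge$ the top singular value of $\mN$ minus $\|\mG\|_2$, i.e.\ $\ell - \|\mG\|_2$, combined with Weyl's inequality on all $r$ spike singular values, gives $\|\gg\|_F^2 \ge r(\ell-\|\mG\|_2)^2$.

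\textbf{Regime 1} ($c \le \sqrt{r}(\ell-\|\mG\|_2)$). By the lower sandwich, clipping is always active, so $\tilde\gg = c\,\gg/\|\gg\|_F$. Hence $\|\tilde\gg\|_F^2 = c^2$ exactly, and
\[
\langle \mG, \tilde\gg\rangle = c\,\frac{\|\mG\|_F^2 + \ell\langle \mG, \mU_N\mV_N^T\rangle}{\|\gg\|_F}.
\]
The difficulty is that numerator and denominator are correlated, so I cannot simply use $\E[\mU_N\mV_N^T]=0$. Writing $a = \langle \mG, \mU_N\mV_N^T\rangle$, I would expand
\[
\frac{1}{\|\gg\|_F} = \frac{1}{\sqrt{r}\,\ell}\Bigl(1 + \frac{2a}{r\ell} + \frac{\|\mG\|_F^2}{r\ell^2}\Bigr)^{-1/2}.
\]
Since $\ell \ge 3\|\mG\|_F$, the correction is small, and the product $(\|\mG\|_F^2 + \ell a)/\|\gg\|_F$ has expectation $\|\mG\|_F^2/(\sqrt{r}\ell)$ up to terms of order $\E[a^2]/(r^{3/2}\ell)$. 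For the sign of the leading correction, note that $(\|\mG\|_F^2+\ell a)/\sqrt{\cdot}$ is increasing but concave in $a$. Using $|a| \le \sqrt{r}\,\|\mG\|_F \le \sqrt{r}\,\ell/3$ as a crude bound, a second-order Taylor bound gives the $4/5$ lower constant and, via concavity (Jensen on the $a$-dependence together with $\E a=0$), the upper constant $1$. One may also need $\E[a^2] \le \frac{\kappa r}{q}\cdot\frac{\kappa r}{q}\,r\|\mG\|_F^2$-type bounds from bounded anisotropy; I would try to avoid these by relying on the crude $|a|$ bound.

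\textbf{Regime 2} (intermediate $c$). Here $\tilde\gg = s\,\gg$ with $s = \min(1, c/\|\gg\|_F) \in (0,1]$. Then $\|\tilde\gg\|_F = \min(\|\gg\|_F, c) \le c$, and $\min(\|\gg\|_F, c) \ge \sqrt{r}(\ell - \|\mG\|_2) \ge \tfrac{2}{3}\sqrt{r}\,\ell$, which gives the $\tfrac49 r\ell^2$ bound since $\|\mG\|_2 \le \|\mG\|_F \le \ell/3$. For the inner product, $\langle \mG, \tilde\gg\rangle = s\bigl(\|\mG\|_F^2 + \ell a\bigr)$. I would bound this by $\|\mG\|_F^2$ in expectation by arguing that $s$ is a decreasing function of $a$ (through $\|\gg\|_F$), so $\E[s\,\ell a] \le \E[s]\,\E[\ell a] = 0$ by Chebyshev's association inequality, while $s\|\mG\|_F^2 \le \|\mG\|_F^2$.

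\textbf{Regime 3} ($c \ge \ell + \|\mG\|_2$, or more precisely $c \ge \sup\|\gg\|_F$). Clipping is inactive, so $\tilde\gg = \gg$. Then $\E\langle \mG,\gg\rangle = \|\mG\|_F^2$ and $\E\|\gg\|_F^2 = \|\mG\|_F^2 + r\ell^2$ by zero mean. This step still depends on the upper sandwich, which is where the threshold issue noted above resurfaces.
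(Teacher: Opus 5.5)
The gap is the lower bound $\frac{4c}{5\sqrt r\ell}\|\mG\|_F^2$ in the first regime. You plan to get it from a second-order expansion using only $\E[a]=0$ and the crude bound $|a|\le\sqrt r\,\|\mG\|_F$, without bounded anisotropy, and this cannot work. Put $G=\|\mG\|_F$, $Z=\ell a$, $h(Z)=c(G^2+Z)/\sqrt{G^2+2Z+r\ell^2}$ and $B=\sqrt r\,\ell G$. A direct computation gives $h(B)=cG$ and $h(-B)=-cG$. Since $h$ is concave on $[-B,B]$ (which needs only $G\le\sqrt r\ell$), every mean-zero law on $[-B,B]$ gives $\E[h(Z)]\ge\frac12\bigl(h(B)+h(-B)\bigr)=0$, with equality for the two-point law at $\pm B$. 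So the crude information yields nothing better than $0$, and this value is attained. Take $r=1$, $\mG=G\,\mathbf{u}\mathbf{v}^T$, $\mU_N=\epsilon_1\mathbf{u}$, $\mV_N=\epsilon_2\mathbf{v}$ with independent random signs. Then $\langle\mG,\tilde{\gg}\rangle=\pm cG$, with mean $0$. Equivalently, in your Taylor bound the remainder estimate $\frac12\max_{|\xi|\le B}|h''(\xi)|\cdot r\ell^2G^2$ coming from the crude bound is at least $h(0)$. What rules this example out is exactly the hypothesis $\kappa\le q/(25r^2)$ of Definition~\ref{df:noise-rank-r}: here $\E[\mU_N\mU_N^T]=\mathbf{u}\mathbf{u}^T$ forces $\kappa\ge m$.

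So the second moment of $a$ has to come from anisotropy. Write $a=\langle\mU_N^T\mG\mV_N,\mI_r\rangle$ and use independence of $\mU_N,\mV_N$ with Lemma~\ref{lemma:trace-inequality}, as in the proof of Lemma~\ref{lemma:bounded-anisotropy-rank-r}. This gives $\E[a^2]\le r\,\E\|\mU_N^T\mG\mV_N\|_F^2\le\frac{\kappa^2r^3}{mn}G^2\le\frac{G^2}{625\,r}$. Bounding $\max_{|\xi|\le B}|h''(\xi)|$ via $G\le\ell/3$, the remainder becomes a small fraction of $h(0)$, and $\E[h(Z)]\ge\frac{9}{10}h(0)\ge\frac{4c}{5\sqrt r\ell}G^2$. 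This is the paper's route; the crude bound on $|a|$ is used only to keep $Z$ inside the concavity interval.

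Everything else you propose coincides with the paper: always-active clipping via Weyl, Jensen for the upper bound, and Chebyshev's association inequality plus the $\frac{4}{9}r\ell^2$ bound in the middle regime.

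Your worry about the last regime is justified, and the paper's proof has the same flaw. It simply asserts that clipping is never active for $c\ge\ell+\|\mG\|_2$. But for $r\ge2$ and $\ell>\frac{\sqrt r+1}{\sqrt r-1}\|\mG\|_2$, one has $\|\gg\|_F\ge\sqrt r(\ell-\|\mG\|_2)>\ell+\|\mG\|_2$. So at $c=\ell+\|\mG\|_2$ clipping is always active and both claimed identities fail. They do hold under a corrected threshold such as $c\ge\sqrt r\,\ell+\|\mG\|_F\ge\sup\|\gg\|_F$, as you suggest. Also, the middle-regime argument never uses $c\le\ell+\|\mG\|_2$, so it covers all $c\ge\sqrt r(\ell-\|\mG\|_2)$.
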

The previous lemma shows that for any $c > 0$, global clipping either have small signal preservation, $\E[\langle \mG, \tilde{\gg} \rangle] \simeq \frac{c}{\ell}\|\mG\|_F^2$ which decreases linearly as $\ell$ increases, or have large variance scaling as $\ell^2$.
This is because global clipping scales down the entire matrix uniformly, including the useful signal components.

From the previous lemmas, we can immediately derive convergence guarantees for SGD with spectral clipping in the presence of sparse spectral spikes. Consider the minimization problem:
\begin{equation}
\min_{\mX \in \R^{m \times n}} f(\mX) \;.
\label{eq:problem}
\end{equation}
We assume that the objective function $f$ is $L_{\|\cdot\|_F}$-smooth and has bounded gradient. 

\begin{assumption}
\label{assump:bounded-gradient}	
There exists $M, B > 0$ such that for any $\mX \in \R^{m \times n}$, we have:
\begin{equation}
	\| \nabla f(\mX) \|_2 \le M, 
	\quad 
	\| \nabla f(\mX) \|_F \le B \;.
	\label{eq:bounded-gradient}
\end{equation}
\end{assumption}

\begin{assumption}
\label{assump:sparse-noise}
At any point $\mX \in \R^{m \times n}$, the stochastic gradient oracle returns  $\gg(\mX) = \nabla f(\mX) + N(\mX)$,
where $N(\mX) = \ell \mU_{X} \mV_{X}^T$, $\ell > 0$, and $\mU_X \in \R^{m \times r}$ and $\mV_X \in \R^{n \times r}$ are independent random matrices that satisfy Definition~\ref{df:noise-rank-r} with parameter $r^2 \kappa \lll q
\defeq \min(m,n)$.
\end{assumption}
Assumption~\ref{assump:sparse-noise} implies that
for any $\mX \in \R^{m \times n}$.
the stochastic gradient is unbiased: $\E[\gg(\mX)] = \nabla f(\mX)$, and its variance is bounded:
$\E[ \| \gg(\mX) - \nabla f(\mX) \|_F^2 ] = r\ell^2$.
Therefore, to reach $\E[\| \nabla f(\hat{\mX}_K) \|_F]^2 \le \epsilon^2$, Vanilla SGD requires at most 
$
K = \cO\bigl( \frac{L_F F^0}{\epsilon^2} + r\ell^2 \frac{L_F F^0}{\epsilon^4} \bigr) 
$ iterations, where $F^0 = f(\mX_0) - f^*$ and $\hat{\mX}_K$ is uniformly sampled from $\{\mX_0, \ldots, \mX_{K-1}\}$. We next compare the convergence rates of the following two methods: 
\begin{align}
\label{alg:SGD-spectral-clipping}
\mX_{k+1} = \mX_k - \eta \clipSp_c(\gg (\mX_k)) \;,
\\
\label{alg:SGD-global-clipping}
\mX_{k+1} = \mX_k - \eta \clipG_c(\gg (\mX_k)) \;,
\end{align}

\begin{theorem}
Let SGD with spectral clipping~\eqref{alg:SGD-spectral-clipping} be applied to problem~\eqref{eq:problem} under Assumptions~\ref{assump:L-smooth},~\ref{assump:bounded-gradient}, and~\ref{assump:sparse-noise} with $25\kappa r^2 \le q \defeq \min(m, n)$. 
For any $\ell > 0$,
by choosing $\eta = 1 / (\frac{6 r L_F\min(10\sqrt{r}M, 10 \ell / 9)^2}{\epsilon^2} + 3L_F)$ and $c = 10 \sqrt{r} M$, to reach $\E[\| \nabla f(\hat{\mX}_K) \|_F^2] \le \epsilon^2$, we need at most 
\begin{align*}
K 
&\le  
\frac{36 L_F F^0}{\epsilon^2} + 72\min(10\sqrt{r}M, 10\ell / 9)^2\frac{r L_F F^0}{\epsilon^4}
\\
&\simeq \frac{L_F F^0}{\epsilon^2} + \min(\sqrt{r} M, \ell)^2 \frac{r L_F F^0}{\epsilon^4} 
\end{align*}
iterations, where $F^0 \defeq f(\mX_0) - f^*$ and $\hat{\mX}_K$ is uniformly sampled from $\{\mX_0, \ldots, \mX_{K-1}\}$.
\label{theorem:SGD-spectral-clipping-rank-r}
\end{theorem}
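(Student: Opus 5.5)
The plan is the standard biased-SGD descent argument, fed with the two conclusions of Lemma~\ref{lemma:sp-clip-large-ell-rank-r}. Write $\mG_k \defeq \nabla f(\mX_k)$ and $\tilde{\gg}_k \defeq \clipSp_c(\gg(\mX_k))$. By Assumption~\ref{assump:L-smooth},
\[
f(\mX_{k+1}) \le f(\mX_k) - \eta \langle \mG_k, \tilde{\gg}_k\rangle + \frac{L_F\eta^2}{2}\|\tilde{\gg}_k\|_F^2 .
\]
Take the conditional expectation over the noise at step $k$. Then we need two things: a lower bound on $\E\langle \mG_k,\tilde{\gg}_k\rangle$ of the form $\frac13\|\mG_k\|_F^2$, and an upper bound on $\E\|\tilde{\gg}_k\|_F^2$ of the form $r\min(c,\ell+\|\mG_k\|_2)^2+\|\mG_k\|_F^2$.

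The main obstacle is that Lemma~\ref{lemma:sp-clip-large-ell-rank-r} only holds when $\ell \ge 9\sqrt r\|\mG\|_2$. Since $\|\mG_k\|_2\le M$, it suffices that $\ell\ge 9\sqrt r M$. So I split into two regimes.

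\emph{Case 1: $\ell \ge 9\sqrt r M$.} Here the lemma applies, because $q\ge 25\kappa r^2$ and $c=10\sqrt r M\ge M\ge\|\mG_k\|_2$. It gives the inner-product bound $\frac13\|\mG_k\|_F^2$ and the second-moment bound $r\min(c,\ell+M)^2+\|\mG_k\|_F^2$. Since $\ell+M\le \frac{10}{9}\ell$ in this regime, the second moment is at most $r\min(10\sqrt rM,10\ell/9)^2+\|\mG_k\|_F^2$.

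\emph{Case 2: $\ell<9\sqrt r M$.} Now $\|\gg\|_2\le \ell+M<10\sqrt rM=c$, so clipping is inactive and $\tilde{\gg}_k=\gg(\mX_k)$. The estimator is then unbiased, so $\E\langle\mG_k,\tilde{\gg}_k\rangle=\|\mG_k\|_F^2\ge\frac13\|\mG_k\|_F^2$. Its second moment is $\|\mG_k\|_F^2+r\ell^2$, which is again at most $r\min(10\sqrt rM,10\ell/9)^2+\|\mG_k\|_F^2$.

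So in both regimes, with $\Sigma\defeq r\min(10\sqrt rM,10\ell/9)^2$, we have $\E\langle\mG_k,\tilde{\gg}_k\rangle\ge\frac13\|\mG_k\|_F^2$ and $\E\|\tilde{\gg}_k\|_F^2\le\Sigma+\|\mG_k\|_F^2$.

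Plugging these in gives
\[
\E f(\mX_{k+1})\le \E f(\mX_k)-\Bigl(\frac{\eta}{3}-\frac{L_F\eta^2}{2}\Bigr)\E\|\mG_k\|_F^2+\frac{L_F\eta^2}{2}\Sigma .
\]
The step size satisfies $\eta\le 1/(3L_F)$, so $\frac{L_F\eta^2}{2}\le\frac{\eta}{6}$ and the coefficient on the gradient term is at least $\eta/6$. Telescoping over $k=0,\dots,K-1$ and dividing by $K$ yields
\[
\E\|\nabla f(\hat{\mX}_K)\|_F^2\le \frac{6F^0}{\eta K}+3L_F\eta\Sigma .
\]

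With $\eta=1/(6L_F\Sigma/\epsilon^2+3L_F)$, the second term is at most $\epsilon^2/2$. The first term is at most $\epsilon^2/2$ once $K\ge 12F^0/(\eta\epsilon^2)=12F^0(6L_F\Sigma/\epsilon^2+3L_F)/\epsilon^2$. This equals $\frac{36L_FF^0}{\epsilon^2}+72\Sigma\frac{L_FF^0}{\epsilon^4}$, matching the stated bound.
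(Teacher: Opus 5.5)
Your proposal is correct and follows the paper's proof essentially step for step: the same smoothness descent inequality, the same case split at $\ell = 9\sqrt{r}M$, the same absorption via $\eta \le 1/(3L_F)$, and the same choice of $\eta$ to balance two $\epsilon^2/2$ terms. In the split, you invoke Lemma~\ref{lemma:sp-clip-large-ell-rank-r} in the spike regime and otherwise use that clipping is inactive because $\|\gg(\mX_k)\|_2 \le \ell + M \le 10\sqrt{r}M = c$, exactly as the paper does, and you are slightly more explicit than the paper in justifying $r\min(c,\ell+M)^2 \le r\min(10\sqrt{r}M, 10\ell/9)^2$ in the first case.
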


\begin{theorem}
Consider SGD with global clipping~\eqref{alg:SGD-global-clipping} under the same setting and notations as in Theorem~\ref{theorem:SGD-spectral-clipping-rank-r}.
For any $\ell \ge 3 B$, 
by choosing $c \le \frac{2}{3}\sqrt{r} \ell$ and $\eta = \sqrt{\frac{2 F^0}{L_F K c^2}}$, to reach $\E[\| \nabla f(\hat{\mX}_K) \|_F^2] \le \epsilon^2$, we need at most 
$
K \le \frac{4 r\ell^2 L_F F^0}{\epsilon^4} 
$ iterations.
\label{theorem:SGD-global-clipping}
\end{theorem}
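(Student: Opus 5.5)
We prove Theorem~\ref{theorem:SGD-global-clipping} with the standard one-step descent argument for $L_F$-smooth functions, plugging in the bounds for global clipping from Lemma~\ref{lemma:global-clip-large-ell-rank-r}. Write $\mG_k \defeq \nabla f(\mX_k)$ and $\tilde{\gg}_k \defeq \clipG_c(\gg(\mX_k))$. Assumption~\ref{assump:L-smooth} gives
\[
f(\mX_{k+1}) \le f(\mX_k) - \eta \langle \mG_k, \tilde{\gg}_k\rangle + \frac{L_F \eta^2}{2}\|\tilde{\gg}_k\|_F^2 .
\]
We then take the conditional expectation over the noise $N(\mX_k)$. Conditionally on $\mX_k$, the setting is exactly that of Lemma~\ref{lemma:global-clip-large-ell-rank-r} with signal $\mG = \mG_k$.

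Next we check the lemma's hypotheses. By Assumption~\ref{assump:bounded-gradient}, $\|\mG_k\|_F \le B$, so $\ell \ge 3B \ge 3\|\mG_k\|_F$. For the threshold, $\|\mG_k\|_2 \le \|\mG_k\|_F \le \ell/3$ gives $\sqrt{r}(\ell - \|\mG_k\|_2) \ge \frac{2}{3}\sqrt{r}\ell \ge c$. So we are in the first regime of Lemma~\ref{lemma:global-clip-large-ell-rank-r}, which gives
\[
\E\langle \mG_k,\tilde{\gg}_k\rangle \ge \frac{4c}{5\sqrt{r}\ell}\|\mG_k\|_F^2,
\qquad
\E\|\tilde{\gg}_k\|_F^2 = c^2 .
\]
(The case $\mG_k=\0$ is trivial.) Substituting these, taking full expectations and telescoping over $k=0,\dots,K-1$ with $f(\mX_K)\ge f^\star$ gives
\[
\frac{1}{K}\sum_{k<K}\E\|\mG_k\|_F^2 \le \frac{5\sqrt{r}\ell}{4c}\Bigl(\frac{F^0}{\eta K} + \frac{L_F\eta c^2}{2}\Bigr).
\]

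We then plug in $\eta = \sqrt{2F^0/(L_F K c^2)}$. This balances the two terms, each equal to $c\sqrt{L_F F^0/(2K)}$, so the bracket is $c\sqrt{2L_F F^0/K}$. The factor $c$ cancels, leaving
\[
\E\|\nabla f(\hat{\mX}_K)\|_F^2 \le \frac{5}{4}\sqrt{r}\ell\sqrt{\frac{2L_FF^0}{K}} .
\]
Requiring this to be at most $\epsilon^2$ gives $K \ge \frac{25}{8}\,\frac{r\ell^2 L_F F^0}{\epsilon^4}$, and $\frac{25}{8}\le 4$. Hence $K = \lceil 4 r\ell^2 L_F F^0/\epsilon^4\rceil$ iterations suffice. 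If $\eta$ is instead set with a $K$ matched to this target, we use the same arithmetic.

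The only delicate point is verifying that Lemma~\ref{lemma:global-clip-large-ell-rank-r} applies uniformly along the trajectory: its first regime needs $c \le \sqrt{r}(\ell-\|\mG_k\|_2)$ and $\ell \ge 3\|\mG_k\|_F$ at every iterate. This is exactly why the statement assumes $\ell\ge 3B$ and $c\le\frac{2}{3}\sqrt{r}\ell$; the bounded-gradient assumption makes both conditions hold for all $k$ simultaneously. We also need the noise at step $k$ to be fresh given $\mX_k$ (Assumption~\ref{assump:sparse-noise}), so that the lemma can be used conditionally. Everything else is routine non-convex SGD bookkeeping.
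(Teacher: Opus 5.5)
Your proposal is correct and follows essentially the same route as the paper's proof: the smoothness descent inequality, then the first regime of Lemma~\ref{lemma:global-clip-large-ell-rank-r}, then telescoping, then the balancing choice of $\eta$, which gives the constant $\frac{25}{8} \le 4$. The first regime is justified by $\|\nabla f(\mX_k)\|_2 \le \|\nabla f(\mX_k)\|_F \le B \le \ell/3$. Your explicit check that $\ell \ge 3\|\nabla f(\mX_k)\|_F$ holds at every iterate, and your note that the noise must be fresh given $\mX_k$, make precise what the paper's proof uses implicitly.
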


From Theorem~\ref{theorem:SGD-spectral-clipping-rank-r}, we see that SGD with spectral clipping achieves an improvement of dependence from $r \ell^2$ to 
$r \min(\sqrt{r} M, \ell)^2$. Meanwhile, it is robust to any spike noise level $\ell$ by choosing the clipping threshold $c \simeq \sqrt{r} M$. We discuss the performance of SGD-M with both pre and post-spectral clipping under Assumption~\ref{assump:sparse-noise} in Section~\ref{sec-CFW-Spike-Noise}. 

\section{\emph{SPECTRA}: SPEctral Clipping for LLM TRaining Acceleration}
In this section, we formally present \emph{SPECTRA}, a new broad class of optimizers that leverage spectral clipping to enhance the training of LLMs. 

\subsection{Soft Spectral Clipping}
The computation of the exact spectral clipping is costly as it requires full SVD decomposition. 
To overcome this bottleneck, we propose \emph{Soft Spectral Clipping} (SSC), a new simple approximation of the spectral clipping operator, which only requires square matrix–matrix multiplications, making it  efficient on GPU hardware.
Let $\clip_c(x) \defeq \sign(x) \min(|x|, c)$ be the scalar clipping function.
Consider the smooth approximation of $\clip_c(x)$:
$h_c(x) = \frac{x}{\sqrt{1+x^2/c^2}}$. 
As illustrated in Figure~\ref{fig:clip-comparison},
$h_c(x)$ behaves linearly for small $|x| < c$ and saturates smoothly to $\sign(x) c$ for large $|x| > c$.
For any $x$, it satisfies
$|h_c(x)| \le |\clip_c(x)|$. The error bound can be found in Lemma~\ref{lemm:error-hc}. Similarly, for any matrix $\mX \in \R^{m \times n}$ (assuming $m \le n$ w.l.o.g.), applying $h_c$ to the singular values of $\mX$ yields the SSC operator (which is easy to be verified):
\[
	H_c(\mX) \defeq (\mI + \mX\mX^T/c^2)^{-\frac{1}{2}}\mX 
    \;.
\]
To compute the term $(\mI + \mX\mX^T/c^2)^{-\frac{1}{2}}$ efficiently, we utilize the classic Newton-Schulz iteration (Algorithm~\ref{Alg:NS}), a well-established method for finding the inverse square root of a positive definite matrix~\citep{NS1,NS2,song2022fast}.
Since $\mI + \mX\mX^T/c^2$ has eigenvalues strictly bounded below by 1, it is well-conditioned, ensuring the fast convergence of the Newton method.
The full procedure is detailed in Algorithm~\ref{Alg:Soft-SpClip}. As shown in Figure~\ref{fig:ssc-vs-orth} and Table~\ref{tab:timing_combined}, the computational overhead is moderate and its fraction of total per-step time shrinks  dramatically as scale grows
—making it comparable to other scalable operations like those in Muon~\citep{muon}.

\begin{algorithm}[tb]
    \caption{Soft Spectral Clipping $\approx\clipSp_c(\mX)$}
    \label{Alg:Soft-SpClip}
    \begin{minipage}{\linewidth}
        \renewcommand{\thempfootnote}{\arabic{mpfootnote}}
        
        \begin{algorithmic}[1]
            \STATE {\bfseries Input:} $\mX \in \R^{m \times n}$, $c > 0$, $N_s>0$.
            \STATE {\bfseries Signature:} $\operatorname{SSC}_c(\mX, N_s)$.
            
            \STATE Set $s_{\max}^2 = \min(\|\mX\mX^T\|_F, \max_{i}\sum_{j=1}^n|(\mX\mX^T)_{ij}|)$.\footnote{$s_{\max}^2$ is an upper bound on the largest singular value of $\mX\mX^T$. By the Gershgorin circle theorem, $\sigma_1(\mX \mX^T)$ is bounded by the largest $1$-norm of any of its rows.}
            
            \IF{$s_{\max} \le c$}
                \STATE \textbf{Return} $\mX$
            \ELSE
                \STATE Set $\alpha = 1 + (s_{\max}/c)^2$.
                
                \STATE Compute $\mA = \operatorname{MISR}(\mI+\mX\mX^T/c^2, \alpha , N_s)$.\footnote{MISR can be found in Algorithm~\ref{Alg:NS}.}
                
                \STATE {\bfseries Return:} $\mA \mX$.
            \ENDIF
        \end{algorithmic}
    \end{minipage}
\end{algorithm}

\begin{algorithm}[tb]
    \caption{\emph{SPECTRA}}
	\label{Alg:SPECTRA}
	\begin{algorithmic}[1]
		\STATE {\bfseries Input:} 
        Base optimizer: $\cB\cO$, 
        learning rate schedule $\cS_{lr}$, clipping  schedule $\cS_{c}$, 
        $\text{Pre-SPC} \in \{\text{True}, 
        \text{False}\}$, $c_{\text{pre}} > 0$,
        $c, \eta, \lambda, \alpha, N_s > 0$, 
        $\mX_0 \in \R^{m \times n}$.
        \FOR{$k=0,1,2\ldots,K-1$}
		\STATE
        Compute stochastic gradient $\gg(\mX_k)$.
		\IF{$\text{Pre-SPC} = \text{True}$}
        \STATE
        $\tilde{\gg}_k = \operatorname{SSC}_{c_{\text{pre}}}(\gg(\mX_k), N_s)$.
        \ELSE 
        \STATE 
        $\tilde{\gg}_k = \gg(\mX_k)$.
        \ENDIF
		\STATE
		Compute $c_k = \cS_c(c,k)$ and 
        $\eta_k = \cS_{lr}(\eta,k)$.
		\STATE
		Compute $\mU_k$ using $\tilde{\gg}_k$ according to $\cB\cO$.
        \STATE
        $\mX_{k+1} = (1-\lambda \eta_k) \mX_k - \alpha \eta_k \operatorname{SSC}_{c_k}(\mU_k, N_s)$.
		\ENDFOR
	\end{algorithmic}
\end{algorithm}

\subsection{Algorithm Description}
\label{sec:choice-parameters}
We provide the complete pseudocode for \emph{SPECTRA} in Algorithm~\ref{Alg:SPECTRA}. The framework is designed to be optimizer-agnostic, accepting any base optimizer
that follows update rule~\eqref{eq:standard-update-rule}. While 
we discuss $\mX$ mostly as a matrix, it naturally extends to 1D parameters. The key components are the same as discussed before: 1) Post-Spectral Clipping applied to the update and 2) An optional pre-processing step for raw gradients, which is effective for removing 'sparse spike' noise before it corrupts the momentum buffer states, typically useful with small batchsize.

\begin{figure}
    \centering
    \includegraphics[width=1.0\linewidth]{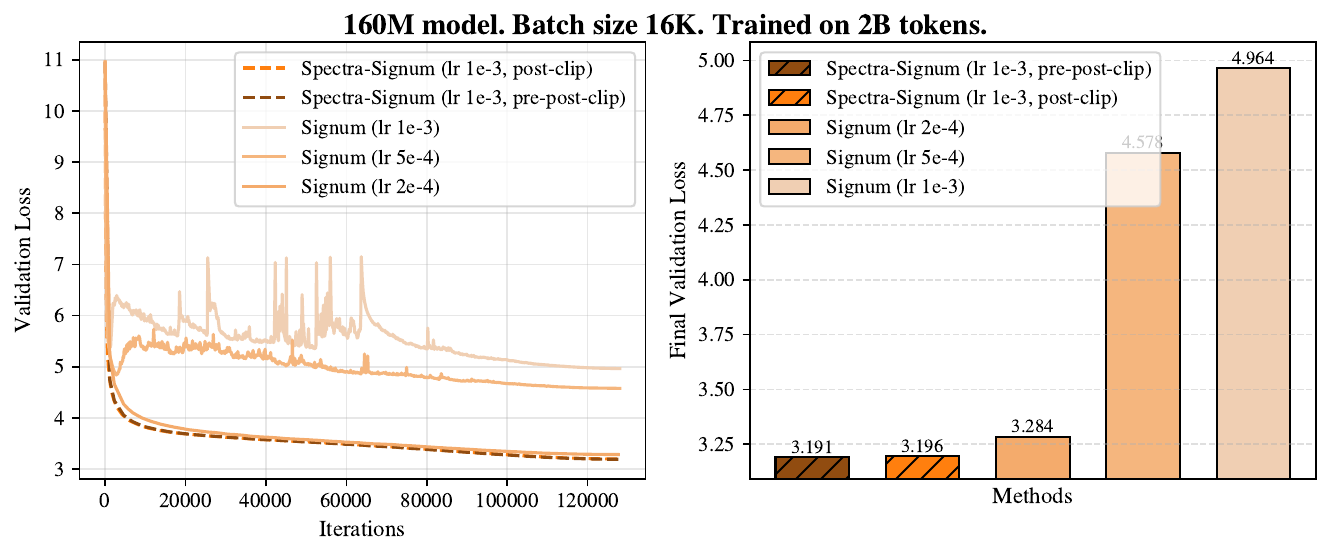}
    \caption{Validation loss comparison under small batch size training. Spectra-Signum maintains stability at the large learning rate, whereas Signum exhibits training instability (Signum typically prefers smaller LR than Adam-ish methods~\citep{semenov2025benchmarking}). 
    Spectra-Signum with both pre and post clipping almost overlap with Spectra-Signum with post clipping.}
    \label{fig:signum_ablation_loss}
    \vskip - 0.2 in
\end{figure}

\textbf{Recommended Hyperparameter Configurations.}

1. \textbf{Scaling Factor $\alpha$}.
To ensure consistent optimization behavior across layers with varying dimensions, we align our spectral constraint with the RMS norm, defined as $\|\mA\|_{\rms \to \rms} = \sup_{\|z\|_{\rms} = 1}\|\mA z\|_{\rms}
= \sqrt{\frac{n}{m}}\|\mA\|_2$, for $\mA \in \R^{m \times n}$, where $\|x\|_{\rms} = \frac{1}{\sqrt{d}}\|x\|_2$ for $x \in \R^n$. 
A good practical is to allow $D_{\rms} \propto \max(1, \sqrt{n/m})$ to have conservative capacity scaling~\citep{muon,SCG}. If we target a constraint radius $D_{\rms}$, the equivalent spectral radius is $D_2 = \sqrt{\frac{m}{n}} D_{\rms}$. 
According to Theorem~\ref{theorem:SGDM-spectral-clipping-Frank-Wolfe}, we get $\alpha = \frac{\lambda}{b} = \frac{\lambda D_2}{c}$. By choosing $D_2 = \max(\sqrt{\frac{m}{n}},1)\frac{c}{\lambda}$, we obtain 
$D_{\rms} = \max(\sqrt{\frac{n}{m}},1)\frac{c}{\lambda}$ and $\alpha = \max(\sqrt{\frac{m}{n}},1)$.

2. \textbf{$c$, $\lambda$ and $N_s$}. 
With $\alpha$ fixed, the ratio $\frac{c}{\lambda}$ controls the radius of the spectral norm ball ($D_2 \propto \frac{c}{\lambda}$) and $\lambda$ determines 
the regularization level $b = \lambda / \alpha$. A common and robust choice is $\lambda \approx 0.1$ and $c \approx 10$. Finally, we find that $N_s = 10$ provides a consistent performance for all the experiments.

3. \textbf{Dynamic Clipping Schedules.}
During the initial learning rate warm-up phase ($k < K_{\text{warm}}$), we recommend maintaining a constant effective update capacity. Since the update magnitude is bounded $c_k \eta_k$, simply using a fixed small $c$ with a small $\eta_k$ would overly restrict the training. Instead, we scale inversely $c_k = \frac{c \eta}{\eta_k}$. For $k \ge K_{\text{warm}}$, we set $c_k \equiv c$ by default across all the experiments.

\section{Experiments}

\subsection{Synthetic Experiments.} 
We validate our theory on a synthetic matrix logistic regression task. Using the update rule~\eqref{eq:SGDM-spectral-clipping} with hyperparameters from Theorem~\ref{theorem:SGDM-spectral-clipping-Frank-Wolfe}, Figure~\ref{fig:syn-FW1} and~\ref{fig:syn-FW1 stochastic} confirm that our method correctly minimizes the composite objective $F(\mX)$ within the spectral norm ball, where appropriate regularization improves generalization (details in Section~\ref{sec:syn-exp-weight-reg}).

Next, we test robustness against rank-$r$ spectral spike noise $\mathbf{N} = \ell \mathbf{U}\mathbf{V}^\top$. As shown in Figure~\ref{fig:syn_noise_1}, spectral clipping maintains fast convergence regardless of the noise scale $\ell$ using fixed hyperparameters. In contrast, vanilla and globally clipped SGD require significantly reduced learning rates to prevent divergence. These benefits also extend to momentum-based updates (Figure~\ref{fig:syn_noise_2}; see Section~\ref{sec:syn-exp-noise-spike}).

\begin{figure*}[tb!]
    \centering
    \includegraphics[width=0.9\textwidth]{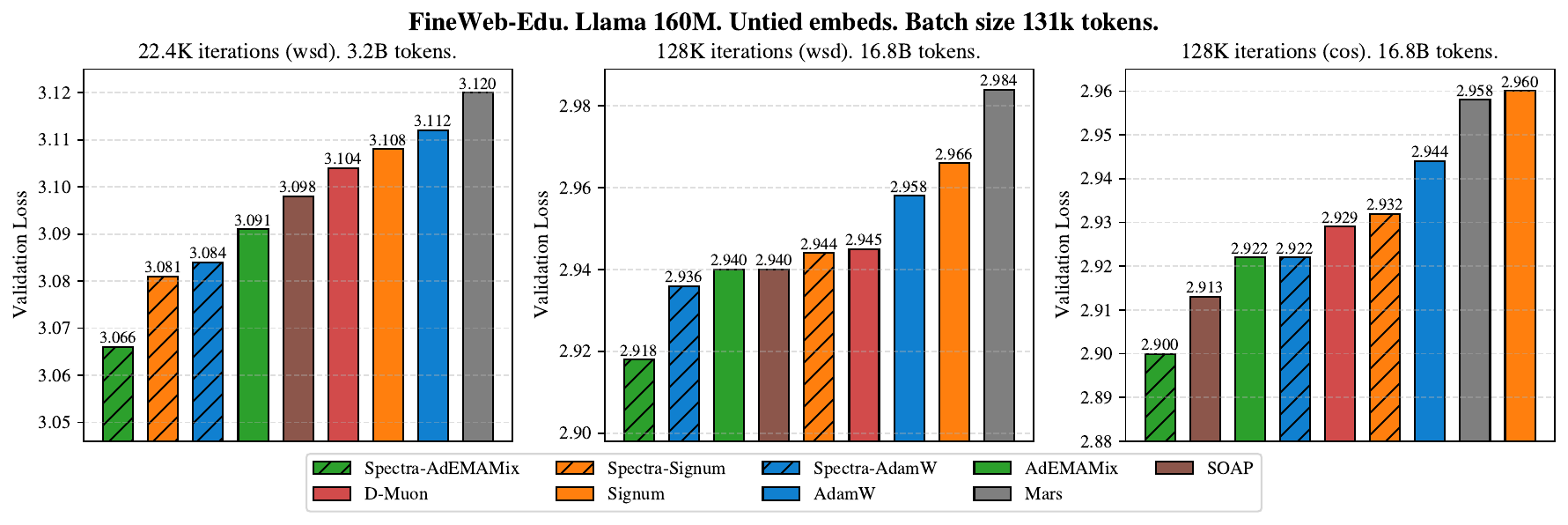}
    \vspace{-2mm}
    \caption{Final validation loss comparison for a small size Llama model trained with both cos and wsd learning rate schedule. The 
    run time comparisons w/wo using SPECTRA 
    are provided in Figure~\ref{fig:run_time_iter_160M_wsd}.
    We use a total batchsize of 256 with 512 sequence length.
    The hyperparameters such as learning rate used for each method are reported in the tables in Section~\ref{sec:lLM-pretrain-appendix}.}
    \label{fig:160M-loss}
\end{figure*}

\begin{figure*}[tb!]
    \centering
    \includegraphics[width=1.0\textwidth]{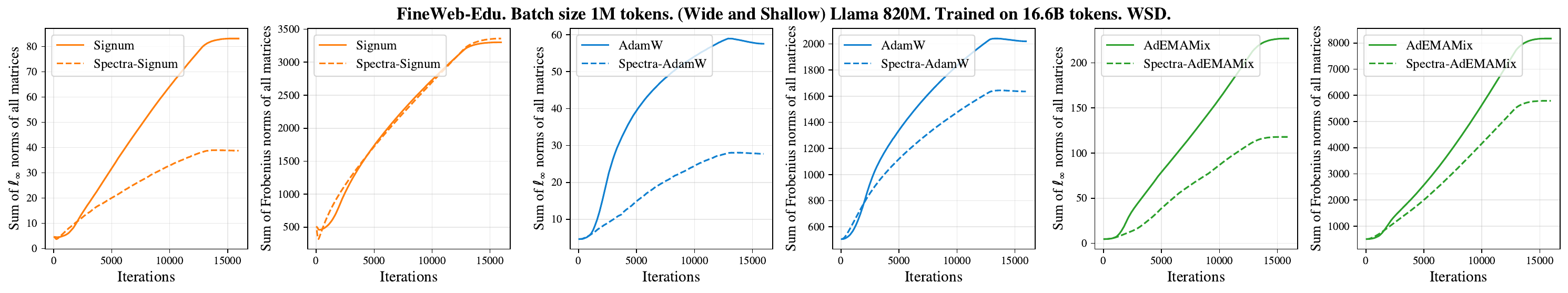}
    \vspace{-2mm}
    \caption{Evolution of the aggregated $\ell_{\infty}$ and Frobenius norms of all weight matrices during training. SPECTRA consistently maintains a lower $\ell_{\infty}$ norm compared to the base optimizer.}    
    \vskip - 0.2 in
    \label{fig:norm-820M}
\end{figure*}

\subsection{LLM Pretraining Tasks.}
We compare SPECTRA against state-of-the-art optimizers, including AdEMAMix~\citep{ademamix}, AdamW~\citep{adamw}, Signum~\citep{signum}, D-Muon~\citep{d-muon}, Mars~\citep{mars}, and SOAP~\citep{soap}, following the benchmark protocol in~\citep{semenov2025benchmarking}. 
We apply SPECTRA to the first three optimizers to demonstrate its versatility across methods with different \textbf{memory requirements} (see Table~\ref{tab:memory-optimizer}). 
We employ LLaMA3-based architectures~\citep{llama} with untied embeddings (details in Table~\ref{tab:model-archt})
trained on FineWeb-Edu dataset~\citep{fineweb-edu}. Hyperparameter choices are reported in Section~\ref{sec:lLM-pretrain-appendix}. 
Unless otherwise stated, we disable pre-spectral clipping for SPECTRA to isolate the contribution of post-spectral clipping, which is the primary component of the method (see the discussion at the end of Section~\ref{sec:lLM-pretrain-appendix}). We mostly use the WSD schedule~\cite{wsd} for its practical advantage of enabling resumption from any checkpoint during the stable phase (constant learning rate).

\textbf{SPECTRA Enables Large Learning Rate.}
We begin by demonstrating the stability of SPECTRA under large learning rates. We train a 124M-parameter transformer using Signum with a small batch size of $16k$ tokens. Previous studies note that Signum is sensitive to gradient noise, often performing poorly in small-batch regimes~\citep{kornilov2025sign,tomihari2025understanding}. Consistent with this, Figure~\ref{fig:signum_ablation_loss} shows that standard Signum exhibits training instability and poor validation loss when the learning rate is increased. In contrast, Spectra-Signum maintains stable convergence at a high learning rate, significantly outperforming the best Signum baseline (improving validation perplexity by 2.5). These results highlight the efficiency of SPECTRA in mitigating instability caused by gradient noise.

\textbf{160M-parameter Model.} 
We use a sequence length of 512 tokens and batch size of 256. We first consider the 
Chinchilla optimal duration~\citep{chinchilla} for this scale (3.2B tokens, approx. 24.4K iterations). As shown in the leftmost plot of Figure~\ref{fig:160M-loss}, the three SPECTRA variants achieve the lowest validation losses. When increasing the training budget to 16B tokens (testing both WSD and Cosine schedules), Spectra-AdEMAMix consistently outperforms other methods by a large margin. The run time comparison with and without SPECTRA is presented in Figure~\ref{fig:run_time_iter_160M_wsd}.
We also evaluate the \textbf{post-orthogonalization} operator; results in Table~\ref{tab:post-orth} indicate that post-spectral clipping yields superior performance.
The ablation study in terms of $c$ can be found in Table~\ref{tab:ablation_c} where the wide plateau suggests that the method is not sensitive to $c$ in practice. We also ablate over $N_s \in \{5,10,15\}$ 
and 
results in Table~\ref{tab:ablation_ns} show almost no difference in final loss. We use $N_s=10$ as a conservative default to ensure convergence across all training phases (e.g., early warmup where conditioning can be poor)

\begin{table*}[ht!]
\caption{The evaluation results (accuracy $\times 100\%$) of the 820M model pretrained on Fineweb-Edu with 16.8B tokens (5-shot with lm-evaluation-harness~\citep{lm-harness}). }
\label{tab:820M-downstream}
\resizebox{\textwidth}{!}{
\begin{minipage}{1.3\textwidth}
\centering
\begin{tabular}{lcccccccccccc}
    \toprule
    \textbf{Optimizer} & 
    \textbf{ARC-c} & \textbf{ARC-e} & \textbf{BoolQ} & \textbf{PIQA} & \textbf{HellaSwag} & \textbf{WinoGrande} & \textbf{OBQA} 
    & \textbf{$\lambda$-OpenAI}
    & \textbf{Sciq} & \textbf{MMLU}
    & \textbf{Average} \\
    \midrule
    Mars &
    $34.98$ &
    $67.80$ &
    $57.25$ &
    $70.08$ &
    $46.87$ &
    $52.17$ &
    $34.20$ &
    $\textbf{29.38}$ &
    $89.60$ &
    $25.37$ &
    $50.77$ &
    \\
    D-Muon &
    $36.60$ &
    $67.47$ &
    $53.73$ &
    $69.31$ &
    $46.69$ &
    $52.72$ &
    $34.60$ &
    $28.84$ &
    $\textbf{90.90}$ &
    $25.08$ &
    $50.59$ &
    \\
    \midrule
    Signum
    &
    $34.56$ &
    $65.66$ &
    $56.02$ &
    $69.64$ &
    $45.01$ &
    $52.25$ &
    $34.20$ &
    $28.04$ &
    $87.30$ &
    $25.75$ &
    $49.84$ &
    \\
    Spectra-Signum
    &
    $35.58$ &
    $\textbf{68.81}$ &
    $\textbf{58.10}$ &
    $\textbf{70.46}$ &
    $47.04$ &
    $\textbf{54.46}$ &
    $35.60$ &
    $29.36$ &
    $90.30$ &
    $\textbf{26.44}$ &
    $\textbf{51.62}$ &
    \\
    \midrule
    AdamW &
    $33.45$ &
    $66.84$ &
    $56.09$ &
    $69.21$ &
    $45.65$ &
    $50.99$ &
    $35.20$ &
    $25.54$ &
    $90.80$ &
    $23.86$ &
    $49.76$ &
    \\
    Spectra-AdamW &
    $35.67$ &
    $66.96$ &
    $55.41$ &
    $68.82$ &
    $46.57$ &
    $52.09$ &
    $\textbf{36.40}$ &
    $26.74$ &
    $89.20$ &
    $25.14$ &
    $50.30$ &
    \\
    \midrule
    AdEMAMix &
    $35.41$ &
    $65.95$ &
    $43.43$ &
    $70.08$ &
    $46.43$ &
    $53.20$ &
    $34.40$ &
    $28.51$ &
    $88.30$ &
    $24.84$ &
    $49.05$ &
    \\
    Spectra-AdEMAMix &
    $\textbf{36.86}$ &
    $68.81$ &
    $53.94$ &
    $69.48$ &
    $\textbf{47.09}$ &
    $53.43$ &
    $36.40$ &
    $28.26$ &
    $89.40$ &
    $26.23$ &
    $50.99$ &
    \\
    \bottomrule
\end{tabular}
\end{minipage}}
\end{table*}

\begin{table}[t]
\centering
\caption{Fixed-time budget comparison at 780M scale. Baseline optimizers are given 5--20\% more training steps to compare with SPECTRA's 2\% per-step overhead. SPECTRA variants trained for 16000 steps are shown in the last column.}
\label{tab:extend}
\resizebox{\columnwidth}{!}{%
\begin{tabular}{lccccc}
\toprule
& \multicolumn{4}{c}{Baseline (extended steps)} & SPECTRA \\
\cmidrule(lr){2-5} \cmidrule(lr){6-6}
Optimizer & 16000 & +5\% & +10\% & +20\% & 16000 \\
          &       & (16800) & (17600) & (19200) &  \\
\midrule
AdamW    & 2.608 & 2.605 & 2.600 & 2.590 & 2.582 \\
Signum   & 2.600 & 2.594 & 2.589 & 2.580 & 2.573 \\
AdEMAMix & 2.591 & 2.585 & 2.580 & 2.571 & 2.574 \\
MARS     & 2.598 & 2.592 & 2.587 & 2.579 & 2.573 \\
D-Muon   & 2.595 & 2.589 & 2.584 & 2.574 & ---   \\
\bottomrule
\end{tabular}%
}
\end{table}

\begin{table}[t]
\centering
\caption{Sensitivity to clipping threshold $c$ (160M, WSD schedule, lr=$10^{-3}$). Baseline AdamW without SPECTRA: 2.958. Performance is flat from $c=5$ to $c=50$ (all within 0.004 of the optimum). Only $c=1$ degrades due to over-aggressive clipping.}
\label{tab:ablation_c}
\begin{tabular}{lccccc}
\toprule
$c$ & 1 & 5 & 10 & 20 & 50 \\
\midrule
Val Loss & 2.975 & 2.940 & \textbf{2.936} & \textbf{2.936} & 2.940 \\
\bottomrule
\end{tabular}
\vspace{-2mm}
\end{table}

\begin{table}[t]
\centering
\caption{Ablation of number of NS iterations for Spectra-AdamW (160M, WSD schedule, lr=$10^{-3}$). bfloat16 has only 7 bits of mantissa. Additional iterations beyond convergence don't improve accuracy--they accumulate round-off errors. Therefore, we use float32 for NS=15 instead.}
\label{tab:ablation_ns}
\begin{tabular}{lccc}
\toprule
NS Iters & 5 (bf16) & 10 (bf16) & 15 (float32)  \\
\midrule
Val Loss & 2.939 & 2.936 & 2.937 \\
\bottomrule
\end{tabular}
\vspace{-2mm}
\end{table}

\textbf{820M Wide and Shallow Model.}
Following~\cite{semenov2025benchmarking}, we scale the model to 820M by increasing the embedding dimension while maintaining the number of  layers. The Chinchilla optimal duration for this size is approximately 16.4B tokens. We increase the sequence length from 512 to 1K and batch size from 256 to 992 (1M tokens per batch). We run each method for 16K iterations; final validation losses are reported in the right plot of Figure~\ref{fig:780M-16k}. SPECTRA consistently improves upon the base optimizers.
Training curves with respect to both iterations and runtime are presented in Figure~\ref{fig:run_time_iter_820M}.
Figure~\ref{fig:norm-820M} plots the summation of the $\ell_{\infty}$ and Frobenius norms of all weight matrices during training; all three SPECTRA variants exhibit smaller $\ell_{\infty}$ norms, confirming the implicit $\ell_{\infty}$-regularization effect of sign-based methods.

We next evaluate the downstream performances of the pretrained models on 10 standard in-context learning tasks. Details are provided in Section~\ref{sec:downstream-appendix} and the results are reported in Table~\ref{tab:820M-downstream}. 
All three SPECTRA variants improve average accuracy over the baselines, with Spectra-Signum achieving the best.

\textbf{780M Deep and Thin Model.}
We next consider a modern deep architecture by increasing the number of layers and reducing the embedding dimension, a configuration known to be harder to optimize. The left plot in Figure~\ref{fig:780M-16k} shows that the four SPECTRA variants rank top 4. Notably, Spectra-Signum performs well while requiring the least memory. Table~\ref{tab:extend} further shows that even with 20\% more training steps, baseline methods remain on par with or worse than SPECTRA variants trained for the original number of steps. Since SPECTRA's per-step overhead is only around 2\% at this scale (Table~\ref{tab:timing_combined}), this demonstrates that SPECTRA is not only token-efficient but also time-efficient.

\textbf{1.5B Model with $\mu$P.}  
To evaluate SPECTRA at larger scale, we conduct short training runs (8B tokens, 8k steps) on a 1.5B-parameter model with $\mu$P~\cite{mup}. As shown in Table~\ref{tab:mup-sweep}, SPECTRA-AdEMAMix achieves the best validation loss among all considered methods.

\textbf{Summary and Discussion.} 
In general, Spectra-AdEMAMix reliably achieves the best validation losses across varying training durations and model sizes, while Spectra-Signum excels with large batch sizes and demonstrates strong performance on downstream tasks. Overall, we recommend Spectra-Signum as a strong and memory-efficient optimizer for standard LLm pretraining when the batchsize is sufficiently large.

\section{Conclusion.}

We proposed SPECTRA, a novel spectral optimization framework designed for efficient LLM training. By explicitly constraining the spectral norm of the update matrix, SPECTRA enhances convergence stability and final validation performance compared to standard baselines, while avoiding additional memory overhead. Future directions include: 1) scaling the experiments up to models of larger sizes, extending training durations, and evaluating performance on broad downstream tasks; 2) theoretical analysis of spectral clipping under more general noise assumptions; 3) empirical comparions of using other spectral clipping implementations such as~\cite{spectral-clip-blog} and the currently used soft spectral clipping; 4) exploring efficient distributed implementations of spectral clipping for multi-gpu training especially under tensor parallelism; 5) investigating the effect of explicitly using various weight regularization techniques; and 6) exploring the potential of spectral clipping in other optimization contexts, such as reinforcement learning.

\newpage

\section*{Acknowledgments}
This work is funded by the Deutsche Forschungsgemeinschaft (DFG, German Research Foundation) – Project number 553954458.
The authors thank Anton Rodomanov and Nikita Doikov for helpful discussions 
regarding the paper.
This work was also supported by the Helmholtz Association's Initiative and Networking Fund on the HAICORE@FZJ partition.

\section*{Impact Statement}
This paper presents work that aims to improve the training of large language models. Our work contributes to the efficiency of deep learning research. This has potential positive impacts regarding the environmental footprint of training large-scale models and lowering the hardware barrier for research participation. We do not foresee specific negative societal consequences beyond the general risks associated with the advancement of LLM capabilities.

\nocite{langley00}

\bibliography{reference}
\bibliographystyle{icml2026}

\newpage
\appendix
\onecolumn
\numberwithin{equation}{section}
\numberwithin{figure}{section}
\numberwithin{table}{section}
\newpage
{\Huge\textbf{Appendix}}
\vspace{0.5cm}
\section{More Related Work}
\label{sec:related-work}

\textbf{Clipping and Optimization.} Gradient clipping has evolved from a heuristic to prevent exploding gradients in recurrent networks~\citep{pascanu2013difficulty} into a critical component of modern optimizer design. Broadly, global and coordinate-wise clipping are the most widely studied variants due to their computational efficiency. Recent theoretical analyses have rigorously justified the benefits of clipping beyond simple stability. 
\citet{pan2023toward} demonstrate the effect of coordinate-wise clipping on sharpness reduction and speeding up the convergence.
\citet{zhang2019gradient} highlight the advantages of global clipping and normalization under the generalized smoothness assumptions which appears in training deep neural networks. 
Furthermore, \citet{zhang2020adaptive} establish that clipping significantly improves robustness to heavy-tailed noise, effectively mimicking the behavior of adaptive methods. Subsequently, more advanced theoretical results are established with refined convergence guarantees under various assumptions~\cite{gorbunov2020stochastic,gorbunov2024methods,pmlr-v202-koloskova23a}
Recently, \citet{pmlr-v267-chezhegov25a} further demonstrated the improvement of Adam-Norm and AdaGrad-Norm under heavy-tailed noise with global clipping.

\textbf{Connection to Robust and Byzantine Optimization}. The mechanics of gradient clipping share a strong theoretical foundation with robust statistics and Byzantine-tolerant optimization~\citep{alistarh2018byzantine}. In distributed learning scenarios where a subset of workers may be malicious, robust aggregation rules often employ clipping-like operations to filter out outliers~\citep{karimireddy2021learning,malinovsky2024byzantine}. In this view, gradient clipping in standard training can be seen as a robust estimator that filters out "stochastic outliers" in the gradient estimation, thereby stabilizing the learning process.

\textbf{Spectral Clipping}. Global and coordinate-wise clipping treat all gradients as flat vectors, ignoring the structural information inherent in matrix parameters. Spectral clipping has recently emerged as a more principled alternative for matrix-based models. For instance, \citet{guo2024surprising} show that post-hoc spectral clipping can enforce stability in linear dynamical systems more effectively than constrained optimization. \citet{boroojeny2024spectrum} apply spectral constraints to implicitly linear layers to improve generalization and adversarial robustness. Most recently, \citet{spectral-clip-blog} proposed efficient algorithms for computing exact spectral clipping using block-matrix iterations, framing the method as steepest descent on the spectral ball. We leave the detailed comparisons between these techniques and our currently used soft spectral clipping as interesting future explorations. 

\textbf{Spectral Optimizer \& Normalization.} The concept of optimizing in the spectral domain traces back to early works by \citet{carlson2015preconditioned}, who proposed stochastic spectral descent—performing steepest descent in the spectral norm—to accelerate the training of Restricted Boltzmann Machines and deep networks~\citep{spectral-descent-Boltzmann, spectral-descent-discrete-graph}. Later, the Shampoo optimizer~\citep{shampoo} bridged the gap between spectral methods and adaptive preconditioning. Operating as a matrix-valued version of AdaGrad~\citep{adagrad}, Shampoo approximates second-order information via Kronecker structures; recent analysis suggests it can be viewed as a form of spectral descent when the preconditioner accumulation is removed~\citep{bernstein2024old}.

More recently, Muon~\citep{muon} has popularized spectral optimization by combining momentum with orthogonalized updates computed via Newton-Schultz (NS) iterations, achieving remarkable efficiency in nanoGPT benchmarks.
Theoretical foundations for these methods have since emerged: \citet{SCG} demonstrate that Muon with decoupled weight decay effectively performs Frank-Wolfe optimization within a spectral-norm ball. By generalizing the underlying norms, they proposed the Stochastic Conditional Gradient (SCG) framework, which exhibits favorable hyperparameter transfer properties.
Further theoretical analyses regarding convergence and structural benefits have been established \citep{xie2025structured,gluon,shen2025convergence,crawshaw2025exploration}. Geometric interpretations also link these methods to steepest descent in specific norms: just as sign-based methods exploit $\ell_{\infty}$ geometry~\citep{xie2024adam,yadav2025provable}, removing weight decay from Muon corresponds to steepest descent in the spectral norm space. Recent work by~\cite{gonon2026insights}  provides novel theoretical insights into the Muon optimizer by studying its behavior on simple quadratics, revealing that polar-factor approximation errors can qualitatively change discrete-time optimization dynamics and improve finite-time performance — a phenomenon not captured by existing worst-case or single-step analyses.

Building on the success of Muon, numerous variants have been proposed to improve adaptivity and efficiency. These include hybrids like AdaMuon~\citep{adamuon} and AdaGrad-Muon~\citep{adagrad-muon}, as well as efficiency-focused methods like Drop-Muon~\citep{drop-muon} and ASGO~\citep{asgo}. Parallel efforts have optimized the core orthogonalization operation itself: Dion~\citep{dion} introduces a distributed algorithm using amortized power iterations to avoid full-matrix reconstruction, while \citet{polarexpress} and \citet{boumal2025} propose refined iterative schemes to better approximate the orthogonalization operator. Beyond matrix methods, normalization techniques have also been applied to coordinate-wise optimizers~\citep{conda} and standard SGD~\citep{swan}. Most recently, \citet{davis2025spectral} provided rigorous justifications for the conditions under which spectral gradient updates help in DL.

\section{Basic Definitions}
\begin{definition}
A differentiable function $f:\R^{m \times n}\to\mathbb{R}$ is called $L_{\|\cdot\|}$-smooth for some $L \ge 0$ if for all $\mX,\mY\in\R^{m \times n}$,
\begin{equation*}
    \| \nabla f(\mX)-\nabla f(\mY) \|_{*} \le L \| \mX-\mY \| \;,
    \label{df:smooth}
\end{equation*}
where $\|\cdot\|_{*}$ is the dual norm of $\|\cdot\|$.
If $f$ is $L_{\|\cdot\|}$-smooth, then for any $\mX, \mY \in \R^{m \times n}$, we have (\citet{nesterov-book}, Lemma~1.2.3):
\begin{equation}
    f(\mY) \le f(\mX)+\lin{\nabla f(\mX),\mY-\mX} +\frac{L}{2}\norm{ \mY-\mX }^2 \;.
    \label{eq:SmoothUpperBound}
\end{equation}
\end{definition}

\begin{definition}[{\citep[Section 2.5.3]{golub13}}]
	\label{df:subspace-distance}
Let $\mX, \mY \in \R^{m \times r}$ be two matrices with orthonormal columns. The distance between the subspaces spanned by those columns is defined as:
\[
\dist(\mX, \mY) \defeq \|\mX \mX^T - \mY \mY^T\|_2 \;.
\]
\end{definition}

\begin{definition}[\cite{Bjorck1973Angles, golub13}]
\label{df:PrincipalAngles}
Let $\mX, \mY \in \R^{m \times r}$ be two matrices with orthonormal columns.
Let $\mU \diag(\cos(\theta_1), \ldots, \cos(\theta_r)) \mV^T$
be the singular value decomposition of $\mX^T \mY$. 
Then $\theta_1, \ldots, \theta_r \in [0, \pi/2]$ are called the principal angles
between the subspaces spanned by the columns of $\mX$ and $\mY$. 
\end{definition}

\section{Useful Lemmas}

\begin{lemma}
    For any two matrices $\mX,\mY \in \R^{m \times n}$ and any $\gamma > 0$, we have:
    \begin{equation}
        \abs{\lin{\mX, \mY}} \le \frac{\gamma}{2} \norm{\mX}_F^2 
        + \frac{1}{2 \gamma}\norm{ \mY}_F^2 ,
        \label{eq:BasicInequality1}
    \end{equation}
    \begin{equation}
        \norm{ \mX + \mY}_F^2 \le (1 + \gamma) \norm{ \mX}_F^2 
        + \Bigl( 1 + \frac{1}{\gamma} \Bigr) \norm{ \mY}_F^2 \;.
        \label{eq:BasicInequality2}
    \end{equation}
\end{lemma}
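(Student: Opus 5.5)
The two inequalities follow from elementary Frobenius inner-product manipulations. I will treat the Frobenius inner product $\langle \mX, \mY\rangle = \trace(\mX^T\mY)$ as the Euclidean inner product on $\R^{mn}$ via vectorization, so that $\|\mX\|_F^2 = \langle \mX,\mX\rangle$ and the usual Hilbert-space identities apply.

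For \eqref{eq:BasicInequality1}, I will expand the nonnegative quantity $\bigl\|\sqrt{\gamma}\,\mX \mp \tfrac{1}{\sqrt{\gamma}}\mY\bigr\|_F^2 \ge 0$. This gives
\[
\gamma\|\mX\|_F^2 \mp 2\langle \mX,\mY\rangle + \tfrac{1}{\gamma}\|\mY\|_F^2 \ge 0 .
\]
Rearranging with the minus sign yields $\langle \mX,\mY\rangle \le \frac{\gamma}{2}\|\mX\|_F^2 + \frac{1}{2\gamma}\|\mY\|_F^2$. Rearranging with the plus sign yields the same bound for $-\langle \mX,\mY\rangle$. Together these bound the absolute value. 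As an alternative route, one can combine Cauchy--Schwarz, $|\langle \mX,\mY\rangle|\le \|\mX\|_F\|\mY\|_F$, with the scalar AM--GM inequality $ab\le \frac{\gamma}{2}a^2+\frac{1}{2\gamma}b^2$.

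For \eqref{eq:BasicInequality2}, I will expand
\[
\|\mX+\mY\|_F^2 = \|\mX\|_F^2 + 2\langle \mX,\mY\rangle + \|\mY\|_F^2 .
\]
Then I apply \eqref{eq:BasicInequality1} to the cross term, bounding $2\langle \mX,\mY\rangle \le 2|\langle \mX,\mY\rangle| \le \gamma\|\mX\|_F^2 + \frac{1}{\gamma}\|\mY\|_F^2$. Collecting terms gives the stated inequality with coefficients $(1+\gamma)$ and $(1+1/\gamma)$.

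I expect no real obstacle here. The only point that needs care is the use of $\gamma>0$, which makes $\sqrt{\gamma}$ and $1/\gamma$ well defined; the rest is routine algebra.
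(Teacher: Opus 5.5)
Your proof is correct. The paper states this lemma without proof as a standard fact, and your argument is the standard derivation it implicitly relies on: expand $\|\sqrt{\gamma}\,\mX \mp \gamma^{-1/2}\mY\|_F^2 \ge 0$ for the first inequality, then apply that bound to the cross term of $\|\mX+\mY\|_F^2$ for the second.
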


\begin{lemma}
\label{lemma:SimpleRecurrence0}
    Let $(\mA_k)_{k=0}^{+\infty}$, $(\mB_k)_{k=0}^{+\infty}$ and be two sequences in $\R^{m \times n}$ and $(a_k)_{k=0}^{+\infty}$ be a non-negative sequence such that 
    $
    \mA_{k+1} 
    = a_k \mA_k 
    + \mB_k
    $
    for any $k \ge 0$. 
    Then it holds that, for any $K \ge 1$,
    \[
    \mA_K = \Pi_{k=0}^{K-1}a_k \mA_0 + \sum_{k=1}^{K} (\Pi_{i=k}^{K-1}a_i) \mB_{k-1} \;,
    \]
    where we define $\Pi_{i = k_1}^{k_2} a_i = 1$ for any $k_1 > k_2$.
\end{lemma}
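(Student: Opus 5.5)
We prove Lemma~\ref{lemma:SimpleRecurrence0} by induction on $K$, unrolling the recurrence $\mA_{k+1} = a_k \mA_k + \mB_k$ one step at a time. Nonnegativity of the $a_k$ plays no role; the identity is purely algebraic and holds for arbitrary scalars.

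\textbf{Base case $K=1$.} The claimed formula reads $\mA_1 = a_0 \mA_0 + (\Pi_{i=1}^{0} a_i)\mB_0$. By the empty-product convention $\Pi_{i=1}^{0}a_i = 1$, this is exactly $\mA_1 = a_0\mA_0 + \mB_0$, which is the recurrence at $k=0$.

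\textbf{Inductive step.} Assume the formula holds for some $K \ge 1$. Apply the recurrence at $k=K$ and substitute the hypothesis:
\[
\mA_{K+1} = a_K \mA_K + \mB_K = a_K \Pi_{k=0}^{K-1}a_k \mA_0 + \sum_{k=1}^{K} a_K \bigl(\Pi_{i=k}^{K-1}a_i\bigr) \mB_{k-1} + \mB_K .
\]
For the first term, $a_K \Pi_{k=0}^{K-1}a_k = \Pi_{k=0}^{K}a_k$. For each summand with $1\le k\le K$, $a_K \Pi_{i=k}^{K-1}a_i = \Pi_{i=k}^{K}a_i$. This also holds at $k=K$, where the inner product is empty: $a_K \cdot 1 = \Pi_{i=K}^{K} a_i$. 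For the last term, $\mB_K$ is the $k=K+1$ summand of the target sum, because $\Pi_{i=K+1}^{K}a_i = 1$. Combining,
\[
\mA_{K+1} = \Pi_{k=0}^{K}a_k \mA_0 + \sum_{k=1}^{K+1}\bigl(\Pi_{i=k}^{K}a_i\bigr)\mB_{k-1},
\]
which is the claim for $K+1$.

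The only point needing care is the bookkeeping of the empty-product convention, at the boundary index $k=K$ and in absorbing $\mB_K$ as the new last summand.
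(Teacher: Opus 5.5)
\textbf{Comparison with the paper's proof.} Your induction is correct and complete, but it takes a different route from the paper.

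\emph{The paper's route.} It first disposes of the case where some $a_i = 0$ by asserting that the claim ``clearly holds.'' Otherwise it sets $A_k = \Pi_{i=0}^{k-1} a_i$ (with $A_0 = 1$) and divides the recurrence by $A_{k+1} = a_k A_k$. This gives the pure-increment relation $\mA_{k+1}/A_{k+1} = \mA_k/A_k + \mB_k/A_{k+1}$. It then telescopes from $k=0$ to $K-1$ and multiplies back by $A_K$, a discrete integrating-factor argument.

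\emph{What each approach buys.} The paper's route derives the closed form instead of needing it guessed in advance. Its cost is the division: it forces the case split on vanishing $a_i$, and the degenerate case is only asserted. Making it rigorous needs an extra step, e.g.\ restarting the recurrence at $\mA_{j+1} = \mB_j$, where $j<K$ is the last index with $a_j = 0$. Your induction only verifies the given formula, but it involves no division. It therefore covers arbitrary scalars $a_k$ uniformly, including zeros, which is why nonnegativity is irrelevant, as you note. You also handle the empty-product convention correctly, both at $k=K$ and for the new summand $k=K+1$.
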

\begin{proof}
    If there exists $i$ such that $a_i = 0$, then the claim clearly holds. Now suppose $a_i > 0$ for any $i \ge 0$.
    Let $A_k = \Pi_{i=0}^{k-1} a_i$ for $k \ge 1$ and $A_0 = 1$. Dividing the main recurrence by $A_{k+1}$, we have for any $k \ge 0$:
    \[
    \frac{\mA_{k+1}}{A_{k+1}} = \frac{\mA_k}{A_k} + \frac{\mB_k}{A_{k+1}} \;.
    \]
    Summing up from $k=0$ to $k = K-1$, we get, for any $K \ge 1$:
    \[
    \frac{\mA_K}{A_K} = \frac{\mA_0}{A_0}
    + \sum_{k=0}^{K-1} \frac{\mB_k}{A_{k+1}} \;.
    \]
    Rearranging, we get the claim.
\end{proof}

\begin{lemma}
	\label{lemma:momentum-coefficients-bounds}
	Let $\gamma_k = \frac{2}{k+2}$ and $\beta_k = \frac{1}{(k+1)^{2/3}}$ 
	for $k \ge 0$. Let 
	$S_k = \sum_{i=1}^k \bigl( \Pi_{j=i}^{k-1} (1-\beta_j) \bigr) \gamma_{i-1}
	$ ($S_1 = \gamma_0 = 1$)
	and $B_k = \sum_{i=1}^k \bigl( \Pi_{j=i}^{k-1} (1-\beta_j) \bigr)^2 \beta^2_{i-1}$ ($B_1 = \beta_0^2 = 1$) for $k \ge 1$. 
	Then for any $k \in \N^+$, it holds that: 
	\[
	S_k \le \frac{3}{k^{1/3}} 
	\qquad 
	\text{and}
	\qquad 
	B_k \le \frac{1}{k^{2/3}}
	\;.
	\]
\end{lemma}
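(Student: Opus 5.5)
The plan is to turn both sums into one-step linear recurrences and then prove both bounds by induction on $k$. Splitting off the $i=k+1$ term and factoring $(1-\beta_k)$ out of the remaining products gives
\[
S_{k+1} = (1-\beta_k) S_k + \gamma_k, \qquad B_{k+1} = (1-\beta_k)^2 B_k + \beta_k^2 ,
\]
for every $k \ge 1$. This is exactly the recurrence of Lemma~\ref{lemma:SimpleRecurrence0}, read backwards. The base cases $S_1 = 1 \le 3$ and $B_1 = 1 \le 1$ hold trivially.

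\textbf{Bound on $B_k$.} I would do this one first because it is exact and clean. Write $\beta_k = (k+1)^{-2/3}$, so the target $1/(k+1)^{2/3}$ is precisely $\beta_k$. Assume $B_k \le k^{-2/3}$. It then suffices to show
\[
(1-\beta_k)^2 k^{-2/3} + \beta_k^2 \le \beta_k ,
\]
which is equivalent to $(1-\beta_k)^2 k^{-2/3} \le \beta_k(1-\beta_k)$. Since $\beta_k \le 1$, this reduces to $1-\beta_k \le \beta_k k^{2/3} = \bigl(k/(k+1)\bigr)^{2/3}$. Multiplying by $(k+1)^{2/3}$, this becomes $(k+1)^{2/3} - 1 \le k^{2/3}$. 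That inequality is just subadditivity of the concave function $t \mapsto t^{2/3}$ with $t^{2/3}\big|_{t=0} = 0$. Note that the cruder step $(1-\beta_k)^2 \le 1-\beta_k$ is too lossy here, so the square must be kept.

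\textbf{Bound on $S_k$.} Assume $S_k \le 3k^{-1/3}$. We need
\[
\Bigl(1 - \tfrac{1}{(k+1)^{2/3}}\Bigr)\frac{3}{k^{1/3}} + \frac{2}{k+2} \le \frac{3}{(k+1)^{1/3}} .
\]
Rearranged, this reads
\[
\frac{3}{k^{1/3}(k+1)^{2/3}} \ge \Bigl(\frac{3}{k^{1/3}} - \frac{3}{(k+1)^{1/3}}\Bigr) + \frac{2}{k+2} .
\]
By the mean value theorem applied to $t \mapsto 3t^{-1/3}$, the bracket is at most $k^{-4/3}$. So it suffices to show
\[
\frac{3}{k^{1/3}(k+1)^{2/3}} \ge \frac{1}{k^{4/3}} + \frac{2}{k+2} .
\]

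\textbf{Main obstacle.} This last inequality is the delicate step. Both sides behave like $3/k$ to leading order, so only the lower-order margin makes it true. On the right-hand side the extra term is $k^{-4/3}$ plus the deficit $2/(k+2) - 2/k$. On the left-hand side the correction is $O(k^{-2})$, because $k^{1/3}(k+1)^{2/3} \le k + \tfrac{2}{3}$. I would multiply through by $k^{1/3}(k+1)^{2/3}(k+2)$ and compare term by term, using $k^{1/3}(k+1)^{2/3} \le k+1$.

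Doing so handles all $k$ beyond a small threshold. The first few values would be checked numerically: for $k=1$ the left side is $\approx 1.89$ against $\approx 1.67$ on the right, and for $k=2$ it is $\approx 1.15$ against $\approx 0.90$. If the uniform algebra is awkward, I would instead split into $k \le k_0$, checked directly, and $k > k_0$, where $k^{-4/3} \le \tfrac{1}{2}\cdot k^{-1}$ absorbs the lower-order terms.
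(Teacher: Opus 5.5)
Your proof is correct and follows essentially the paper's route. Both unroll the sums into one-step recurrences, induct, and bound the difference of consecutive powers to first order (you by the mean value theorem, the paper by concavity of $(1+x)^{1/3}$). For $B_k$, your reduction to $(k+1)^{2/3}-1\le k^{2/3}$ is, up to an index shift, the same inequality the paper proves. You settle it more cleanly, by subadditivity of $t\mapsto t^{2/3}$, where the paper uses a Bernoulli bound plus a monotonicity check.

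One correction to your ``main obstacle'' remark: the right-hand side $k^{-4/3}+\frac{2}{k+2}$ behaves like $2/k$, not $3/k$. The margin is therefore of order $1/k$, and no delicacy or numerical cases are needed. Using $k^{1/3}(k+1)^{2/3}\le k+\frac{2}{3}$, the required inequality reduces to $(k+\frac{14}{3})k^{4/3}\ge (k+\frac{2}{3})(k+2)$. This holds for every $k\ge 1$ because $k^{4/3}\ge k$ and $2k\ge \frac{4}{3}$.
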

\begin{proof}
We prove both claims by induction.
For any $k \ge 2$, we have: 
\[
S_{k} = 
\gamma_{k-1} + \sum_{i=1}^{k-1} \bigl( \Pi_{j=i}^{k-1} (1-\beta_j) \bigr)  \gamma_{i-1}
= \gamma_{k-1} + (1-\beta_{k-1}) S_{k-1}
=
\Bigl( 1 - \frac{1}{k^{2/3}} \Bigr) S_{k-1} + \frac{2}{k+1} \;.
\]
For $k = 1$, we have $S_1 = 1 \le 3$. 
For $k \ge 2$, suppose the claim holds for $k-1$. Then we have
$
S_{k-1} \le \frac{3}{(k-1)^{1/3}}
$ and we need to show that: 
$
S_k \le 3\bigl( 1 - k^{-2/3} \bigr) (k-1)^{-1/3}
+ \frac{2}{k+1} \le 3k^{-1/3}
$.
Note that 
\[
(k-1)^{-1/3} = k^{-1/3} (1-1/k)^{-1/3} = k^{-1/3}
\biggl( 1 + \frac{1}{k-1} \biggr)^{1/3}
\le k^{-1/3}\biggl( 1 + \frac{1}{3(k-1)} \biggr) \;,
\]
where the last inequality follows from the concavity of the function 
$(1 + x)^{1/3}$. Therefore, it suffices to show that:
\[
3 (1 - k^{-2/3}) \biggl( k^{-1/3} + \frac{k^{-1/3}}{3(k-1)} \biggr)
+ \frac{2}{k+1} \le 3 k^{-1/3} \;.
\]
The left hand side can be simplified as 
$3k^{-1/3 } + k^{-1/3} / (k-1) - 3k^{-1} - k^{-1}/(k-1)
+ 2 / (k+1)$. Dropping the term $-\frac{1}{k(k-1)}$, it is sufficient to show that $
\frac{k^{-1/3}}{k-1} + \frac{2}{k+1} \le 3k^{-1}
$ or equivalently,
$
\frac{k^{2/3}}{k-1} + \frac{2k}{k+1} \le 3 
$.
The inequality holds for $k=2$ and $k=3$. For $k \ge 4$, we have 
$\frac{2k}{k+1} = 2(1-\frac{1}{k+1}) \le 2$ and $h(k) \defeq \frac{k^{2/3}}{k-1} = 
\frac{1}{k^{1/3} - k^{-2/3}}$ is decreasing over $k > 1$ and $h(4) < 1$,
which concludes the proof for the first claim. 
For the second claim, we have for any $k \ge 2$:
\[
B_{k} = \beta_{k-1}^2 + \sum_{i=1}^{k-1} \bigl( \Pi_{j=i}^{k-1} (1-\beta_j) \bigr)^2 \beta^2_{i-1}
=
\beta_{k-1}^2 + (1-\beta_{k-1})^2 B_{k-1} = 
(1 - k^{-2/3})^2 B_{k-1} + k^{-4/3} \;.
\]
For $k=1$, we have $B_1 = \beta_0^2 = 1 \le 1$.
For $k \ge 2$, suppose the claim holds for $k-1$. Then we have
$B_{k-1} \le (k-1)^{-2/3}$ and we need to show that:
$B_k \le (1 - k^{-2/3})^2 (k-1)^{-2/3} + k^{-4/3} \le k^{-2/3}$, 
or equivalently, $(1 - k^{-2/3})\bigl(k / (k-1)\bigr)^{2/3} \le 1$.
Note that $\bigl(k / (k-1)\bigr)^{2/3} = (1 + \frac{1}{k-1})^{2/3} \le 
1 + \frac{2}{3(k-1)}$, it is sufficient to show that:
$
(1 - k^{-2/3})\bigl( 1 + \frac{2}{3(k-1)} \bigr) \le 1 
$. Expanding the left hand side, we need to prove that 
$\frac{2}{3} \le \frac{k-1}{k^{2/3}} + \frac{2}{3 k^{2/3}}$.
This holds for $k = 2$. For $k \ge 3$, since $h(k) \defeq \frac{k-1}{k^{2/3}}$
is increasing over $k \ge 1$ and $h(3) > \frac{2}{3}$, the inequality holds 
and the claim follows.
\end{proof}

\begin{lemma}
	\label{lemma:SimpleRecurrence1}
	Let $S_k = \sum_{i=1}^{k-1} \frac{i+1}{i^{1/3}}$ for $k \ge 2$. 
	Then it holds that $S_k \le 0.6 k^{5/3} + 1.5k^{2/3} - 2.1 \le 2.1 k^{5/3}$.
\end{lemma}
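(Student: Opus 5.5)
Since $h(x)=\frac{x+1}{x^{1/3}} = x^{2/3} + x^{-1/3}$, I would bound the sum $S_k=\sum_{i=1}^{k-1} \bigl(i^{2/3}+i^{-1/3}\bigr)$ by comparing each piece with an integral. The sum splits into two monotone parts. The first, $i^{2/3}$, is increasing. The second, $i^{-1/3}$, is decreasing. Each part gets its own comparison.

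\textbf{Increasing part.} Since $x^{2/3}$ is increasing, $i^{2/3}\le\int_i^{i+1}x^{2/3}\,dx$. Summing over $i=1,\dots,k-1$ gives
\[
\sum_{i=1}^{k-1} i^{2/3}\le \int_1^{k} x^{2/3}\,dx=\tfrac35\bigl(k^{5/3}-1\bigr).
\]

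\textbf{Decreasing part.} For $i\ge 2$, $i^{-1/3}\le \int_{i-1}^{i}x^{-1/3}\,dx$. Keeping the $i=1$ term separately gives
\[
\sum_{i=1}^{k-1} i^{-1/3}\le 1+\int_1^{k-1}x^{-1/3}\,dx\le 1+\tfrac32\bigl(k^{2/3}-1\bigr).
\]

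\textbf{First inequality.} Adding the two estimates gives
\[
S_k\le 0.6k^{5/3}+1.5k^{2/3}-0.6+1-1.5=0.6k^{5/3}+1.5k^{2/3}-1.1.
\]
This constant is $-1.1$, not the claimed $-2.1$, so the bound needs tightening by $1$. This is the main obstacle. I would recover the missing unit by replacing the crude left-endpoint comparisons with trapezoid-type bounds. For a concave increasing function such as $x^{2/3}$, the trapezoid rule gives $\int_1^{k}x^{2/3}\,dx\ge\sum_{i=1}^{k-1} i^{2/3}+\tfrac12\bigl(k^{2/3}-1\bigr)$. This saves a term of order $\tfrac12 k^{2/3}$, which is at least about $0.29$ already at $k=2$ and grows with $k$. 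A matching refinement applies to the convex decreasing function $x^{-1/3}$. If the constant still falls short for small $k$, I would check $k=2,3,4$ numerically (for example $S_2=2\le 0.6\cdot 2^{5/3}+1.5\cdot 2^{2/3}-2.1\approx 2.16$) and use the refined integral bounds for larger $k$. Should $-2.1$ prove unattainable, the second inequality below still goes through with the weaker constant $-1.1$.

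\textbf{Second inequality.} This is straightforward. For $k\ge 2$ we have $k^{2/3}\le k^{5/3}/2$, so
\[
1.5k^{2/3}\le 0.75k^{5/3},
\]
and the whole expression is at most $1.35k^{5/3}\le 2.1k^{5/3}$. The negative constant can simply be dropped.
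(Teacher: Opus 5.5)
\paragraph{The first inequality is not proved.}
Your second inequality is fine. The first one, which carries the content of the lemma, is not established.

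Your own computation gives $S_k\le 0.6k^{5/3}+1.5k^{2/3}-1.1$. The repair is only sketched, and part of it is wrong:
\begin{itemize}
\item The trapezoid rule for the convex function $x^{-1/3}$ gives $\int_i^{i+1}x^{-1/3}\,dx\le\tfrac12\bigl(i^{-1/3}+(i+1)^{-1/3}\bigr)$. That is a \emph{lower} bound on $\sum_i i^{-1/3}$, so no ``matching refinement'' exists in the direction you need.
\item Using only the concave refinement for $x^{2/3}$ together with your estimate $(k-1)^{2/3}\le k^{2/3}$, you get $S_k\le 0.6k^{5/3}+k^{2/3}-0.6$. This beats the target only when $k^{2/3}\ge 3$, i.e.\ $k\ge 6$. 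So $k=5$ also falls outside what you planned to check numerically.
\item The closing hedge (``should $-2.1$ prove unattainable'') concedes that the statement as written is not proved.
\end{itemize}

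\paragraph{The missing idea: do not split $h$.}
The unit is lost because you bound the decreasing part $i^{-1/3}$ on its own. That discards the slack $\int_i^{i+1}x^{2/3}\,dx-i^{2/3}$ from the increasing part, which more than compensates. In fact $h(x)=x^{2/3}+x^{-1/3}$ is itself increasing on $[1,\infty)$, since $h'(x)=\tfrac13x^{-4/3}(2x-1)>0$. Hence $h(i)\le\int_i^{i+1}h(x)\,dx$ for every $i\ge 1$. Summing over $i=1,\dots,k-1$ gives
\[
S_k\le\int_1^k\bigl(x^{2/3}+x^{-1/3}\bigr)\,dx=\tfrac35\bigl(k^{5/3}-1\bigr)+\tfrac32\bigl(k^{2/3}-1\bigr)=0.6k^{5/3}+1.5k^{2/3}-2.1,
\]
with no case analysis. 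This is exactly the paper's argument.

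Your route can also be rescued if you keep $(k-1)^{2/3}$ in the decreasing-part bound instead of replacing it by $k^{2/3}$. Combined with the concave trapezoid estimate, the claim reduces to $\tfrac32(k-1)^{2/3}+\tfrac32\le 2k^{2/3}$. This holds at $k=2$, and $2k^{2/3}-\tfrac32(k-1)^{2/3}$ is increasing for $k\ge 2$, so it holds for all $k\ge 2$.
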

\begin{proof}
Let $h(i) \defeq \frac{i+1}{i^{1/3}} = i^{2/3} + i^{-1/3}$.
We have $h'(i) = \frac{2}{3}i^{-1/3} - \frac{1}{3} i^{-4/3} = 
\frac{1}{3}i^{-4/3}(2i - 1) > 0$ for any $i \ge 1$. Thus, $h$ is increasing over $[1, +\infty)$ and we have: 
$
h(i) \le \int_{i}^{i+1} h(i) di
$. Summing up from $i = 1$ to $i = k-1$, we get:
\[
S_k = \sum_{i=1}^{k-1} h(i) \le \sum_{i=1}^{k-1} \int_{i}^{i+1} h(i) di
= \int_{1}^{k} i^{2/3} + i^{-1/3} di
= \frac{3}{5}(k^{5/3} - 1) + \frac{3}{2}(k^{2/3} - 1)\;.
\qedhere
\]
\end{proof}

\begin{lemma}[{\citep[Algorithm 6.4.3]{golub13}}]
	\label{lemma:DistPrincipalAngles}
Let $\mX, \mY \in \R^{m \times r}$ be two matrices with orthonormal columns. 
Then it holds that:
\[
\dist(\mX, \mY) = \max\{ \sin \theta_i : i = 1, \ldots, r \} \;,
\]
where $\theta_1, \ldots, \theta_r$ are the principal angles between the subspaces spanned by the columns of $\mX$ and $\mY$.
\end{lemma}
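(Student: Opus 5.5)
This is a classical fact (Golub--Van Loan), so the plan is a direct linear-algebra computation that reduces $\dist(\mX,\mY)$ to the principal angles of Definition~\ref{df:PrincipalAngles}.

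\textbf{Step 1: complete the bases.} Let $\mX_\perp, \mY_\perp \in \R^{m \times (m-r)}$ complete $\mX$ and $\mY$ to orthogonal matrices $[\mX,\mX_\perp]$ and $[\mY,\mY_\perp]$. Then $\mX\mX^T - \mY\mY^T = \mY_\perp\mY_\perp^T - \mX_\perp\mX_\perp^T$. Conjugating by the orthogonal matrices preserves the spectral norm, and this gives the standard identity
\[
\|\mX\mX^T - \mY\mY^T\|_2 = \|\mX^T \mY_\perp\|_2 = \|\mX_\perp^T \mY\|_2 .
\]
To derive it concretely, I write $P = \mX\mX^T$ and $R = \mY\mY^T$. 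Then $(P-R)^2 = P(I-R)P + (I-P)R(I-P)$, since the cross terms $PR(I-P)$ etc.\ combine appropriately. The two summands act on the orthogonal complementary subspaces $\mathrm{range}(P)$ and $\mathrm{range}(I-P)$. Hence
\[
\|P-R\|_2^2 = \max\bigl(\|P(I-R)P\|_2,\ \|(I-P)R(I-P)\|_2\bigr) = \max\bigl(\|\mX^T\mY_\perp\|_2^2,\ \|\mX_\perp^T\mY\|_2^2\bigr).
\]
The expansion of $(P-R)^2$ is the step I expect to need the most care. I would check it directly by expanding $P^2 - PR - RP + R^2$ and regrouping.

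\textbf{Step 2: express the blocks via principal angles.} Since $\mY$ has orthonormal columns,
\[
\mY^T\mY = \mY^T\mX\mX^T\mY + \mY^T\mX_\perp\mX_\perp^T\mY = I_r .
\]
Therefore $\mY^T\mX_\perp\mX_\perp^T\mY = I_r - (\mX^T\mY)^T(\mX^T\mY)$. With the SVD $\mX^T\mY = \mU\diag(\cos\theta_i)\mV^T$ from Definition~\ref{df:PrincipalAngles}, this matrix has eigenvalues $1-\cos^2\theta_i = \sin^2\theta_i$. So $\|\mX_\perp^T\mY\|_2 = \max_i \sin\theta_i$. The same argument with the roles of $\mX$ and $\mY$ swapped gives $\|\mX^T\mY_\perp\|_2 = \max_i\sin\theta_i$, because $\mY^T\mX$ has the same singular values as $\mX^T\mY$.

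\textbf{Step 3: conclude.} Combining Steps 1 and 2 gives $\dist(\mX,\mY) = \max_i \sin\theta_i$. This route avoids the full CS decomposition, which would also give the result directly.
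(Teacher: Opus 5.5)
Your proof is correct and complete. The identity $(P-R)^2 = P(I-R)P + (I-P)R(I-P)$ holds because both sides equal $P+R-PR-RP$. It gives $\|P-R\|_2^2 = \max\bigl(\|\mX^T\mY_\perp\|_2^2,\|\mX_\perp^T\mY\|_2^2\bigr)$, and the relation $\mY^T\mX_\perp\mX_\perp^T\mY = \mI_r - (\mX^T\mY)^T\mX^T\mY$ together with its mirror image pins both blocks to $\max_i\sin\theta_i$. This is the same $\mC^T\mC + \mS^T\mS = \mI_r$ computation the paper carries out in Lemma~\ref{lemm:sin-theta-rank-r}.

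The paper gives no proof of its own here, only the citation to Golub--Van Loan. Your route is essentially the textbook one: conjugating by the completed orthogonal bases yields off-diagonal blocks $\mX^T\mY_\perp$ and $-\mX_\perp^T\mY$, whose norms are equated by the same column-orthonormality argument. The only thing to tidy is that the equality $\|\mX^T\mY_\perp\|_2 = \|\mX_\perp^T\mY\|_2$ in your first display follows from Step~2, not Step~1, so state it there.
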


\begin{lemma}[Weyl's inequality~{\citep[Chapter~7]{matrixanalysisbook}}]
\label{lemma:wely}
Let $\mA, \mB \in \R^{m \times n}$ be two matrices and let 
$q = \min(m,n)$. Let $\sigma_1(\mA) \ge \ldots \ge \sigma_q(\mA)$ and $\sigma_1(\mB) \ge \ldots \ge \sigma_q(\mB)$ denote the singular values of $\mA$ and $\mB$, respectively. 
Then for any indices $1\le i, j \le q$ and $i+j \le q+1$, we have:
\[
\sigma_{i+j-1}(\mA + \mB) \le \sigma_i(\mA) + \sigma_j(\mB) \;.
\]
Consequently, for any $1 \le i \le q$, it holds that:
\[
|\sigma_i(\mA + \mB) - \sigma_i(\mA)| \le \sigma_1(\mB) = \|\mB\|_2, \;.
\]
\end{lemma}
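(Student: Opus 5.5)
The plan is to reduce everything to the variational (Courant--Fischer) characterization of singular values. For $\mA \in \R^{m \times n}$ and $1 \le k \le q$ we will use
\[
\sigma_k(\mA) = \min_{\substack{\cS \subseteq \R^n \\ \dim \cS = n-k+1}} \; \max_{\xx \in \cS,\, \|\xx\|_2 = 1} \|\mA \xx\|_2 \;.
\]
We only need the ``$\le$'' direction, namely $\sigma_k(\mA) \le \max_{\xx \in \cS, \|\xx\|_2=1}\|\mA\xx\|_2$ for \emph{every} subspace $\cS$ of dimension at least $n-k+1$. We also need that the minimum is attained by an explicit subspace.

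\textbf{Explicit minimizer.} Let $\mA = \sum_{t} \sigma_t(\mA)\, \uu_t \mathbf{v}_t^T$ be a full SVD, where $\mathbf{v}_1,\dots,\mathbf{v}_n$ is an orthonormal basis of $\R^n$ and $\sigma_t(\mA) := 0$ for $t > q$. Take $\cS_\mA^{(i)} = \operatorname{span}\{\mathbf{v}_i,\dots,\mathbf{v}_n\}$, which has dimension $n-i+1$. For a unit vector $\xx = \sum_{t\ge i} x_t \mathbf{v}_t$ in this subspace,
\[
\|\mA\xx\|_2^2 = \sum_{t \ge i} \sigma_t(\mA)^2 x_t^2 \le \sigma_i(\mA)^2 \;.
\]
So the maximum of $\|\mA\xx\|_2$ over $\cS_\mA^{(i)}$ is exactly $\sigma_i(\mA)$.

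\textbf{Lower bound.} Conversely, let $\cS$ have dimension at least $n-k+1$. It must meet $\operatorname{span}\{\mathbf{v}_1,\dots,\mathbf{v}_k\}$ nontrivially, since the two dimensions sum to more than $n$. A unit vector $\xx$ in that intersection satisfies $\|\mA\xx\|_2 \ge \sigma_k(\mA)$. This is the one genuinely nontrivial step, and it is just a dimension count.

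\textbf{Main inequality.} Fix $i, j$ with $i+j \le q+1$, and set $\cS = \cS_\mA^{(i)} \cap \cS_\mB^{(j)}$. By the dimension formula,
\[
\dim \cS \ge (n-i+1) + (n-j+1) - n = n - (i+j-1) + 1 \;.
\]
The condition $i+j \le q+1$ guarantees that $k := i+j-1 \le q$ is a valid index. For every unit $\xx \in \cS$, the triangle inequality gives
\[
\|(\mA+\mB)\xx\|_2 \le \|\mA\xx\|_2 + \|\mB\xx\|_2 \le \sigma_i(\mA) + \sigma_j(\mB) \;.
\]
Applying the lower-bound step to $\mA+\mB$ with this $\cS$ then yields $\sigma_{i+j-1}(\mA+\mB) \le \sigma_i(\mA) + \sigma_j(\mB)$.

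\textbf{Consequence.} Setting $j=1$ gives $\sigma_i(\mA+\mB) - \sigma_i(\mA) \le \sigma_1(\mB) = \|\mB\|_2$. Applying the same inequality to the pair $(\mA+\mB, -\mB)$, and using $\sigma_1(-\mB) = \sigma_1(\mB)$, gives $\sigma_i(\mA) - \sigma_i(\mA+\mB) \le \|\mB\|_2$. Together these give $|\sigma_i(\mA+\mB) - \sigma_i(\mA)| \le \|\mB\|_2$.

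The only points needing care are the padding of zero singular values when $m < n$, and checking that the index $i+j-1$ stays within $[1,q]$. Neither is a real obstacle.
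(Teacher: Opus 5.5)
Your proof is correct. The paper does not prove this lemma; it only cites it from Horn and Johnson, so there is no in-paper argument to compare against.

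What you give is the standard Courant--Fischer proof. You make it self-contained by proving directly the two halves of min--max that you need: the explicit subspace $\operatorname{span}\{\mathbf{v}_i,\dots,\mathbf{v}_n\}$ on which $\|\mA\xx\|_2\le\sigma_i(\mA)$, and the dimension-count lower bound. Your bookkeeping is also right. Since $i+j-1\le q\le n$, the intersection $\cS_\mA^{(i)}\cap\cS_\mB^{(j)}$ has positive dimension. Padding with zero singular values handles $m<n$. Applying the $j=1$ case to the pair $(\mA+\mB,-\mB)$ gives the two-sided perturbation bound.

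An equally common textbook route uses the symmetric dilation $\left(\begin{smallmatrix}\mathbf{0} & \mA \\ \mA^T & \mathbf{0}\end{smallmatrix}\right)$. This is linear in $\mA$, and its $q$ largest eigenvalues are $\sigma_1(\mA),\dots,\sigma_q(\mA)$. One then invokes the eigenvalue version of Weyl's inequality. That route is shorter if the Hermitian result is taken as known. Yours avoids the detour and works directly with $\|\mA\xx\|_2$, at no extra cost.
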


\begin{lemma}
	\label{lemma:trace-inequality}
Let $\mA, \mB \in \R^{m \times m}$ be two symmetric matrices and suppose $\mA$ is positive semidefinite. 
Then it holds that: $\trace(\mA \mB) \le \trace(\mA) \lambda_{\max}(\mB)$.  
\end{lemma}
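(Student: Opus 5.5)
We prove the lemma by comparing $\mB$ with the scalar matrix $\lambda_{\max}(\mB)\mI$ and using that the trace of a product of two positive semidefinite matrices is nonnegative. Set $\mC \defeq \lambda_{\max}(\mB)\mI - \mB$. Since $\mB$ is symmetric, it has an orthonormal eigendecomposition, so the eigenvalues of $\mC$ are $\lambda_{\max}(\mB) - \lambda_i(\mB) \ge 0$ and $\mC$ is positive semidefinite. By linearity of the trace,
\[
\trace(\mA\mB) = \lambda_{\max}(\mB)\,\trace(\mA) - \trace(\mA\mC) \;,
\]
so it suffices to show that $\trace(\mA\mC) \ge 0$.

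For this step, let $\mA^{1/2}$ be the symmetric positive semidefinite square root of $\mA$, which exists because $\mA$ is positive semidefinite. By cyclicity of the trace,
\[
\trace(\mA\mC) = \trace(\mA^{1/2}\mC\mA^{1/2}) \;.
\]
The matrix $\mA^{1/2}\mC\mA^{1/2}$ is positive semidefinite, since $z^T\mA^{1/2}\mC\mA^{1/2}z = (\mA^{1/2}z)^T\mC(\mA^{1/2}z) \ge 0$ for every $z$. Its trace is the sum of its nonnegative eigenvalues and is therefore nonnegative, which proves the claim.

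As a cross-check, diagonalize $\mA = \sum_i a_i u_i u_i^T$ with $a_i \ge 0$. Then
\[
\trace(\mA\mB) = \sum_i a_i\, u_i^T \mB u_i \le \sum_i a_i\,\lambda_{\max}(\mB) = \trace(\mA)\,\lambda_{\max}(\mB) \;,
\]
where the inequality is the Rayleigh quotient bound $u_i^T\mB u_i \le \lambda_{\max}(\mB)$ for unit vectors $u_i$. Either route works. The only point needing care is the nonnegativity of $\trace(\mA\mC)$, which is why we pass through the square root $\mA^{1/2}$ (or the eigendecomposition of $\mA$) rather than arguing about the generally non-symmetric product $\mA\mC$ directly.
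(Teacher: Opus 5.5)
Your proof is correct, and your cross-check is exactly the paper's argument: eigendecompose $\mA = \sum_i \lambda_i(\mA) u_i u_i^T$ and apply the Rayleigh bound $u_i^T \mB u_i \le \lambda_{\max}(\mB)$ term by term.

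Your primary route differs slightly. You write $\lambda_{\max}(\mB)\mI - \mB \succeq 0$ and then show $\trace(\mA\mC) \ge 0$ for positive semidefinite $\mA, \mC$ by symmetrizing with $\mA^{1/2}$. This actually proves a somewhat stronger statement: $\trace(\mA\,\cdot\,)$ is monotone in the Loewner order whenever $\mA \succeq 0$, so $\mB \preceq \mB'$ implies $\trace(\mA\mB) \le \trace(\mA\mB')$. That is precisely the form in which the lemma is used in Lemma~\ref{lemma:bounded-anisotropy-rank-r}, where the hypothesis is $\E[\mU\mU^T] \preceq \frac{\kappa r}{m}\mI$. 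The paper's route is more elementary, since it needs no matrix square root, only the spectral theorem for $\mA$ and the Rayleigh quotient.
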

\begin{proof}
Let $\mA = \sum_{i=1}^m \lambda_i(\mA) \uu_i \uu_i^T$ be the eigenvalue decomposition of $\mA$. Then we have:
\[
\trace(\mA \mB) = \sum_{i=1}^n \lambda_i(\mA) \trace(\uu_i \uu_i^T \mB)
= \sum_{i=1}^n \lambda_i(\mA) \uu_i^T \mB \uu_i 
\le \sum_{i=1}^n \lambda_i(\mA) \lambda_{\max}(\mB)
= \trace(\mA) \lambda_{\max}(\mB) \;.
\qedhere
\]
\end{proof}

\begin{lemma}[Wedin $\sin \Theta$ Theorem~\citep{Wedin1972PerturbationBI,Stewart1990MatrixPT}]
\label{lemma:wedin}
Let $\mA, \mE \in \R^{m \times n}$ be two matrices and let $\hat{\mA} = \mA + \mE$.
Let $q = \min(m,n)$.
Let $\sigma_1(\mA) \ge \ldots \ge \sigma_q(\mA)$ and $\sigma_1(\hat{\mA}) \ge \ldots \ge \sigma_q(\hat{\mA})$ denote the singular values of $\mA$ and $\hat{\mA}$, respectively. 
Suppose that there exists an index $r \in [1, q)$ such that $\sigma_r(\mA) - \sigma_{r+1}(\hat{\mA}) > 0$.
Then, it holds that:
\[
\max \bigl\{ \dist(\mU, \hat{\mU}), \dist(\mV, \hat{\mV}) \bigr\} 
\le \frac{\max\{ \|\mE \mV\|_2, \|\mE^T \mU\|_2 \}}{
	\sigma_r(\mA) - \sigma_{r+1}(\hat{\mA})}
\le \frac{\| \mE \|_2}{\sigma_r(\mA) - \sigma_{r+1}(\hat{\mA})} \;,
\]
where $\mU, \hat{\mU} \in \R^{m \times r}$, 
$\mV, \hat{\mV} \in \R^{n \times r}$
are the matrices consisting of the top $r$ left and right singular vectors of $\mA$ and $\hat{\mA}$, respectively.
Consequently, if $\sigma_{r}(\mA) > \sigma_{r+1}(\mA) + \|\mE\|_2$, 
then by Weyl's inequality, we have:
\[
\max \bigl\{ \dist(\mU, \hat{\mU}), \dist(\mV, \hat{\mV}) \bigr\} 
\le 
\frac{\|\mE\|_2}{\sigma_r(\mA) - \sigma_{r+1}(\mA) - \|\mE\|_2} \;.
\]
\end{lemma}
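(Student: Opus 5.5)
The inequality is the classical Wedin $\sin\Theta$ bound, and I would prove it directly from two "residual" identities for the perturbed singular subspaces. I would use full SVDs
\[
\mA = \mU \mS \mV^T + \mU_{\perp} \mS_{\perp} \mV_{\perp}^T,
\qquad
\hat{\mA} = \hat{\mU} \hat{\mS} \hat{\mV}^T + \hat{\mU}_{\perp} \hat{\mS}_{\perp} \hat{\mV}_{\perp}^T .
\]
Here $\mU,\hat{\mU}\in\R^{m\times r}$ and $\mV,\hat{\mV}\in\R^{n\times r}$ hold the top-$r$ singular vectors, and $\mS=\diag(\sigma_1(\mA),\ldots,\sigma_r(\mA))$. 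The matrices $\hat{\mU}_\perp$ and $\hat{\mV}_\perp$ complete $\hat{\mU}$ and $\hat{\mV}$ to orthogonal matrices. The block $\hat{\mS}_\perp\in\R^{(m-r)\times(n-r)}$ is rectangular-diagonal with $\|\hat{\mS}_\perp\|_2=\sigma_{r+1}(\hat{\mA})$.

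The first step converts the subspace distance into a norm of a cross product. By Lemma~\ref{lemma:DistPrincipalAngles}, $\dist(\mU,\hat{\mU})=\max_i\sin\theta_i$. Since the singular values of $\hat{\mU}^T\mU$ are the $\cos\theta_i$, the identity $\mU^T\hat{\mU}\hat{\mU}^T\mU+\mU^T\hat{\mU}_\perp\hat{\mU}_\perp^T\mU=\mI_r$ gives
\[
\|\hat{\mU}_\perp^T\mU\|_2^2 = 1-\cos^2\theta_{\max} = \sin^2\theta_{\max}.
\]
Hence $\dist(\mU,\hat{\mU})=\|\hat{\mU}_\perp^T\mU\|_2 =: x$, and likewise $\dist(\mV,\hat{\mV})=\|\hat{\mV}_\perp^T\mV\|_2 =: y$.

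The second step derives the two identities. From $\hat{\mU}_\perp^T\hat{\mA}=\hat{\mS}_\perp\hat{\mV}_\perp^T$ and $\mA\mV=\mU\mS$, multiplying $\hat{\mA}=\mA+\mE$ on the left by $\hat{\mU}_\perp^T$ and on the right by $\mV$ gives
\[
\hat{\mU}_\perp^T\mU\mS = \hat{\mS}_\perp\hat{\mV}_\perp^T\mV - \hat{\mU}_\perp^T\mE\mV .
\]
Symmetrically, from $\hat{\mA}^T\hat{\mU}_\perp=\hat{\mV}_\perp\hat{\mS}_\perp^T$ and $\mA^T\mU=\mV\mS$ we get
\[
\hat{\mV}_\perp^T\mV\mS = \hat{\mS}_\perp^T\hat{\mU}_\perp^T\mU - \hat{\mV}_\perp^T\mE^T\mU .
\]
The hypothesis $\sigma_r(\mA)>\sigma_{r+1}(\hat{\mA})\ge 0$ makes $\mS$ invertible with $\|\mS^{-1}\|_2=1/\sigma_r(\mA)$. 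Therefore $\|\mM\mS\|_2\ge\sigma_r(\mA)\|\mM\|_2$ for any $\mM$ with $r$ columns. Taking spectral norms of both identities yields
\[
\sigma_r(\mA)\,x \le \sigma_{r+1}(\hat{\mA})\,y + \|\mE\mV\|_2,
\qquad
\sigma_r(\mA)\,y \le \sigma_{r+1}(\hat{\mA})\,x + \|\mE^T\mU\|_2 .
\]
Let $t=\max\{x,y\}$ and take the inequality whose left side equals $\sigma_r(\mA)\,t$. This gives $\sigma_r(\mA)\,t\le\sigma_{r+1}(\hat{\mA})\,t+\max\{\|\mE\mV\|_2,\|\mE^T\mU\|_2\}$. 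Dividing by the positive gap $\sigma_r(\mA)-\sigma_{r+1}(\hat{\mA})$ gives the first bound. The second bound follows from $\|\mE\mV\|_2,\|\mE^T\mU\|_2\le\|\mE\|_2$, since $\mU$ and $\mV$ have orthonormal columns.

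For the final consequence, Weyl's inequality (Lemma~\ref{lemma:wely}) gives $\sigma_{r+1}(\hat{\mA})\le\sigma_{r+1}(\mA)+\|\mE\|_2$. Under the stated assumption this shows $\sigma_r(\mA)-\sigma_{r+1}(\hat{\mA})\ge\sigma_r(\mA)-\sigma_{r+1}(\mA)-\|\mE\|_2>0$. So the main bound applies, and replacing the denominator by this smaller positive quantity gives the claim.

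The step needing the most care is the rectangular bookkeeping in the second step. The block $\hat{\mS}_\perp$ is $(m-r)\times(n-r)$ rather than square, so I must check that both residual identities typecheck and that $\|\hat{\mS}_\perp\|_2$ is exactly $\sigma_{r+1}(\hat{\mA})$. Zero singular values appear in the padded diagonal but do not affect its norm. I must also confirm that $\dist(\mU,\hat{\mU})$ is measured by $\hat{\mU}_\perp$ and not by $\mU_\perp$; both give the same value for equal-dimensional subspaces.
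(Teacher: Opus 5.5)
Your proof is correct. The paper gives no derivation of the main inequality: it imports it from Wedin (1972) and Stewart--Sun (1990). The only argument the paper supplies is for the final consequence, via Weyl's inequality $\sigma_{r+1}(\hat{\mA}) \le \sigma_{r+1}(\mA) + \|\mE\|_2$, exactly as in your last step.

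What you add is a self-contained proof of the cited bound, and it is the standard one. Your two identities come from sandwiching $\hat{\mA} = \mA + \mE$ between $\hat{\mU}_\perp^T$ and $\mV$ (resp.\ $\hat{\mV}_\perp^T$ and $\mU$). They are Wedin's residual equations, with residuals $\hat{\mA}\mV - \mU\mS = \mE\mV$ and $\hat{\mA}^T\mU - \mV\mS = \mE^T\mU$ and separation $\sigma_r(\mA) - \sigma_{r+1}(\hat{\mA})$. The coupled inequalities then close by passing to the maximum.

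Your route shows transparently why the gap pairs the unperturbed $\sigma_r(\mA)$ with the perturbed $\sigma_{r+1}(\hat{\mA})$: it is dictated by projecting onto $\hat{\mU}_\perp$, $\hat{\mV}_\perp$ and testing against $\mU$, $\mV$. It also makes clear that the bound holds for any admissible choice of top-$r$ singular vectors when singular values tie.

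The citation buys brevity. In Stewart--Sun's formulation it also gives the analogous bound on $\sin\Theta$ in every unitarily invariant norm. Your argument extends to that case unchanged, since it only uses $\|\mM\mS\| \ge \sigma_r(\mA)\|\mM\|$ and $\|\hat{\mS}_\perp \mB\| \le \sigma_{r+1}(\hat{\mA})\|\mB\|$.
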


\begin{lemma}
	\label{lemma:spectral-projection}
Let $\mA \in \R^{m \times n}$ be a matrix. 
Then for any $D > 0$, it holds that:
\[
\argmin_{\|\mX\|_2 \le D} \{ \| \mX - \mA \|_F^2\} 
= \operatorname{Proj}_{\|\mX\|_2 \le D}(\mA)
=
\clipSp_D(\mA) \;.
\]
\end{lemma}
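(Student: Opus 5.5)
The plan is to reduce the problem to a decoupled problem on singular values using unitary invariance. Take a full SVD $\mA = \mU \mS \mV^T$, with $\mU \in \R^{m\times m}$ and $\mV \in \R^{n\times n}$ orthogonal. For any $\mX$, set $\mY \defeq \mU^T \mX \mV$. Then $\|\mX - \mA\|_F = \|\mY - \mS\|_F$ and $\|\mX\|_2 = \|\mY\|_2$. So it suffices to show that the unique minimizer of $\|\mY - \mS\|_F^2$ over $\|\mY\|_2 \le D$ is $\diag(\clip_D(\mS_{11}),\ldots,\clip_D(\mS_{qq}))$. Conjugating back then gives exactly $\clipSp_D(\mA)$. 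Uniqueness is automatic: the objective is strictly convex and the spectral-norm ball is closed and convex, so the projection exists, is unique, and the argmin coincides with $\operatorname{Proj}$.

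For the reduced problem, I would lower-bound the objective by a diagonal one. Writing $s_i = \mS_{ii} \ge 0$,
\[
\|\mY - \mS\|_F^2 \ge \sum_{i=1}^q (\mY_{ii} - s_i)^2 .
\]
This holds because we simply drop the off-diagonal entries of $\mY$, which appear squared in the Frobenius norm.

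Next I would use the spectral-norm constraint on the diagonal entries. Since $|\mY_{ii}| = |e_i^T \mY e_i| \le \|\mY\|_2 \le D$, each term obeys
\[
(\mY_{ii} - s_i)^2 \ge \min_{|t|\le D}(t - s_i)^2 = (s_i - \min(s_i,D))^2 .
\]
This bound is attained by the diagonal matrix with entries $\min(s_i, D) = \clip_D(s_i)$, and that matrix is feasible because its spectral norm is $\max_i \min(s_i,D) \le D$. So the diagonal clipped matrix is optimal, and by uniqueness it is the projection.

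The only potential subtlety is the bookkeeping between the compact SVD used in the definition of $\clipSp$ and the full SVD used above. The extra zero singular values clip to zero, so nothing changes. I also need to check that the clipped output does not depend on which SVD is chosen when singular values repeat. This follows because $\clipSp_D$ is a spectral function, or more directly from the uniqueness of the projection already established. I do not expect a real obstacle: the key inequality $|\mY_{ii}| \le \|\mY\|_2$ is elementary.
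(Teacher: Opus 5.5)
Your proposal is correct and follows the paper's proof: both use a full SVD and unitary invariance to reduce to $\min_{\|\mY\|_2\le D}\|\mY-\mS\|_F^2$, and both get uniqueness from strict convexity. Your diagonal lower bound via $|\mY_{ii}|\le\|\mY\|_2$ supplies the justification for the step the paper only asserts as ``clear,'' and your remark that uniqueness makes $\clipSp_D$ independent of the chosen SVD is a worthwhile addition.
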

\begin{proof}
Since the objective function is continous and the set is compact, the optimal solution exists.
Let $\mA = \mU \mS \mV^T$ be the full singular value decomposition of $\mA$,
where $\mS \in \R^{m \times n}$ is a rectangular diagonal matrix
with $\mS_{ii} = \sigma_i$ for $i = 1, \ldots, q$ and $q \defeq \min(m,n)$,
and $\mU \in \R^{m \times m}$, $\mV \in \R^{n \times n}$ are orthogonal matrices.
Then, by the unitary invariance of the Frobenius norm, we have: 
$
\argmin_{\|\mX\|_2 \le D} \{ \| \mX - \mA \|_F^2\}
= \argmin_{\|\mX\|_2 \le D} \{ \|\mU^T \mX \mV  - \mS \|_F^2\} 
$. Let $\mY = \mU^T \mX \mV$. Since $\|\mY\|_2 = \|\mX\|_2$ and 
$\mX = \mU \mY \mV^T$, the transformation is bijective. 
Thus, the previous optimization problem is equivalent to $\argmin_{\|\mY\|_2 \le D} \{\| \mY - \mS \|_F^2\}$. It is clear that the optimal solution is 
$\mY^\star$ is a rectangular diagonal matrix with $\mY^\star_{ii} = \min(\sigma_i, D)$ for $i = 1, \ldots, q$. It follows that $\mX^\star = \mU \mY^\star \mV^T = \clipSp_D(\mA)$. Finally, since the objective is strictly convex and the feasible set is convex, the optimal solution is unique.
\end{proof}

\begin{lemma}
	\label{lemma:bounded-anisotropy-rank-r}
Let $\mG \in \R^{m \times n}$ be a fixed matrix. 
Let $\mU \in \R^{m \times r}$, $\mV \in \R^{n \times r}$ be two independent random matrices that 
satisfy Definition~\ref{df:noise-rank-r} with parameter $\kappa$.
Then, it holds that:
\[
\E[\|\mU^T \mG \mV\|_F] \le \frac{\kappa r}{\sqrt{mn}}\|\mG\|_F ,
\quad 
\E[\|\mG^T \mU\|_F] \le  \sqrt{\frac{\kappa r}{m}} \|\mG\|_F,
\quad 
\text{and}
\quad 
\E[\| \mG \mV \|_F] \le \sqrt{\frac{\kappa r}{n}} \|\mG\|_F \;.
\]
\end{lemma}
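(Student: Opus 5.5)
The plan is to prove the three bounds one at a time, always in the same way: pass from the $\ell_1$-type quantity $\E[\|\cdot\|_F]$ to a second moment by Jensen, $\E[\|\mA\|_F] \le \sqrt{\E[\|\mA\|_F^2]}$. Each second moment is then written as a trace involving $\E[\mU\mU^T]$ or $\E[\mV\mV^T]$. These traces are bounded using the anisotropy condition of Definition~\ref{df:noise-rank-r} together with Lemma~\ref{lemma:trace-inequality}.

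\textbf{The one-sided terms.} We have $\|\mG^T\mU\|_F^2 = \trace(\mG^T \mU\mU^T \mG) = \trace(\mG\mG^T \mU\mU^T)$. Taking expectations, with $\mG$ fixed, gives $\trace(\mG\mG^T\,\E[\mU\mU^T])$. Lemma~\ref{lemma:trace-inequality} applies with $\mA = \mG\mG^T \succeq 0$ and $\mB = \E[\mU\mU^T]$, whose largest eigenvalue is at most $\kappa r/m$. This yields $\E\|\mG^T\mU\|_F^2 \le \frac{\kappa r}{m}\|\mG\|_F^2$, and Jensen gives the second claim. The bound on $\|\mG\mV\|_F$ follows symmetrically, using $\trace(\mG^T\mG\,\E[\mV\mV^T]) \le \frac{\kappa r}{n}\|\mG\|_F^2$.

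\textbf{The two-sided term.} We write $\|\mU^T\mG\mV\|_F^2 = \trace(\mU\mU^T \mG \mV\mV^T \mG^T)$. Conditioning on $\mV$ and using independence, the expectation over $\mU$ gives $\trace(\E[\mU\mU^T]\,\mG\mV\mV^T\mG^T)$. By Lemma~\ref{lemma:trace-inequality}, applied to the PSD matrix $\mG\mV\mV^T\mG^T$, this is at most $\frac{\kappa r}{m}\|\mG\mV\|_F^2$. Taking expectation over $\mV$ and reusing the one-sided bound then gives
\[
\E\|\mU^T\mG\mV\|_F^2 \le \frac{\kappa^2 r^2}{mn}\|\mG\|_F^2 .
\]
Jensen finishes the first claim.

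\textbf{Main obstacle.} The only delicate point is the two-sided term. The product of the two anisotropy factors must be justified through independence via the tower property, and the order of conditioning must keep the relevant matrices PSD so that Lemma~\ref{lemma:trace-inequality} applies at each stage. Everything else is routine.
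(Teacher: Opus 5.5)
Your proposal is correct and follows essentially the same route as the paper: Jensen passes to second moments, these are rewritten as traces against $\E[\mU\mU^T]$ and $\E[\mV\mV^T]$, and Lemma~\ref{lemma:trace-inequality} is applied (twice for the two-sided term). The only cosmetic difference is in the two-sided term: the paper uses independence to factor $\E[\trace(\mG\mV\mV^T\mG^T\mU\mU^T)] = \trace(\mG\,\E[\mV\mV^T]\,\mG^T\,\E[\mU\mU^T])$ in one step, whereas you condition on $\mV$ first, which amounts to the same argument.
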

\begin{proof}
By Jensen's inequality, we have: 
\begin{align*}
\E[\|\mU^T \mG \mV\|_F]
&\le  
\sqrt{\E[\|\mU^T \mG \mV\|_F^2]}
=\sqrt{\E[ \trace(\mV^T \mG^T \mU \mU^T \mG \mV) ]}
=\sqrt{\E[ \trace(\mG \mV \mV^T \mG^T \mU \mU^T) ]}
\\
&=\sqrt{\trace(\mG \E[ \mV \mV^T ] \mG^T \E[\mU \mU^T])}
\le\sqrt{\frac{\kappa r}{n} \frac{\kappa r}{m} \trace(\mG \mG^T))}
=\frac{\kappa r}{\sqrt{mn}} \|\mG\|_F \;,
\end{align*}
where in the last inequality, we used Lemma~\ref{lemma:trace-inequality} twice.
We next show the second inequality:
\begin{align*}
\E[\| \mG^T \mU \|_F]
&\le \sqrt{\E[\| \mG^T \mU \|_F^2]}
= \sqrt{\E[\trace(\mU^T \mG \mG^T \mU)]}
= \sqrt{\trace(\mG \mG^T \E[\mU \mU^T])}
\\
&\le \sqrt{\frac{\kappa r}{m} \trace(\mG \mG^T)}
= \sqrt{\frac{\kappa r}{m}} \|\mG\|_F \;.
\end{align*}
The third inequality can be proved similarly.
\end{proof}

\begin{lemma}
	\label{lemm:sin-theta-rank-r}
Let $\mU, \mV \in \R^{m \times r}$ be two matrices with orthonormal columns. 
Consider any decomposition of the form: $\mV = \mU \mC + \mU^{\perp} \mS$ where 
$\mC=\mU^T \mV$, $\mS \in \R^{r \times r}$ and $\mU^{\perp} \in \R^{m \times r}$ has orthonormal columns and satisfies $(\mU^{\perp})^T \mU = 0$. It holds that: 
$\|\mS\|_2 = \dist(\mU, \mV)$ and $\mC^T \mC + \mS^T \mS = \mI_r$.
\end{lemma}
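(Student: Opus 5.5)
The plan is to reduce everything to the orthogonal projector $\mP \defeq \mU\mU^T$ onto the column space of $\mU$ and the projector $\mP^{\perp} \defeq \mU^{\perp}(\mU^{\perp})^T$ onto the span of the columns of $\mU^{\perp}$. The decomposition $\mV = \mU\mC + \mU^{\perp}\mS$ is assumed to hold, with $\mC = \mU^T\mV$. Multiplying it on the left by $(\mU^{\perp})^T$ and using $(\mU^{\perp})^T\mU = 0$ together with $(\mU^{\perp})^T\mU^{\perp} = \mI_r$ gives $\mS = (\mU^{\perp})^T\mV$. Consequently $\mU^{\perp}\mS = \mV - \mU\mU^T\mV = (\mI_m - \mP)\mV$, and since $\mU^{\perp}$ has orthonormal columns, $\|\mS\|_2 = \|\mU^{\perp}\mS\|_2 = \|(\mI - \mP)\mV\|_2$.

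For the second identity, I would compute $\mI_r = \mV^T\mV$ directly from the decomposition. The two cross terms vanish because $(\mU^{\perp})^T\mU = 0$. The diagonal terms are $\mC^T\mU^T\mU\mC = \mC^T\mC$ and $\mS^T(\mU^{\perp})^T\mU^{\perp}\mS = \mS^T\mS$. Together these give $\mC^T\mC + \mS^T\mS = \mI_r$ immediately.

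For the first identity, I need $\|(\mI - \mU\mU^T)\mV\|_2 = \|\mU\mU^T - \mV\mV^T\|_2$. This is where the real content of the lemma sits. The cleanest route is through principal angles and Lemma~\ref{lemma:DistPrincipalAngles}, which gives $\dist(\mU,\mV) = \max_i \sin\theta_i$, where the $\cos\theta_i$ are the singular values of $\mC = \mU^T\mV$ (Definition~\ref{df:PrincipalAngles}). From the second identity, $\mS^T\mS = \mI_r - \mC^T\mC$. So the eigenvalues of $\mS^T\mS$ are $1 - \cos^2\theta_i = \sin^2\theta_i$, which yields $\|\mS\|_2 = \max_i \sin\theta_i = \dist(\mU,\mV)$.

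The only subtle step is applying Lemma~\ref{lemma:DistPrincipalAngles} correctly: both $\mU$ and $\mV$ have $r$ orthonormal columns, as that lemma requires. If a self-contained argument is preferred instead, the fallback is the standard identity $\|\mP - \mQ\|_2 = \|(\mI - \mP)\mQ\|_2$ for two rank-$r$ orthogonal projectors $\mP$ and $\mQ \defeq \mV\mV^T$ of equal rank. Its proof uses the fact that $(\mP - \mQ)^2$ commutes with both projectors and splits into pieces on $\operatorname{range}(\mQ)$ and $\operatorname{range}(\mQ)^{\perp}$, the latter controlled by $\|(\mI - \mQ)\mP\|_2$. The two pieces have equal norms because $\mU$ and $\mV$ have the same number of columns. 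I expect the principal-angle route to suffice.
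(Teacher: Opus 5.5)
Your proof is correct. Your second identity is proved exactly as in the paper: expand $\mV^T\mV$ and the cross terms vanish. For the first identity you take a different and shorter route.

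The paper takes the SVD $\mC = \mP\cos(\Theta)\mQ^T$ (here $\mP,\mQ$ are singular-vector matrices, not your projectors) and rotates to principal bases $\hat{\mU} = \mU\mP$, $\hat{\mV} = \mV\mQ$. It then builds an auxiliary orthonormal $\hat{\mU}^{\perp}$ whose columns are $\mR_i/\sin\theta_i$, where $\mR = \hat{\mV} - \hat{\mU}\cos(\Theta)$, so that $\hat{\mV} = \hat{\mU}\cos(\Theta) + \hat{\mU}^{\perp}\sin(\Theta)$. From this it computes $\|\mS\|_2 = \|\mV - \mU\mC\|_2 = \|\hat{\mU}^{\perp}\sin(\Theta)\|_2 = \|\sin(\Theta)\|_2$, and proves the Gram identity only afterwards.

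You reverse the order. Proving $\mC^T\mC + \mS^T\mS = \mI_r$ first gives $\|\mS\|_2^2 = \lambda_{\max}(\mI_r - \mC^T\mC) = 1 - \sigma_{\min}(\mC)^2 = \max_i \sin^2\theta_i$ directly, with no auxiliary basis.

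What your route buys: it uses nothing beyond the given decomposition. It also avoids dividing by $\sin\theta_i$, which in the paper's construction needs a separate convention whenever some $\theta_i = 0$.

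What the paper's route buys: an explicit CS-type canonical form of $\mV$ relative to $\mU$, which is geometrically illuminating. It is not needed later, though; the proof of Lemma~\ref{lemma:sp-clip-large-ell-rank-r} only uses the SVD of $\mC_U$.

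Both arguments rely on Lemma~\ref{lemma:DistPrincipalAngles} to identify $\max_i\sin\theta_i$ with $\dist(\mU,\mV)$. Your projector-based fallback would remove even that dependence, but as you note it is unnecessary. Your opening computation of $\mS = (\mU^{\perp})^T\mV$ and $\|\mS\|_2 = \|(\mI - \mU\mU^T)\mV\|_2$ is harmless but plays no role in the principal-angle route.
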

\begin{proof}
The decomposition always exists since $\mV = \mI \mV = \mU \mU^T \mV + 
(\mI - \mU \mU^T) \mV$. 
Let $\mC = \mP \cos(\Theta) \mQ^T$ be the SVD of $\mC$ where $\cos(\Theta) \defeq 
\diag(\cos(\theta_1), \ldots, \cos(\theta_r))$. 
By Definition~\ref{df:PrincipalAngles},
$\{ \theta_i \}_{i=1}^r$ are the principal angles between the subspaces spanned by $\mU$ and $\mV$. 
Let $\hat{\mU} \defeq \mU \mP$ and $\hat{\mV} \defeq \mV \mQ$. 
It holds that $\hat{\mU}^T \hat{\mV} = \cos(\Theta)$. 
Let $\mR = \hat{\mV} - \hat{\mU} \cos(\Theta)$. Then $\mR^T \mR = \sin^2(\Theta)$ and thus there exists 
$\hat{\mU}^{\perp} \in \R^{m \times r}$ such that:
\[
\hat{\mV} = \hat{\mU} \cos(\Theta) + \hat{\mU}^{\perp} \sin(\Theta) \;, 
\]
where the $i$-th column $\hat{\mU}_i^{\perp} = \frac{\mR_i}{\sin(\theta_i)}$
and $\sin(\Theta) \defeq \diag(\sin(\theta_1), \ldots, \sin(\theta_r))$.
It follows that: 
\begin{align*}
\|\mS\|_2 &= \|\mU^{\perp} \mS\|_2 = \|\mV - \mU \mC\|_2 
= \|\mV - \mU \mP \cos(\Theta) \mQ^T\|_2
=\| \mV\mQ - \mU \mP \cos(\Theta)\|_2 = \|\hat{\mV} - \hat{\mU} \cos(\Theta)\|_2 
\\
&= \|\hat{\mU}^{\perp} \sin(\Theta)\|_2 = \|\sin(\Theta)\|_2 = \max_i \{ \sin(\theta_i) \} = \dist(\mU, \mV) \;,
\end{align*}
where the last equality follows from Lemma~\ref{lemma:DistPrincipalAngles}.
For the second claim, we have:
\[
\mI_r = \mV^T \mV 
= (\mU \mC + \mU^{\perp} \mS)^T (\mU \mC + \mU^{\perp} \mS)
= \mC^T \mC + \mS^T \mS \;.
\qedhere
\]
\end{proof}

\begin{lemma}[Chebyshev's sum inequality]
	\label{lemma:chebyshev-sum-inequality}
Let $f: \R \to \R$ and $h: \R \to \R$ be two functions and let $X \in \R$ be a random variable. If $f$ is non-increasing and $h$ is non-decreasing, then it holds that:
\[
\E[f(X) h(X)] \le \E[f(X)] \E[h(X)] \;.
\]
\end{lemma}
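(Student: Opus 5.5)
The plan is the standard symmetrization (coupling) argument with an independent copy of $X$. Let $Y$ be a random variable independent of $X$ with the same distribution. Throughout I assume the expectations $\E[f(X)]$, $\E[h(X)]$ and $\E[f(X)h(X)]$ are finite, which is implicit in the statement. This holds, for instance, whenever $f$ and $h$ are bounded on the support of $X$, as in the intended applications.

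The first step is a pointwise sign observation. For any real numbers $x,y$, the product
\[
\bigl(f(x)-f(y)\bigr)\bigl(h(x)-h(y)\bigr)
\]
is non-positive. If $x \ge y$, then $f(x)\le f(y)$ since $f$ is non-increasing, and $h(x)\ge h(y)$ since $h$ is non-decreasing. The case $x<y$ is symmetric, and the case $x=y$ gives zero. Hence $\bigl(f(X)-f(Y)\bigr)\bigl(h(X)-h(Y)\bigr)\le 0$ almost surely.

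The second step is to take expectations and expand the product into four terms: $\E[f(X)h(X)]$, $\E[f(Y)h(Y)]$, $-\E[f(X)h(Y)]$ and $-\E[f(Y)h(X)]$.
\begin{itemize}
\item Since $X$ and $Y$ are identically distributed, the first two terms are both equal to $\E[f(X)h(X)]$.
\item By independence, the cross terms factor as $\E[f(X)]\E[h(Y)] = \E[f(X)]\E[h(X)]$, and the same holds for the other cross term.
\end{itemize}
This yields $2\E[f(X)h(X)] - 2\E[f(X)]\E[h(X)] \le 0$, which is the claim.

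I do not expect a genuine obstacle; the argument is routine. The only points needing care are integrability, so that the expansion into four expectations is legitimate, and the existence of an independent copy. The latter is handled by working on the product probability space.
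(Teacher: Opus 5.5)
Your proposal is correct and follows essentially the same argument as the paper: take an independent copy $Y$ of $X$, note that $\bigl(f(X)-f(Y)\bigr)\bigl(h(X)-h(Y)\bigr)\le 0$, take expectations, and simplify the four terms using identical distribution and independence. Your explicit integrability assumption is a small point of care that the paper leaves implicit.
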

\begin{proof}
	Let $X$ and $Y$ be two independent random variables with the same distribution. 
	Then we always have $Q(X,Y) = (f(X) - f(Y)) (h(X) - h(Y)) \le 0$ by the assumptions on $f$ and $h$. 
	It follows that: $\E[Q(X,Y)] \le 0$. Substituting the definition of $Q(X,Y)$ and using
	\[ \E[f(X)h(Y)] =\E[f(Y)h(X)] = \E[f(X)]\E[h(X)]
	\quad 
	\text{and}
	\quad
	\E[f(Y)h(Y)]=\E[f(X)h(X)] \;,
	\]
	we get the claim.
\end{proof}

\begin{lemma}
\label{lemma:concave-function}
Let $f(x) = \frac{a^2 + x}{\sqrt{2x + a^2 + b^2}}$ where $0 < a \le b$. 
Then $f$ is concave over $x \in [ -ab, ab]$.
\end{lemma}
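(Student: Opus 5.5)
The natural route is a direct second-derivative computation, with the sign argument arranged so that it only needs $|x|\le ab$ and $a\le b$. Write $s \defeq 2x + a^2 + b^2$, so $x = (s - a^2 - b^2)/2$ and $ds/dx = 2$. The whole interval maps to positive $s$: for $x \in [-ab, ab]$ we get $s \in [(a-b)^2, (a+b)^2]$. The lower endpoint can be $0$ when $a=b$, so I would work on the interior, or on $s>0$, and extend concavity by continuity. In terms of $s$,
\[
f = \frac{a^2 + x}{s^{1/2}} = \frac{s + a^2 - b^2}{2 s^{1/2}} = \tfrac12 s^{1/2} + \tfrac{a^2-b^2}{2} s^{-1/2}.
\]
This form separates $f$ into two power terms, each easy to differentiate.

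Next, differentiate twice in $s$ and multiply by $(ds/dx)^2 = 4$; the chain rule adds no other term because $s$ is affine in $x$. This gives
\[
f''(x) = 4\Bigl( -\tfrac18 s^{-3/2} + \tfrac{3(a^2-b^2)}{8} s^{-5/2} \Bigr) = \tfrac12 s^{-5/2}\bigl( -s + 3(a^2 - b^2) \bigr).
\]
Since $a \le b$, the term $3(a^2-b^2)$ is $\le 0$. Together with $s > 0$, this makes the bracket negative, so $f'' < 0$ wherever $s>0$. That gives concavity on the interval, including the endpoint $s=0$ by continuity. It also shows the range condition $|x|\le ab$ is only used to guarantee $s \ge 0$, so that $f$ is defined.

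The only delicate point is the boundary case $a=b$, $x=-ab$, where $s=0$ and $f$ takes the form $0/0$. There I would check that $f$ extends continuously; indeed $f = \tfrac12 s^{1/2} \to 0$ when $a=b$. Then concavity on the open interval passes to the closed interval. If the paper later uses $f$ only on the open interval, this remark can be dropped.
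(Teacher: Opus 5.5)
Your proof is correct and is essentially the paper's argument. The paper also computes the second derivative directly, obtaining $f''(x) = (a^2-2b^2-x)/(2x+a^2+b^2)^{5/2}$, which equals your $\tfrac12 s^{-5/2}\bigl(3(a^2-b^2)-s\bigr)$, and then checks its sign using $x \ge -ab$ together with $a^2-2b^2+ab \le 0$; you use $s>0$ and $a \le b$ instead. Your version gives slightly more: concavity on all of $\{s>0\}$, plus the degenerate endpoint $a=b$, $x=-ab$, which the paper ignores. That endpoint never occurs in the paper's application, because there $b=\sqrt{r}\,\ell \ge 3a$.
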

\begin{proof}
Indeed, the first and second derivatives of $f$ are:
\[
f'(x) = \frac{x + b^2}{(2x + a^2 + b^2)^{3/2}} \;,
\quad \text{and}
\quad
f''(x) = \frac{a^2 - 2b^2 - x}{(2x + a^2 + b^2)^{5/2}} \;.
\]
For any $x \in [-ab, ab]$, we have $a^2 - 2b^2 - x \le a^2 - 2b^2 + ab \le 0$
and thus $f''(x) \le 0$. 
\end{proof}

\begin{lemma}
    \label{lemm:error-hc}
	Let $c(x) \defeq \sign(x) \min(|x|, c)$ and
	let $h_c(x) \defeq \frac{x}{\sqrt{1+x^2/c^2}}$.
	For any $c > 0$ and $x \in \R$, it holds that:
	\[
		|h_c(x) - c(x)| \le
		\begin{cases}
			\frac{|x|^3}{2c^2}, & |x| \le c
			\\
			\frac{c^3}{2x^2}    & |x| \ge c
		\end{cases}
		\;.
	\]
\end{lemma}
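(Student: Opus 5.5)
By symmetry $h_c(-x) = -h_c(x)$ and $c(-x) = -c(x)$, so it suffices to take $x \ge 0$. In that range both functions are nonnegative and $h_c(x) \le c(x)$: for $x\le c$ because $\sqrt{1+x^2/c^2}\ge 1$, and for $x \ge c$ because $h_c(x) = c\cdot \frac{x/c}{\sqrt{1+x^2/c^2}} < c$. The absolute value is therefore just $c(x) - h_c(x)$, and I would split into the two regimes appearing in the statement.

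\textbf{Case $0 \le x \le c$.} Here the error is
\[
x - h_c(x) = x\Bigl(1 - (1+t)^{-1/2}\Bigr), \qquad t \defeq x^2/c^2 .
\]
The key step is the elementary bound $1-(1+t)^{-1/2} \le t/2$ for $t\ge 0$. This follows because $g(t) = (1+t)^{-1/2}$ is convex, so $g(t)\ge g(0)+g'(0)t = 1 - t/2$. Multiplying by $x$ gives $x\cdot \frac{x^2}{2c^2} = \frac{x^3}{2c^2}$.

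\textbf{Case $x \ge c$.} Here the error is
\[
c - h_c(x) = c\Bigl(1 - \frac{1}{\sqrt{1+s}}\Bigr), \qquad s \defeq c^2/x^2 \le 1 .
\]
To see this, divide numerator and denominator of $h_c(x)$ by $x/c$: $h_c(x) = \frac{c}{\sqrt{c^2/x^2+1}}$. The same convexity inequality then yields $1-(1+s)^{-1/2} \le s/2$, so the error is at most $\frac{c^3}{2x^2}$.

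There is no real obstacle in either case. The only points needing care are the sign reduction and the rewriting of $h_c(x)$ in the form $c(1+c^2/x^2)^{-1/2}$ for the large-$|x|$ regime, after which both cases reduce to the single inequality $(1+t)^{-1/2}\ge 1-t/2$.
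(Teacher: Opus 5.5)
Your proof is correct and follows the paper's argument essentially verbatim: reduce to $x \ge 0$, rewrite the error as $x\bigl(1-(1+x^2/c^2)^{-1/2}\bigr)$ or $c\bigl(1-(1+c^2/x^2)^{-1/2}\bigr)$, and apply $1-(1+y)^{-1/2}\le y/2$. The only difference is that you justify this last inequality via convexity of $(1+y)^{-1/2}$, which holds for all $y \ge 0$, whereas the paper simply invokes it for $y\in[0,1]$.
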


\begin{proof}
	W.l.o.g., let $x \ge 0$. If $x \ge c$, then
	$|h_c(x) - c(x)| = \bigl| \frac{x}{\sqrt{1+x^2/c^2}} - c \bigr| = \bigl| \frac{1}{\sqrt{c^2/x^2 + 1}}-1 \bigr|c \le \frac{c^3}{2x^2}$ where we used $1 - \frac{1}{\sqrt{1+y}} \le \frac{y}{2}$ for any $y \in [0,1]$.
	Similarly, if $x \le c$, then we have
	$|h_c(x) - c(x)| = \bigl| \frac{1}{\sqrt{1+x^2/c^2}}-1 \bigr|x \le \frac{x^3}{2c^2}$ The maximum of the error is attained at $x = c$.
\end{proof}

\section{Analysis of SGD under Sparse Noise with Spectral Spikes}

\begin{lemma}[Rotation alignment]
	\label{lemma:rotation-alignment}
Let $\mG \in \R^{m \times n}$ be a fixed matrix and let $\gg = \mG + \mN$ 
where $\mN = \ell \mU_N \mV_N^T$, $\ell > 0$, and $\mU_N \in \R^{m \times r}$ and $\mV_N \in \R^{n \times r}$ are matrices with orthonormal columns. 
Let $\mU_{g}$ and $\mV_{g}$ be the top $r$ left and right singular vectors of $\gg$. Suppose $\ell > \|\mG\|_2$,
Then, it holds that:
\[
\|\mC_{U} - \mC_{V}\|_F \le \frac{\|\mU_N^T \mG\|_F + \|\mV_N^T \mG^T\|_F}{2\ell - \|\mG\|_2} \;,
\]
where $\mC_{U} \defeq \mU_N^T \mU_{g}$ and $\mC_{V} \defeq \mV_N^T \mV_{g}$.
\end{lemma}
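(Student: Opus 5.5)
The plan is to derive an exact linear identity for $\mC_U - \mC_V$ from the defining equations of the top-$r$ singular triplets of $\gg$, and then invert a diagonal matrix whose entries are bounded below by Weyl's inequality (Lemma~\ref{lemma:wely}).

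\textbf{Step 1 (singular vector equations).} Let $\Sigma \defeq \diag(\sigma_1(\gg),\ldots,\sigma_r(\gg))$. The top-$r$ singular vectors satisfy
\[
\gg \mV_g = \mU_g \Sigma, \qquad \gg^T \mU_g = \mV_g \Sigma .
\]
Multiply the first identity on the left by $\mU_N^T$. Since $\mU_N^T \gg = \mU_N^T \mG + \ell \mV_N^T$ (using $\mU_N^T\mU_N = \mI_r$), this gives
\[
\mC_U \Sigma = \ell\, \mC_V + \mU_N^T \mG \mV_g .
\]
Multiply the second identity on the left by $\mV_N^T$. Since $\mV_N^T \gg^T = \mV_N^T \mG^T + \ell\, \mU_N^T$, this gives
\[
\mC_V \Sigma = \ell\, \mC_U + \mV_N^T \mG^T \mU_g .
\]

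\textbf{Step 2 (subtract).} Subtracting the two relations yields
\[
(\mC_U - \mC_V)(\Sigma + \ell \mI_r) = \mU_N^T \mG \mV_g - \mV_N^T \mG^T \mU_g \defeq \mE .
\]
This is the key identity. The sign structure of the spike makes the $\ell$-terms add to $\Sigma$ rather than cancel against it, which is exactly why the denominator becomes $2\ell - \|\mG\|_2$ and not $\ell - \|\mG\|_2$.

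\textbf{Step 3 (lower bound on $\Sigma$).} The matrix $\mN$ has $\sigma_i(\mN) = \ell$ for $i \le r$. By Weyl's inequality (Lemma~\ref{lemma:wely}),
\[
\Sigma_{ii} \ge \ell - \|\mG\|_2 > 0 \quad \text{for } i \le r .
\]
Hence $\Sigma + \ell \mI_r$ is an invertible diagonal matrix with smallest entry at least $2\ell - \|\mG\|_2$.

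\textbf{Step 4 (conclude).} Right-multiplying by the inverse and using $\|\mA \mD^{-1}\|_F \le \|\mA\|_F \,\|\mD^{-1}\|_2$ gives
\[
\|\mC_U - \mC_V\|_F \le \frac{\|\mE\|_F}{2\ell - \|\mG\|_2} .
\]
Finally, apply the triangle inequality to $\mE$. Because $\mV_g$ and $\mU_g$ have orthonormal columns, $\|\mU_N^T \mG \mV_g\|_F \le \|\mU_N^T \mG\|_F$ and $\|\mV_N^T \mG^T \mU_g\|_F \le \|\mV_N^T \mG^T\|_F$, which yields the claimed bound.

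The only delicate point is the sign bookkeeping in Step 2. The remaining steps are routine, and I expect no real obstacle beyond checking that the top-$r$ singular vectors are well defined. If $\sigma_r(\gg) = \sigma_{r+1}(\gg)$, any valid choice of $\mU_g, \mV_g$ still satisfies the equations in Step 1, so the argument goes through unchanged.
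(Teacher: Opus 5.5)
Your proof is correct and follows the paper's argument: both project the two singular-vector equations onto $\mU_N$ and $\mV_N$, subtract them so that $\Sigma + \ell \mI_r$ appears, lower-bound its diagonal by $2\ell - \|\mG\|_2$ via Weyl's inequality, and finish with $\|\mA \mD^{-1}\|_F \le \|\mA\|_F \|\mD^{-1}\|_2$. Your identity $(\mC_U - \mC_V)(\Sigma + \ell \mI_r) = \mU_N^T \mG \mV_g - \mV_N^T \mG^T \mU_g$ is in fact cleaner than the paper's, which puts the inverse on the left and writes a $+$ between the two terms; neither slip changes the final norm bound.
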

\begin{proof}
By definition, we have $\gg\mV_{g} = \mU_{g} \Sigma_{g}$ 
where $\Sigma_{g} \in \R^{r \times r}$ is the diagonal matrix of the top $r$ singular values of $\gg$. 
Substituting $\gg = \mG + \ell \mU_N \mV_N^T$ gives:
\[
(\mG + \ell \mU_N \mV_N^T) \mV_{g} = \mU_{g} \Sigma_{g} \;.
\]
Multiplying $\mU_{N}^T$ on both sides gives:
\[
\mU_N^T (\mG + \ell \mU_N \mV_N^T) \mV_{g} = \mU_N^T \mU_{g} \Sigma_{g},
\quad \Rightarrow \quad
\ell \mC_{V} = \mC_{U} \Sigma_{g} - \mU_N^T \mG \mV_{g} \;.
\]
Similarly using $\gg^T \mU_{g} = \mV_g \Sigma_g$, we have: 
$\ell \mC_{U} = \mC_{V} \Sigma_{g} - \mV_N^T \mG^T \mU_{g}$.
Subtracting these two equations gives:
\[
\mC_U - \mC_V = (\Sigma_g + \ell \mI)^{-1} (\mU_N^T \mG \mV_g + \mV_N^T \mG^T \mU_g) \;.
\]
By Lemma~\ref{lemma:wely}, we have $\sigma_r(\gg) \ge \ell - \|\mG\|_2$. It follows that: 
\[
\|\mC_U - \mC_V\|_F \le \|(\Sigma_g + \ell \mI)^{-1}\|_2 
(\|\mU_N^T \mG \mV_g\|_F + \|\mV_N^T \mG^T \mU_g\|_F)
\le \frac{\|\mU_N^T \mG\|_F + \|\mV_N^T \mG^T\|_F}{2\ell - \|\mG\|_2}  \;.
\qedhere
\]
\end{proof}

\subsection{Proof of Lemma~\ref{lemma:sp-clip-large-ell-rank-r}}
\label{sec-proof-lemma-sp-clip-large-ell}
\begin{proof}
Let $\mR = \gg - \tilde{\gg}$ be the residual matrix after spectral clipping.
By Lemma~\ref{lemma:wely}, we have: 
$\sigma_{r+1}(\gg) \le \sigma_1(\mG) + \sigma_{r+1}(\mN) = \|\mG\|_2 \le c$.
Therefore, by the definition of spectral clipping,
we have 
$
\mR = \mU_g \mD (\mV_g)^T \;,
$
where $\mD$ is a diagonal matrix with element $\mD_{ii} = \max(\sigma_i(\gg), c) - c$ and
$\mU_g$ and $\mV_g$ are the top left and right $r$ singular vectors of $\gg$. It follows that:
\[
\E_{\mN}[\langle \mG, \tilde{\gg} \rangle]
= \E_{\mN}[ \langle \mG, \gg - \mR \rangle]
= \|\mG\|_F^2 - \E_{\mN} \bigl[ 
	\langle \mG, \mU_g \mD (\mV_g)^T \rangle
 \bigr] 
 \;.
\]
We next upper bound the second error term. 
By Lemma~\ref{lemma:wely}, we have for any $i \in [r]$, 
$\sigma_{i}(\gg) \ge \sigma_i(\mN) - \sigma_1(\mG) = \ell - \|\mG\|_2$ and 
$\sigma_{i}(\gg) \le \ell + \|\mG\|_2$. If $c \ge \ell + \|\mG\|_2$, then 
$\max(\sigma_i(\gg),c) - c = 0$ and thus $\mD$ is zero. 
In what follows, we consider the case when $c \le \ell + \|\mG\|_2$.
Using $\max(\sigma_i(\gg),c) - c = \ell - c + \max(\sigma_i(\gg),c) - \ell$, we have:
\begin{align*}
&\quad \E_{\mN} \bigl[ 
	\langle \mG, \mU_g \mD (\mV_g)^T \rangle
 \bigr]
 \\
&= (\ell - c) \E_{\mN} \bigl[ 
	 \langle \mG, \mU_g (\mV_g)^T \rangle
 \bigr]
 + \E_{\mN} \bigl[ 
	\langle \mU_g^T \mG \mV_g, \tilde{\mD} \rangle
 \bigr]
 \\
 &\le (\ell - c) \E_{\mN} \bigl[ 
	 \langle \mG, \mU_g (\mV_g)^T \rangle
 \bigr]
 +\E_{\mN} \bigl[ 
	\|\mU_g^T \mG \mV_g\|_F \|\tilde{\mD}\|_F
 \bigr] \;.
\end{align*}
where $\tilde{\mD}$ is a diagonal matrix with $\tilde{\mD}_{ii} = \max(\sigma_i(\gg),c) - \ell$. We first bound $|\max(\sigma_i(\gg),c) - \ell |$ for any 
$i \in [r]$.
If $\ell - \|\mG\|_2 \ge c$, then $\max(\sigma_i(\gg),c) = \sigma_i(\gg)$ 
since $\sigma_i(\gg) \ge \ell - \|\mG\|_2 \ge c$.
We then have $| \sigma_i(\gg) - \ell | \le \|\mG\|_2$. 
If $\ell - \|\mG\|_2 \le c \le \ell + \|\mG\|_2$ and $\sigma_i(\gg) \ge c$, then it holds that $| \sigma_i(\gg) - \ell | \le \|\mG\|_2$.
It remains to consider the case when $\ell - \|\mG\|_2 \le c \le \ell + \|\mG\|_2$ and $\sigma_i(\gg) < c$. Note that 
$|\max(\sigma_i(\gg),c) - \ell| = |c - \ell| \le \|\mG\|_2$.
Therefore, in all cases, we have for any $i \in [r]$,
$|\max(\sigma_i(\gg),c) - \ell | \le \|\mG\|_2$ and thus 
$\| \tilde{\mD} \|_F \le \sqrt{r} \|\mG\|_2$. 
We next study the terms involving $\mU_g$ and $\mV_g$. 
Since $\mU_g$, $\mU_N$, $\mV_g$ and $\mV_N$ are matrices with orthonormal columns, according to Lemma~\ref{lemm:sin-theta-rank-r}, we can decompose 
them as:
\begin{align}
	\label{eq:decomposition-ug-vg}
	\begin{cases}
	\mU_g &= \mU_N \mC_U + \mU_N^{\perp} \mS_U, \qquad
	\mC_U = \mU_N^T \mU_g, \;
	\mU_N^{\perp} \in \R^{m \times r}, \; (\mU_N^{\perp})^T \mU_N = \mathbf{0} \;, \\
	\mV_g &= \mV_N \mC_V + \mV_N^{\perp} \mS_V 
	, \qquad
	\mC_V = \mV_N^T \mV_g, \;
	\mV_N^{\perp} \in \R^{n \times r}, \; (\mV_N^{\perp})^T \mV_N = \mathbf{0} \;,
	\end{cases}
\end{align}
where $\|\mS_U\|_2 = \dist(\mU_N, \mU_g)$ and 
$\|\mS_V\|_2 = \dist(\mV_N, \mV_g)$. Applying Lemma~\ref{lemma:DistPrincipalAngles} and Lemma~\ref{lemma:wedin} (with $\mA = \mN$, $\mE = \mG$, $\hat{\mA} = \gg$), we have:
\begin{align}
	\label{eq:bound-su-sv}
\max\{\|\mS_U\|_2, \|\mS_V\|_2\} = 
\max\{ \dist(\mU_N, \mU_{\gg}), \dist(\mV_N, \mV_{\gg}) \}
\le \frac{\|\mG\|_2}{\sigma_1(\mN) - \sigma_{r+1}(\mN) - \|\mG\|_2}
= \frac{\|\mG\|_2}{\ell - \|\mG\|_2} \;.
\end{align}
Substituting the above decompositions into $\langle \mG, \mU_g (\mV_g)^T \rangle$ and using $\E_{\mN}[\langle \mG, \mU_N \mV_N^T \rangle] = 0$, we get:
\begin{align*}
&\quad 
\E_{\mN}[\langle \mG, \mU_g (\mV_g)^T \rangle]
\\
&= \E_{\mN}[ \langle \mG, (\mU_N \mC_U + \mU_N^{\perp} \mS_U)(\mV_N \mC_V + \mV_N^{\perp} \mS_V)^T \rangle ]
\\
&= 
\E_{\mN}[
\langle \mG, \mU_N (\mI - \mC_U \mC_V^T) \mV_N^T \rangle]
+ \E_{\mN}[\langle \mG, \mU_N \mC_U \mS_V^T (\mV_N^{\perp})^T \rangle]
+ \E_{\mN}[\langle \mG, \mU_N^{\perp} \mS_U \mC_V^T \mV_N^T \rangle]
\\
&\qquad 
+ \E_{\mN}[\langle \mG, \mU_N^{\perp} \mS_U \mS_V^T (\mV_N^{\perp})^T \rangle] 
\\
&\le \E_{\mN}[\|\mU_N^T \mG \mV_N\|_F \|\mI - \mC_U \mC_V^T\|_F]
+ \E_{\mN}[\sqrt{r} \|\mU_N^T \mG\|_F \|\mS_V\|_2 \|\mV_N^{\perp}\|_2 \|\mC_U\|_2 ]
\\
&\qquad + \E_{\mN}[\sqrt{r} \|\mG \mV_N\|_F \|\mS_U\|_2 \|\mU_N^{\perp}\|_2 \|\mC_V\|_2 ]
+ \E_{\mN}[\sqrt{r} \|\mG\|_F \|\mS_U\|_2 \|\mS_V\|_2 \|\mU_N^{\perp}\|_2 \|\mV_N^{\perp}\|_2]
\\
&\le \E_{\mN}[\|\mU_N^T \mG \mV_N\|_F \|\mI - \mC_U \mC_V^T\|_F]
+ \frac{\sqrt{r} \|\mG\|_2}{\ell - \|\mG\|_2} 
\Bigl( \E_{\mN}[\|\mU_N^T \mG\|_F] + \E_{\mN}[\|\mG \mV_N\|_F] \Bigr)
+ \frac{\sqrt{r} \|\mG\|_2^2}{(\ell - \|\mG\|_2)^2} \|\mG\|_F \;.
\end{align*}
Let $\mC_U = \mP_U \cos(\Theta_U) \mQ_U^T$ be the SVD of $\mC_U$ where 
$\cos(\Theta_U) \defeq \diag(\cos(\theta_{U,1}), \ldots, \cos(\theta_{U,r}))$
and $\{ \theta_{U,i} \}$ are the principal angles between the subspaces spanned by $\mU_N$ and $\mU_g$ (according to Definition~\ref{df:PrincipalAngles}). 
Applying Lemma~\ref{lemma:rotation-alignment}, we have:
\begin{align*}
\|\mI - \mC_U \mC_V^T\|_F
&= \|\mI - \mC_U\mC_U^T + \mC_U(\mC_U^T - \mC_V^T)\|_F
\\
&\le \|\mI - \cos^2(\Theta_U)\|_F + \|\mC_U(\mC_U^T - \mC_V^T)\|_F 
\\ 
&\le \sqrt{r} \max_i\{ \sin^2 \theta_{U,i}\} 
+ \|\mC_U\|_2 \|\mC_U - \mC_V\|_F
\\
&\le \sqrt{r}\frac{\|\mG\|_2^2}{(\ell - \|\mG\|_2)^2}
+ \frac{\|\mU_N^T \mG\|_F + \|\mV_N^T \mG^T\|_F}{2\ell - \|\mG\|_2} 
\le  \frac{\sqrt{r} \|\mG\|_2^2}{(\ell - \|\mG\|_2)^2}
+ \frac{2\|\mG\|_F}{2\ell - \|\mG\|_2} \;.
\end{align*}
It follows that: 
\begin{align*}
&\quad 
\E_{\mN}[\langle \mG, \mU_g (\mV_g)^T \rangle]
\\
&\le 
\Bigl(  \frac{\sqrt{r} \|\mG\|_2^2}{(\ell - \|\mG\|_2)^2}
+ \frac{2\|\mG\|_F}{2\ell - \|\mG\|_2} \Bigr) \E_{\mN}[\|\mU_N^T \mG \mV_N\|_F]
+ \frac{\sqrt{r} \|\mG\|_2}{\ell - \|\mG\|_2} 
\Bigl( \E_{\mN}[\|\mU_N^T \mG\|_F] + \E_{\mN}[\|\mG \mV_N\|_F] \Bigr)
\\
&\qquad + \frac{\sqrt{r} \|\mG\|_2^2}{(\ell - \|\mG\|_2)^2} \|\mG\|_F \;.
\end{align*}
Applying Lemma~\ref{lemma:bounded-anisotropy-rank-r}, we have:
\begin{align*}
&\quad 
\E_{\mN}[\langle \mG, \mU_g (\mV_g)^T \rangle]
\\
&\le 
\Bigl(  \frac{\sqrt{r} \|\mG\|_2^2}{(\ell - \|\mG\|_2)^2}
+ \frac{2\|\mG\|_F}{2\ell - \|\mG\|_2} \Bigr) \frac{\kappa r}{\sqrt{mn}} \|\mG\|_F
+ \frac{2\sqrt{r} \|\mG\|_2}{\ell - \|\mG\|_2} \sqrt{\frac{\kappa r}{q}} \|\mG\|_F
+ \frac{\sqrt{r} \|\mG\|_2^2}{(\ell - \|\mG\|_2)^2} \|\mG\|_F \;.
\end{align*}
We next bound $\E_{\mN}[\|\mU_g^T \mG \mV_g\|_F]$ in a similar way:
\begin{align*}
\E_{\mN}[\|\mU_g^T \mG \mV_g\|_F]
&\le \E_{\mN}[\|\mC_U^T \mU_N^T \mG \mV_N \mC_V\|_F]
+ \E_{\mN}[\|\mC_U^T \mU_N^T \mG \mV_N^{\perp} \mS_V^T\|_F]
\\
&\qquad + \E_{\mN}[\|\mS_U^T (\mU_N^{\perp})^T \mG \mV_N \mC_V\|_F]
+ \E_{\mN}[\|\mS_U^T (\mU_N^{\perp})^T \mG \mV_N^{\perp} \mS_V^T\|_F]
\\
&\le \E_{\mN}[\|\mU_N^T \mG	 \mV_N\|_F]
+ \frac{\|\mG\|_2}{\ell - \|\mG\|_2} 
\Bigl( \E_{\mN}[\|\mU_N^T \mG\|_F] + \E_{\mN}[\|\mG \mV_N\|_F] \Bigr)
+ \frac{\|\mG\|_2^2}{(\ell - \|\mG\|_2)^2} \|\mG\|_F 
\\
&\le \frac{\kappa r}{\sqrt{mn}} \|\mG\|_F
+ \frac{2\sqrt{\kappa r/q} \|\mG\|_2}{\ell - \|\mG\|_2} \|\mG\|_F
+ \frac{\|\mG\|_2^2}{(\ell - \|\mG\|_2)^2} \|\mG\|_F \;.
\end{align*}
Substituting the above two bounds back gives:
\begin{align*}
&\quad 
\E_{\mN} \bigl[
	\langle \mG, \mU_g \mD (\mV_g)^T \rangle
 \bigr]
 \\
&\le (\ell - c) 
\Biggl\{
\Bigl(  \frac{\sqrt{r} \|\mG\|_2^2}{(\ell - \|\mG\|_2)^2}
+ \frac{2\|\mG\|_F}{2\ell - \|\mG\|_2} \Bigr) \frac{\kappa r}{q} \|\mG\|_F
+ \frac{2\sqrt{r} \|\mG\|_2}{\ell - \|\mG\|_2} \sqrt{\frac{\kappa r}{q}} \|\mG\|_F
+ \frac{\sqrt{r} \|\mG\|_2^2}{(\ell - \|\mG\|_2)^2} \|\mG\|_F 
\Biggr\}
\\
&\qquad + \sqrt{r} \|\mG\|_2 \
\Biggl\{
\frac{\kappa r}{q} \|\mG\|_F
+ \frac{2\sqrt{\kappa r/q} \|\mG\|_2}{\ell - \|\mG\|_2} \|\mG\|_F
+ \frac{\|\mG\|_2^2}{(\ell - \|\mG\|_2)^2} \|\mG\|_F 
\Biggr\} \;
\\
&\le \biggl( 
\frac{r^{3/2}\kappa \|\mG\|_2}{(\ell - \|\mG\|_2)q}
+ \frac{\kappa r}{q} + \frac{2r\sqrt{\kappa}}{\sqrt{q}}
+ \frac{\sqrt{r}\|\mG\|_2}{\ell - \|\mG\|_2}
+ \frac{\kappa r^{3/2}}{q} 
+ \frac{2\sqrt{\kappa}r\|\mG\|_2}{(\ell - \|\mG\|_2)\sqrt{q}} 
+ \frac{\sqrt{r}\|\mG\|_2^2}{(\ell - \|\mG\|_2)^2}
\biggr) \|\mG\|_F^2 \;.
\end{align*}
Using the assumption that $\ell \ge 9\sqrt{r} \|\mG\|_2$ and 
$25 r^2 \kappa \le q$, we get: 
\begin{align*}
&\frac{r^{3/2}\kappa \|\mG\|_2}{(\ell - \|\mG\|_2) q }
\le \frac{r \kappa}{8 q} \le \frac{1}{200},
\;
\frac{\kappa r}{q} + \frac{2r\sqrt{k}}{\sqrt{q}} 
\le \frac{1}{25} + \frac{2}{5} = \frac{11}{25}, 
\;
\frac{\sqrt{r} \|\mG\|_2}{\ell - \|\mG\|_2} \le \frac{1}{8},
\\
&\frac{2\sqrt{\kappa}r\|\mG\|_2}{(\ell - \|\mG\|_2)\sqrt{q}} 
\le \frac{\sqrt{\kappa r}}{4\sqrt{q}} \le \frac{1}{20},
\;
\frac{\sqrt{r}\|\mG\|_2^2}{(\ell - \|\mG\|_2)^2} \le \frac{1}{64} \;,
\text{and} \; 
\E_{\mN} \bigl[
	\langle \mG, \mU_g \mD (\mV_g)^T \rangle
 \bigr] \le \frac{2}{3} \|\mG\|_F^2 \;.
\end{align*}
For the second claim, 
recall that $\sigma_{r+1}(\gg) \le c$ and $\sigma_i(\gg) \le \ell + \|\mG\|_2$
for $i \in [r]$. It follows that:
\begin{align*}
\|\tilde{\gg}\|_F^2 
&= \sum_{i=1}^r \min(c, \sigma_i(\gg))^2 + \sum_{i={r+1}}^{q} \sigma_i^2(\gg) 
\\
&\le r \min(c, \ell + \|\mG\|_2)^2 + \|\gg\|_F^2 - \sum_{i=1}^r \sigma_i^2(\gg) 
\\
&= r \min(c, \ell + \|\mG\|_2)^2 + \|\mG\|_F^2 + \|\mN\|_F^2 + 2 
\langle \mG, \mN \rangle - \sum_{i=1}^r \sigma_i^2(\gg) \;.
\end{align*}
Taking expectation with respect to $\mN$ gives:
\[
\E_{\mN}[\|\tilde{\gg}\|_F^2]
\le r \min(c, \ell + \|\mG\|_2)^2 + \|\mG\|_F^2 + r \ell^2 - 
\E_{\mN}\Bigl[ \sum_{i=1}^r \sigma_i^2(\gg) \Bigr] \;.
\]
Meanwhile, note that
\[
\sum_{i=1}^r \sigma_i^2(\gg) 
=\max_{\mV^T \mV = \mI_r} \|\gg \mV\|_F^2
\ge \|\gg \mV_N\|_F^2
= \|\mG \mV_N + \ell \mU_N \mV_N^T \mV_N\|_F^2
= \|\mG \mV_N\|_F^2 + 2\ell \langle \mG \mV_N, \mU_N \rangle + r \ell^2 \;.
\]
We thus have $\E_{\mN}\bigl[ \sum_{i=1}^r \sigma_i^2(\gg) \bigr]
\ge \E_{\mN}[\|\mG \mV_N\|_F^2] + r \ell^2 \ge r \ell^2$. Substituting it into the previous display gives the second claim.

\end{proof}

\subsection{Proof of Theorem~\ref{theorem:SGD-spectral-clipping-rank-r}}
\begin{proof}
According to Assumption~\ref{assump:L-smooth}, we have for any $k \ge 0$,
\[
f(\mX_{k+1}) \stackrel{\eqref{eq:SmoothUpperBound}}{\le} f(\mX_k) - \eta \langle \nabla f(\mX_k), \clipSp_c(\gg (\mX_k) ) \rangle + \frac{L_F \eta^2}{2} \|\clipSp_c (\gg(\mX_k))\|_F^2 \;.
\]
We first consider the case when $\ell \ge 9 \sqrt{r} M$.
Since $c \ge M \ge \|\nabla f(\mX_k)\|_2$, we can apply Lemma~\ref{lemma:sp-clip-large-ell-rank-r} with $\mG = \nabla f(\mX_k)$ and obtain:
\[
\E_{\mN_k}[\langle \nabla f(\mX_k), \clipSp_c(\gg (\mX_k) ) \rangle]
\ge \frac{1}{3} \|\nabla f(\mX_k)\|_F^2 
\quad \text{and} \quad
\E_{\mN_k}[\|\clipSp_c (\gg(\mX_k))\|_F^2]
\le r c^2 + \|\nabla f(\mX_k)\|_F^2 \;,
\]
where $\mN_k$ denotes the noise at iteration $k$.
We next consider the case when $\ell \le 9\sqrt{r} M$.
By lemma~\ref{lemma:wely}, we have $\sigma_1(\gg(\mX_k)) \le \ell + \|\nabla f(\mX_k)\|_2 \le 10\sqrt{r} M$. We thus have $\clipSp_c(\gg(\mX_k)) = \gg(\mX_k)$ since $c \ge 10\sqrt{r}M$. It follows that:
\[
\E_{\mN_k}[\langle \nabla f(\mX_k), \clipSp_c(\gg (\mX_k) ) \rangle]
= \|\nabla f(\mX_k)\|_F^2 
\quad \text{and} \quad
\E_{\mN_k}[\|\clipSp_c (\gg(\mX_k))\|_F^2]
\le r\ell^2 + \|\nabla f(\mX_k)\|_F^2 \;,
\]
since $\clipSp_c(\gg(\mX_k)) = \gg(\mX_k)$.
Combining both cases and 
taking expectation with respect to $\mN_k$ on both sides of the first display,
we have for any $k \ge 0$:
\[
\E_{\mN_k}[f(\mX_{k+1})]
\le f(\mX_k) - \frac{\eta}{3} \|\nabla f(\mX_k)\|_F^2 + \frac{L_F \eta^2}{2} 
(r\min(c, 10\ell/9)^2 + \|\nabla f(\mX_k)\|_F^2) \;.
\]
Using $\frac{L_F\eta^2}{2} \le \frac{\eta}{6}$,
taking the full expectation and rearranging, we obtain:
\[  
\frac{\eta}{6} \E[\|\nabla f(\mX_k)\|_F^2]
\le \E[f(\mX_k)] - \E[f(\mX_{k+1})] + \frac{L_F r \eta^2 \min(c, 10\ell/9)^2}{2} \;.
\]
Summing up from $k=0$ to $K-1$ and dividing both sides by $K$, we have:
\[
\frac{1}{K} \sum_{k=0}^{K-1} \E[\|\nabla f(\mX_k)\|_F^2]
\le \frac{6 F^0}{\eta K} + 3 L_F r \eta \min(10 \sqrt{r} M, 10 \ell / 9)^2 \;.
\]
To get the final iteration complexity, it is sufficient to let
each term of the right-hand side be at most $\frac{\epsilon^2}{2}$ and substituting $\frac{1}{\eta} = \frac{6r L_F\min(10\sqrt{r}M, 10\ell / 9)^2}{\epsilon^2} + 3L_F$.

\end{proof}

\subsection{Proof of Lemma~\ref{lemma:global-clip-large-ell-rank-r}}
\begin{proof}
When $\|\mG\|_F = 0$, the claim holds clearly. 
We assume $\|\mG\|_F > 0$ in what follows.
Let us first consider the case when 
$c \le \sqrt{r} (\ell - \|\mG\|_2)$.
By Lemma~\ref{lemma:wely}, we have 
\[ 
\|\gg\|_F \ge 
\sqrt{\sum_{i=1}^r \sigma_i(\gg)^2} \ge \sqrt{\sum_{i=1}^r (\sigma_i(\mN) - \sigma_i(\mG))^2} \ge \sqrt{r}(\ell - \|\mG\|_2) \ge c \;.
\] 
Therefore, the global clipping is always active and we have $\tilde{\gg} = \frac{c}{\|\gg\|_F} \gg$.
It follows that:
\[
\E_{\mN}[\langle \mG, \tilde{\gg} \rangle]
= \E_{\mN}\biggl[ 
	\frac{c}{\|\mG + \mN\|_F}\langle \mG, \mG + \mN \rangle 
	\biggr]
=\E_{\mN}\biggl[ 
	\frac{c(G^2 + Z_{\mN})}{\sqrt{G^2 + 2Z_{\mN} + r \ell^2}}
\biggr]
\defeq \E_{\mN}[h(Z_{\mN})]
\;,
\]
where $G \defeq \|\mG\|_F$ and $Z_{\mN} \defeq \langle \mG, \mN \rangle$.
Applying Lemma~\ref{lemma:concave-function} (with $a = G$ and $b = \sqrt{r} \ell$), $h$ is concave in $Z_{\mN}$ since $\sqrt{r}\ell \ge G$. We thus have: 
\[ 
\E_{\mN}[h(Z_{\mN})] \le h(\E_{\mN}[Z_{\mN}])
= h(0) = \frac{c G^2}{\sqrt{G^2 + r\ell^2}} \le \frac{c}{\sqrt{r} \ell} \|\mG\|_F^2 \;.
\]
We next prove the lower bound. Using taylor expansion, we have:
\[
h(Z_{\mN}) = h(0) + h'(0)Z_{\mN} + \frac{1}{2} h''(\xi) Z_{\mN}^2 \;,
\]
where $\xi \in [-\sqrt{r}G\ell, \sqrt{r}G \ell]$. It follows that:
\[
\E_{\mN}[h(Z_{\mN})] 
= h(0) + h'(0)\E_{\mN}[Z_{\mN}] + \frac{1}{2}\E_{\mN}[h''(\xi) Z_{\mN}^2]
\ge h(0) - \max_{\xi \in [-\sqrt{r}G\ell, \sqrt{r}G\ell]}[h''(\xi)]\frac{1}{2}\E_{\mN}[Z_{\mN}^2] \;.
\]
Applying Lemma~\ref{lemma:concave-function}, we have: 
\[
\max_{\xi \in [-\sqrt{r}G\ell, \sqrt{r}G\ell]}|h''(\xi)|
\le \max_{\xi \in [-\sqrt{r}G\ell, \sqrt{r}G\ell]}
c \frac{G^2 + 2r\ell^2 + |\xi|}{(G^2+r\ell^2-2|\xi|)^{5/2}}
\le c\frac{\|\mG\|_F^2 + 2r\ell^2 + \sqrt{r}\|\mG\|_F\ell}{(r\ell^2 + \|\mG\|_F^2 - 2\sqrt{r}\|\mG\|_F\ell)^{5/2}} \;.
\]
By Lemma~\ref{lemma:bounded-anisotropy-rank-r}, we obtain:
\[
\E_{\mN}[Z_{\mN}^2] = \ell^2\E_{\mN}[\langle \mG, \mU_N \mV_N^T \rangle^2]
\le \ell^2 \E_{\mN}[\|\mU_N^T \mG \mV_N\|_F^2 \|I_r\|_F^2]
\le r^2\ell^2\frac{\kappa^2 r^2 \|\mG\|_F^2}{mn}
\le \frac{\ell^2}{25^2} \|\mG\|_F^2 \;.
\]
Combining the above inequalities yields:
\begin{align*}
\E_{\mN}[h(Z_{\mN})]
&\ge \frac{c \|\mG\|_F^2}{\sqrt{r\ell^2 + \|\mG\|_F^2}}
- c\frac{\|\mG\|_F^2 + 2r\ell^2 + \sqrt{r}\|\mG\|_F\ell}{(r\ell^2 + \|\mG\|_F^2 - 2\sqrt{r}\ell \|\mG\|_F)^{5/2}} \frac{\ell^2}{2 \cdot 25^2} \|\mG\|_F^2 
\\
&= \frac{c \|\mG\|_F^2}{\sqrt{r\ell^2 + \|\mG\|_F^2}}
\biggl( 
1 - \frac{(\|\mG\|_F^2 + 2r\ell^2 + \sqrt{r}\|\mG\|_F\ell) \ell^2 \sqrt{r\ell^2 + \|\mG\|_F^2}}{2 \cdot 25^2 (r\ell^2 + \|\mG\|_F^2 - 2\sqrt{r}\ell \|\mG\|_F)^{5/2}}
\biggr)
\\
&\ge \frac{c \|\mG\|_F^2}{\sqrt{r\ell^2 + \|\mG\|_F^2}}
\biggl( 
	1 - \frac{(1/9 + 2 + 1/3) r\ell^4 \sqrt{10/9r\ell^2}}{2 \cdot 
	25^2 (1/3)^{5/2}r^{2.5}\ell^{5}}
\biggr)
\\
&\ge \frac{9}{10} \frac{c \|\mG\|_F^2}{\sqrt{r\ell^2 + \|\mG\|_F^2}} 
\ge \frac{4c}{5\sqrt{r}\ell} \|\mG\|_F^2 \;.
\end{align*}
Finally, since clipping is always active, we have:
$
\E_{\mN}[\|\tilde{\gg}\|_F^2] 
= \E_{\mN}\Bigl[\frac{c^2}{\|\gg\|_F^2}\|\gg\|_F^2 \Bigr]
=c^2
$.
We next consider the case when $\sqrt{r}(\ell - \|\mG\|_2) \le c \le \ell + \|\mG\|_2$.
Let the clipping factor $\alpha_{\mN} \defeq \min(1, \frac{c}{\|\gg\|_F})$.
The upper bound of the inner product can be derived as follows:
\begin{align*}
\E_{\mN}[\langle \mG, \tilde{\gg} \rangle]
= \E_{\mN}[\alpha_{\mN} \langle \mG, \gg \rangle]
= \E_{\mN}[\alpha_{\mN}] \|\mG\|_F^2 + \E_{\mN}[\alpha_{\mN} \langle \mG, \mN \rangle] \le \|\mG\|_F^2 + \E_{\mN}[\alpha_{\mN} \langle \mG, \mN \rangle] \;.
\end{align*}
Since $\|\gg\|_F = \sqrt{\|\mG\|_F^2 + r\ell^2 
+ 2 \langle \mG, \mN \rangle}$ is increasing in $\langle \mG, \mN \rangle$,  $\alpha_{\mN}$ is non-increasing in $\langle \mG, \mN \rangle$. Using Lemma~\ref{lemma:chebyshev-sum-inequality}, we obtain:
\[
\E_{\mN}[\langle \mG, \tilde{\gg} \rangle]
\le \|\mG\|_F^2 + \E_{\mN}[\alpha_{\mN}] \E_{\mN}[\langle \mG, \mN \rangle] 
= \|\mG\|_F^2 \;.
\]
Next recall that $\|\gg\|_F \ge \sqrt{r}(\ell - \|\mG\|_2) $ 
and $c \ge \sqrt{r}(\ell - \|\mG\|_2)$, we get:
\begin{align*}
c^2 \ge \E_{\mN}[\|\tilde{\gg}\|_F^2]
= \E_{\mN}[\alpha_{\mN}^2 \|\gg\|_F^2]
= \E_{\mN}[\min(c^2, \|\gg\|_F^2)] \ge r(\ell - \|\mG\|_2)^2 
\ge \frac{4r\ell^2}{9} \;.
\end{align*}
Finally, if $c \ge \ell + \|\mG\|_2$, then clipping is never active and we have $\tilde{\gg} = \gg$ and the claims follows.
\end{proof}

\subsection{Proof of Theorem~\ref{theorem:SGD-global-clipping}}

\begin{proof}
According to Assumption~\ref{assump:L-smooth}, we have for any $k \ge 0$,
\[
f(\mX_{k+1}) \stackrel{\eqref{eq:SmoothUpperBound}}{\le} f(\mX_k) - \eta \langle \nabla f(\mX_k), \clipG_c(\gg (\mX_k) ) \rangle + \frac{L_F \eta^2}{2} \|\clipG_c (\gg(\mX_k))\|_F^2 \;.
\]
According to Lemma~\ref{lemma:global-clip-large-ell-rank-r} (with $\mG = \nabla f(\mX_k)$), if $c \ge \sqrt{r} (\ell - \|\nabla f(\mX_k)\|_2)$, then 
$\E_{\mN_k}[ \langle \nabla f(\mX_k), \clipG_c(\gg (\mX_k) ) \rangle ] \lesssim 
\|\nabla f(\mX_k)\|_F^2$ and $\E_{\mN_k}[\|\clipG_c (\gg(\mX_k))\|_F^2] 
\gtrsim r \ell^2 $, where $\mN_k$ denotes the noise at iteration $k$.
The bounds are not better than the ones for vanilla SGD without clipping. Let us next consider the case when $c \le \frac{2}{3}\sqrt{r}\ell 
\le \sqrt{r} (\ell - \|\nabla f(\mX_k)\|_2)$. We obtain: 
\[
\E_{\mN_k}[ \langle \nabla f(\mX_k), \clipG_c(\gg (\mX_k) ) \rangle ]
\ge \frac{4c}{5 \sqrt{r} \ell} \|\nabla f(\mX_k)\|_F^2 
\quad \text{and} \quad
\E_{\mN_k}[\|\clipG_c (\gg(\mX_k))\|_F^2] = c^2 \;.
\]
Taking the expectation with respect to $\mN_k$ on both sides of the first display and substituting the above two bounds, we have for any $k \ge 0$:
\[  
\E_{\mN_k}[f(\mX_{k+1})]
\le f(\mX_k) - \eta \frac{4c}{5 \sqrt{r} \ell} \|\nabla f(\mX_k)\|_F^2 + \frac{L_F \eta^2}{2} c^2 \;.
\]
Rearranging and taking the full expectation gives:
\[  
\eta \frac{4c}{5 \sqrt{r} \ell} \E[\|\nabla f(\mX_k)\|_F^2]
\le \E[f(\mX_k)] - \E[f(\mX_{k+1})] + \frac{L_F \eta^2}{2} c^2 \;.
\]
Summing up from $k=0$ to $K-1$ and dividing both sides by $K$, we have:
\[
\frac{1}{K} \sum_{k=0}^{K-1} \E[\|\nabla f(\mX_k)\|_F^2]
\le \frac{5 \sqrt{r} \ell F^0}{4 c \eta K} + \frac{5 \sqrt{r} \ell L_F \eta c}{8} = 5\sqrt{r} \ell \Bigl( \frac{F^0}{4 c \eta K} + \frac{L_F \eta c}{8} \Bigr) \;.
\]
Let $\frac{F^0}{4 c \eta K} = \frac{L_F \eta c}{8}$. We have $\eta = \sqrt{\frac{2 F^0}{L_F K c^2}}$ and 
\[
\frac{1}{K} \sum_{k=0}^{K-1} \E[\|\nabla f(\mX_k)\|_F^2] 
\le \frac{5\sqrt{r} \ell}{2} \sqrt{\frac{L_F F^0}{2 K}} 
\le \epsilon^2
\;.
\]
Rearranging concludes the proof.
\end{proof}

\section{Analysis of Reformulation as Composite Frank-Wolfe}

\subsection{Proof of Proposition~\ref{proposition:frank-wolfe-reformulation}}
\begin{proof}
According to Lemma~\ref{lemma:spectral-projection}, we have: 
{\small
\[
\begin{aligned}
\mV_{k+1} 
&= \argmin_{\mX \in Q_2} \Bigl\{\langle \mM_k, \mX \rangle
+ \frac{\lambda}{2 \alpha} \|\mX\|_F^2 \Bigr\} 
\\
&=
\argmin_{\mX \in Q_2}
\biggl\{\frac{\lambda}{2\alpha}\|\mX + \frac{\alpha}{\lambda} \mM_k\|_F^2 \biggr\}
=\proj_{Q_2}\Bigl\{-\frac{\alpha}{\lambda} \mM_k \Bigr\}
\\
&= \clipSp_{D_2}\Bigl( -\frac{\alpha}{\lambda} \mM_k \Bigr) 
= -\frac{\alpha}{\lambda}\clipSp_{\frac{\lambda D_2}{\alpha}}(\mM_k) \;. 
\end{aligned}
\]}
Substituting $\mV_{k+1}$ and the choices of $\gamma_k$ and $c_k$ into the equation of $\mX_{k+1}$ gives the desired update.
\end{proof}

\begin{lemma}
	\label{lemma:momentum-gradient-error-standard-noise}
Let CFW~\eqref{eq:frank-wolfe} be applied to problem~\eqref{eq:composite-constrained-problem} under Assumption~\ref{assump:standard-noise}
and~\ref{assump:L-smooth}. 
Then for any $k \ge 1$, it holds that:
\[
\E[\|\mE_k\|_F] \le 
\Pi_{i=0}^{k-1} (1-\beta_i) \|\mE_0\|_F
+ L D_F \sum_{i=1}^k \bigl( \Pi_{j=i}^{k-1} (1-\beta_j) \bigr) \gamma_{i-1}
+ \sqrt{\sum_{i=1}^k \bigl( \Pi_{j=i}^{k-1} (1-\beta_j) \bigr)^2 \beta^2_{i-1}
}\sigma
\;.
\] 
where $\mE_k \defeq \mM_k - \nabla f(\mX_k)$ and $D_F \defeq \sup_{\mX,\mY \in Q}\{\|\mX - \mY\|_F\}$.
\end{lemma}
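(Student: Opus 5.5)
We derive a linear recursion for $\mE_k$, unroll it with Lemma~\ref{lemma:SimpleRecurrence0}, and bound the deterministic and stochastic parts separately. From the momentum update $\mM_{k+1} = (1-\beta_k)\mM_k + \beta_k \gg(\mX_{k+1})$, subtracting $\nabla f(\mX_{k+1})$ gives
\[
\mE_{k+1} = (1-\beta_k)\mE_k + (1-\beta_k)\bigl(\nabla f(\mX_k) - \nabla f(\mX_{k+1})\bigr) + \beta_k \bigl(\gg(\mX_{k+1}) - \nabla f(\mX_{k+1})\bigr) \;.
\]
Write $\xi_{k+1} \defeq \gg(\mX_{k+1}) - \nabla f(\mX_{k+1})$ and $\mB_k \defeq (1-\beta_k)(\nabla f(\mX_k) - \nabla f(\mX_{k+1})) + \beta_k \xi_{k+1}$. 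Lemma~\ref{lemma:SimpleRecurrence0} with $a_k = 1-\beta_k$ then yields
\[
\mE_k = \Pi_{i=0}^{k-1}(1-\beta_i)\mE_0 + \sum_{i=1}^k \bigl(\Pi_{j=i}^{k-1}(1-\beta_j)\bigr)(1-\beta_{i-1})\bigl(\nabla f(\mX_{i-1}) - \nabla f(\mX_i)\bigr) + \sum_{i=1}^k \bigl(\Pi_{j=i}^{k-1}(1-\beta_j)\bigr)\beta_{i-1}\xi_i \;.
\]

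Next apply the triangle inequality to $\|\mE_k\|_F$ and take expectations. For the drift term, use Assumption~\ref{assump:L-smooth} together with the Frank-Wolfe step $\mX_i - \mX_{i-1} = \gamma_{i-1}(\mV_i - \mX_{i-1})$. Since both $\mV_i$ and $\mX_{i-1}$ lie in $Q$ (by convexity of $Q$ and the convex-combination form of the iterates, assuming $\mX_0 \in Q$), this gives
\[
\|\nabla f(\mX_{i-1}) - \nabla f(\mX_i)\|_F \le L\gamma_{i-1} D_F \;.
\]
Dropping the factor $(1-\beta_{i-1}) \le 1$ produces exactly the middle term of the claim, $L D_F \sum_{i=1}^k \bigl(\Pi_{j=i}^{k-1}(1-\beta_j)\bigr)\gamma_{i-1}$.

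For the noise term, apply Jensen, $\E\|\cdot\|_F \le \sqrt{\E\|\cdot\|_F^2}$, and expand the square. The main point to check is that the cross terms vanish. The weights are deterministic, and $\xi_i$ is the oracle noise at $\mX_i$. Since $\mX_i$ is determined by $\mM_{i-1}$ and $\mX_{i-1}$, it is measurable with respect to the history $\mathcal{F}_{i-1}$ of noises $\xi_0,\dots,\xi_{i-1}$. By Assumption~\ref{assump:standard-noise} and independence, $\E[\xi_i \mid \mathcal{F}_{i-1}] = 0$, so for $i<j$ the tower rule gives $\E\langle \xi_i, \xi_j\rangle = 0$. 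The $\xi_i$ therefore form a martingale difference sequence, and the second moment is at most $\sum_i \bigl(\Pi_{j=i}^{k-1}(1-\beta_j)\bigr)^2\beta_{i-1}^2\sigma^2$ by the conditional variance bound. Taking the square root gives the last term.

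The first term stays deterministic as $\Pi_{i=0}^{k-1}(1-\beta_i)\|\mE_0\|_F$. If the statement is meant with $\E\|\mE_0\|_F$, the same expectation is simply carried through.
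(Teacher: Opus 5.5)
Your proof is correct and follows the same route as the paper. You derive the one-step recursion for $\mE_k$, unroll it with Lemma~\ref{lemma:SimpleRecurrence0}, apply the triangle inequality, bound the drift by $L$-smoothness and $\|\mX_i - \mX_{i-1}\|_F \le \gamma_{i-1} D_F$, and bound the noise sum via Jensen with vanishing cross terms.

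Your recursion, with the drift term weighted by $(1-\beta_k)$ and the noise evaluated at $\mX_{k+1}$, is actually the one consistent with the stated update $\mM_{k+1} = (1-\beta_k)\mM_k + \beta_k \gg(\mX_{k+1})$; the paper's recursion uses $\gg(\mX_i)$. You also spell out the martingale-difference argument that the paper leaves implicit, and both versions yield the same bound.
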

\begin{proof}
The proof is standard.
By the definition of $\mM_{i+1}$, we have for any $i \ge 0$: 
\[
\mE_{i+1} = (1-\beta_i) \mE_i + \beta_i \bigl( 
	\gg(\mX_i) - \nabla f(\mX_i) \bigr) 
+ \nabla f(\mX_i) - \nabla f(\mX_{i+1}) \;.
\]
Applying Lemma~\ref{lemma:SimpleRecurrence0}, we have for any $k \ge 1$:
\[
\mE_{k} = \Pi_{i=0}^{k-1} (1-\beta_i) \mE_0 + \sum_{i=1}^k 
\bigl( \Pi_{j=i}^{k-1} (1-\beta_j) \bigr)
\bigl( \nabla f(\mX_{i-1}) - \nabla f(\mX_{i}) \bigr)
+ \sum_{i=1}^k \bigl( \Pi_{j=i}^{k-1} (1-\beta_j) \bigr) \beta_{i-1} \bigl(
	\gg(\mX_{i-1}) - \nabla f(\mX_{i-1})\bigr) \;.
\]
Taking the norm on both sides, we get:
\begin{align*}
\| \mE_k \|_F
&\le \Pi_{i=0}^{k-1} (1-\beta_i) \|\mE_0\|_F 
+ \sum_{i=1}^k \bigl( \Pi_{j=i}^{k-1} (1-\beta_j) \bigr)
\|\nabla f(\mX_{i-1}) - \nabla f(\mX_i)\|_F 
\\
&\qquad \qquad
+ \Bigl\| \sum_{i=1}^k \bigl( \Pi_{j=i}^{k-1} (1-\beta_j) \bigr) \beta_{i-1}
\bigl(
	\gg(\mX_{i-1}) - \nabla f(\mX_{i-1})\bigr) \Bigr\|_F \;.
\end{align*}
Using Assumption~\ref{assump:L-smooth}, 
$\| \mX_{i} - \mX_{i-1} \|_F 
= \gamma_{i-1} \|\mV_i - \mX_{i-1}\|_F \le \gamma_{i-1} D_F$,
taking expectation on both sides and applying the Jensen inequality, 
we have for any $k \ge 1$:
\begin{align*}
\E[\| \mE_k \|_F]
&\le \Pi_{i=0}^{k-1} (1-\beta_i) \|\mE_0\|_F
+ L_F D_F \sum_{i=1}^k \bigl( \Pi_{j=i}^{k-1} (1-\beta_j) \bigr) \gamma_{i-1}
\\
&\qquad \qquad
+ \sqrt{\E\Bigl[\Bigl\| \sum_{i=1}^k 
\bigl( \Pi_{j=i}^{k-1} (1-\beta_j) \bigr) \beta_{i-1}
\bigl( \gg(\mX_{i-1}) - \nabla f(\mX_{i-1})\bigr) \Bigr\|_F^2\Bigr]} 
\\
&\le \Pi_{i=0}^{k-1} (1-\beta_i) \|\mE_0\|_F
+ L_F D_F \sum_{i=1}^k \bigl( \Pi_{j=i}^{k-1} (1-\beta_j) \bigr) \gamma_{i-1}
+ \sqrt{\sum_{i=1}^k \bigl( \Pi_{j=i}^{k-1} (1-\beta_j) \bigr)^2 \beta^2_{i-1}
\sigma^2} \;.
\qedhere
\end{align*}

\end{proof}

\begin{theorem}
	\label{theorem:frank-wolfe-convex}
Let CFW~\eqref{eq:frank-wolfe} be applied to problem~\eqref{eq:composite-constrained-problem} under convexity of $f$ and 
Assumption~\ref{assump:standard-noise}. 
Suppose that for any $k \ge 0$, $\mV_{k+1}$ satisfies:
\begin{equation}
	\label{eq:inexact-solution-LMO}
F_k(\mV_{k+1}) \le F_k(\mV_{k+1}^\star) + e_{k+1} \;,
\end{equation}
where $\mV_{k+1}^\star = \argmin_{\mX \in Q} \{ F_k(\mX) \}$
and $e_{k} \le \frac{q_1}{k} + q_2$ for some $q_1, q_2 \ge 0$.
Let $\gamma_k = \frac{2}{k+2}$.
If we choose $\beta_k \equiv 1$ and use a batchsize of $B$ for computing $\gg(\mX_k)$, then
 we have for any $K \ge 1$:
\[
F(\mX_K) - F^\star \le \frac{2(C_f + q_1)}{K+1} + q_2 
+ + \frac{D_F \sigma}{\sqrt{B}}\;,
\]
where 
\[  
C_f \defeq \sup_{\mX,\mS \in Q, \gamma \in [0,1], 
\mY = \mX + \gamma (\mS - \mX)} \frac{2}{\gamma^2}(f(\mY) - f(\mX) - \langle \nabla f(\mX), \mY - \mX \rangle) \;.
\]
If Assumption~\ref{assump:L-smooth} additionally holds, then
by choosing $\beta_k = \frac{1}{(k+1)^{2/3}}$ and $\mM_0 = \gg(\mX_0)$ with batch size $B$,
we have for any $K \ge 1$: 
\[
\E[F(\mX_K) - F^\star] \le \frac{2(C_f + q_1)}{K+1} 
+ \frac{12.4 L_FD_F^2 + 6.2 D_F \sigma / \sqrt{B}}{K^{1/3}} + q_2 \;,
\]
where $D_F \defeq \sup_{\mX,\mY \in Q}\{\|\mX - \mY\|_F\}$.
\end{theorem}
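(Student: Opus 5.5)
We follow the standard composite Frank-Wolfe analysis: a one-step descent inequality for $F$ with the error $\mE_k \defeq \mM_k - \nabla f(\mX_k)$ kept explicit, then the usual $\gamma_k = \frac{2}{k+2}$ recursion. Throughout, write $F_k(\mX) \defeq \langle \mM_k, \mX\rangle + \psi(\mX)$ for the subproblem objective, and let $\mX^\star \in Q$ be a minimizer of $F$.

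\textbf{Descent step.} The update is $\mX_{k+1} = \mX_k + \gamma_k(\mV_{k+1}-\mX_k)$ with $\mV_{k+1} \in Q$. By the definition of $C_f$,
\[
f(\mX_{k+1}) \le f(\mX_k) + \gamma_k\langle \nabla f(\mX_k), \mV_{k+1}-\mX_k\rangle + \tfrac{\gamma_k^2}{2}C_f .
\]
By convexity of $\psi$,
\[
\psi(\mX_{k+1}) \le (1-\gamma_k)\psi(\mX_k)+\gamma_k\psi(\mV_{k+1}).
\]
Adding the two inequalities and writing $\nabla f(\mX_k) = \mM_k - \mE_k$ gives
\[
F(\mX_{k+1}) \le F(\mX_k) + \gamma_k\bigl[F_k(\mV_{k+1}) - \langle \mM_k,\mX_k\rangle - \psi(\mX_k)\bigr] - \gamma_k\langle \mE_k,\mV_{k+1}-\mX_k\rangle + \tfrac{\gamma_k^2}{2}C_f .
\]
Next we use the inexactness condition~\eqref{eq:inexact-solution-LMO} together with optimality of $\mV_{k+1}^\star$ tested against $\mX^\star$: $F_k(\mV_{k+1}) \le F_k(\mX^\star) + e_{k+1}$. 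Converting $\mM_k$ back to $\nabla f(\mX_k)+\mE_k$ in $F_k(\mX^\star)-\langle\mM_k,\mX_k\rangle$ and applying convexity, $\langle\nabla f(\mX_k),\mX^\star-\mX_k\rangle \le f(\mX^\star)-f(\mX_k)$, yields
\[
h_{k+1} \le (1-\gamma_k)h_k + \gamma_k e_{k+1} + \gamma_k\langle \mE_k, \mX^\star - \mV_{k+1}\rangle + \tfrac{\gamma_k^2}{2}C_f, \qquad h_k \defeq F(\mX_k)-F^\star .
\]
The error term is controlled by Cauchy--Schwarz, since both points lie in $Q$: $\langle \mE_k, \mX^\star-\mV_{k+1}\rangle \le D_F\|\mE_k\|_F$.

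\textbf{Case $\beta_k\equiv 1$.} Here $\mM_k = \gg(\mX_k)$ is the minibatch gradient, so Jensen gives $\E\|\mE_k\|_F \le \sigma/\sqrt{B}$. Let $A_k \defeq \gamma_k e_{k+1} + \gamma_k D_F\sigma/\sqrt B + \tfrac{\gamma_k^2}{2}C_f$. Multiplying the recursion by $(k+1)(k+2)$ telescopes because $(1-\gamma_k)(k+1)(k+2) = k(k+1)$; unrolling from $h_0$ with $\gamma_0 = 1$ removes the initial gap. The $C_f$ and $q_1/(k+1)$ contributions sum to at most $2(C_f+q_1)/(K+1)$, while constant per-step terms $c$ (namely $q_2$ and $D_F\sigma/\sqrt B$) average back to $c$, since $\sum_k 2(k+1) \le K(K+1)$. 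Equivalently, one can run a direct induction on $h_K \le \frac{2(C_f+q_1)}{K+1}+q_2+\frac{D_F\sigma}{\sqrt B}$. Some care with constants is needed: the indexing $e_{k+1}\le q_1/(k+1)+q_2$ must produce $2q_1/(K+1)$ and not something larger.

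\textbf{Momentum case.} Apply Lemma~\ref{lemma:momentum-gradient-error-standard-noise} with $\mE_0 = \gg(\mX_0)-\nabla f(\mX_0)$, so $\E\|\mE_0\|_F \le \sigma/\sqrt B$, and with the per-sample variance replaced by $\sigma^2/B$. Combined with Lemma~\ref{lemma:momentum-coefficients-bounds}, and using $\Pi_{i=0}^{k-1}(1-\beta_i)=0$ because $\beta_0=1$, this gives $\E\|\mE_k\|_F \le \frac{3L_FD_F}{k^{1/3}} + \frac{\sigma}{\sqrt B\,k^{1/3}}$ for $k\ge1$, while $\E\|\mE_0\|_F \le \sigma/\sqrt B$. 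Plugging this into the same telescoping, the error contributes to $K(K+1)\,h_K$ an amount $\sum_k 2(k+1)D_F\E\|\mE_k\|_F$. This sum is bounded by Lemma~\ref{lemma:SimpleRecurrence1}, $\sum_{i=1}^{K-1}\frac{i+1}{i^{1/3}} \le 2.1K^{5/3}$, plus the $k=0$ term. Dividing by $K(K+1)\ge K^2$ gives the constants $2\cdot 2.1\cdot 3\,L_FD_F^2 \approx 12.4\,L_FD_F^2$ and $6.2\,D_F\sigma/\sqrt B$ on the $K^{-1/3}$ term; the $k=0$ term is absorbed into the slack of Lemma~\ref{lemma:SimpleRecurrence1}.

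The main obstacle is this last bookkeeping step: matching exactly the constants $12.4$ and $6.2$, including absorbing the $k=0$ term and the $\sigma/\sqrt B$ rescaling. The convex-analytic part of the argument is routine.
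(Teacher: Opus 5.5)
Your proposal follows the paper's proof. Multiplying your recursion for $h_{k+1}$ by $(k+1)(k+2)=2A_{k+1}$ gives exactly the paper's $A_k$-weighted inequality with $a_k=k$. The error control then uses Lemma~\ref{lemma:momentum-gradient-error-standard-noise}, Lemma~\ref{lemma:momentum-coefficients-bounds} and Lemma~\ref{lemma:SimpleRecurrence1} in the same way, including the Cauchy--Schwarz bound $D_F\|\mE_k\|_F$. That bound is needed because $\mV_{k+1}$ depends on $\mE_k$.

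The one real slip is the final arithmetic, and it does not yield the stated constant. $2\cdot 2.1\cdot 3=12.6$, not $\approx 12.4$. Moreover, $\frac{2.1K^{5/3}}{K(K+1)/2}\,K^{1/3}=\frac{4.2K}{K+1}\to 4.2$, so the simplified form $2.1K^{5/3}$ cannot give $12.4$ once $K>62$. The paper's write-up, which uses $A_K\ge K^2/2$, has the same looseness.

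The fix is to keep the sharper bound from the same lemma, $\sum_{i=1}^{K-1}\frac{i+1}{i^{1/3}}\le 0.6K^{5/3}+1.5K^{2/3}$, together with the exact $A_K=K(K+1)/2$. This gives $\frac{2}{K(K+1)}\sum_{i=1}^{K-1}\frac{i+1}{i^{1/3}}\le \bigl(1.2+\frac{1.8}{K+1}\bigr)K^{-1/3}\le 2.1\,K^{-1/3}$. The resulting coefficients are $6.3\,L_FD_F^2$ and at most $2.1+2=4.1$ for $D_F\sigma/\sqrt{B}$, both within the stated $12.4$ and $6.2$. Here the $2$ comes from treating the $k=0$ term separately, as the paper does.

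Your bookkeeping of the $k=0$ term also does not match your own constant. If you absorb it into the slack of Lemma~\ref{lemma:SimpleRecurrence1}, which is valid, the $\sigma$-coefficient you get is $4.2$, not $6.2$. The paper's $6.2$ is $4.2+2$, where the $2$ comes from $\frac{D_F\E[\|\mE_0\|_F]}{A_K}\le\frac{2D_F\sigma/\sqrt{B}}{K^{1/3}}$.
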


\begin{proof}
Let $\gamma_k = \frac{a_{k+1}}{A_{k+1}}$ where $A_{k+1} = A_k + a_{k+1}$ 
for $k \ge 0$ and $A_0 = 0$.
Using convexity of $f$, we have: 
\begin{align*}
&\quad 
A_k F(\mX_k) + a_{k+1} F(\mX^\star)
\\
&\ge A_k F(\mX_k) 
+ a_{k+1} \bigl( f(\mX_k) + \langle \nabla f(\mX_k), \mX^\star - \mX_k \rangle
+ \psi(\mX^\star) \bigr) \\
&= A_k F(\mX_k) 
+ a_{k+1} \bigl( f(\mX_k) 
+ \langle \mM_k, \mX^\star \rangle + \psi(\mX^\star)
+
\langle \nabla f(\mX_k) - \mM_k, \mX^\star \rangle
- \langle \nabla f(\mX_k), \mX_k \rangle
\bigr) \;.
\end{align*}
Let $\mE_k \defeq \mM_k - \nabla f(\mX_k)$.
Using $F_k(\mX^\star) \ge F_k(\mV^\star) \ge F_k(\mV_{k+1}) - e_{k+1}$,
we get: 
\allowdisplaybreaks{
\begin{align*}
&\quad 
A_k F(\mX_k) + a_{k+1} F(\mX^\star)
\\
&\ge A_k F(\mX_k) 
+ a_{k+1} \bigl( f(\mX_k) 
+ \langle \mM_k, \mV_{k+1} \rangle + \psi(\mV_{k+1}) - e_{k+1}
+
\langle \nabla f(\mX_k) - \mM_k, \mX^\star \rangle
- \langle \nabla f(\mX_k), \mX_k \rangle
\bigr) 
\\
&= A_k F(\mX_k) 
+ a_{k+1} \bigl( f(\mX_k) +
\langle \nabla f(\mX_k), \mV_{k+1} - \mX_k \rangle
+ \psi(\mV_{k+1}) - e_{k+1}
- \langle \mE_k, \mX^\star - \mV_{k+1} \rangle
\bigr) 
\\
&= A_{k+1} f(\mX_k) + a_{k+1} \langle \nabla f(\mX_k), \mV_{k+1} - \mX_k \rangle 
+ A_k \psi(\mX_k) + a_{k+1} \psi(\mV_{k+1})
- a_{k+1} e_{k+1}
- a_{k+1} \langle \mE_k, \mX^\star - \mV_{k+1} \rangle 
\\
&\ge A_{k+1} f(\mX_k) + A_{k+1} \langle \nabla f(\mX_k), \mX_{k+1} - \mX_k \rangle 
+ A_{k+1} \psi((1-\gamma_k)\mX_k + \gamma_k \mV_{k+1})
- a_{k+1} e_{k+1}
- a_{k+1} D_F\|\mE_k\|_F
\\ 
&= A_{k+1} F(\mX_{k+1}) - A_{k+1}(f(\mX_{k+1}) - f(\mX_k) - \langle \nabla f(\mX_k), \mX_{k+1} - \mX_k \rangle)
- a_{k+1} e_{k+1}
- a_{k+1} D_F\|\mE_k\|_F
\\ 
&\ge A_{k+1} F(\mX_{k+1}) - \frac{\gamma_k^2 A_{k+1}}{2} C_f
- a_{k+1} e_{k+1}
- a_{k+1} D_F\|\mE_k\|_F \;,
\end{align*}}
where the second inequality follows from the convexity of $\psi$ and the third inequality follows from the definition of $C_f$.
Substracting $A_{k+1} F(\mX^\star)$ from both sides gives:
\[
A_{k+1}( F(\mX_{k+1}) - F^\star) \le A_k (F(\mX_k) - F^\star) 
+ \frac{a_{k+1}^2}{2A_{k+1}} C_f + a_{k+1} e_{k+1}
+ a_{k+1} D_F\|\mE_k\|_F \;.
\]
Summing up from $k=0$ to $k = K-1$, we have for any $K \ge 1$:
\[
A_{K} (F(\mX_{K}) - F^\star) \le A_0 (F(\mX_0) - F^\star) 
+ \frac{C_f}{2} \sum_{k=0}^{K-1} \frac{a_{k+1}^2}{A_{k+1}} 
+ \sum_{k=0}^{K-1} a_{k+1} e_{k+1}	
+ D_F \sum_{k=0}^{K-1} a_{k+1} \|\mE_k\|_F \;.
\]
Dividing both sides by $A_K$ and substituting $A_0 = 0$, we get:
\[
F(\mX_{K}) - F^\star \le 
\frac{1}{A_K} \frac{C_f}{2} \sum_{k=0}^{K-1} \frac{a_{k+1}^2}{A_{k+1}} 
+ \frac{1}{A_K} \sum_{k=0}^{K-1} a_{k+1} e_{k+1}	
+ \frac{D_F}{A_K} \sum_{k=0}^{K-1} a_{k+1} \|\mE_k\|_F \;.
\]
Let $a_{k} = k$ for $k \ge 1$. Then we have $A_k = \frac{k(k+1)}{2}$
and $\gamma_k = \frac{a_{k+1}}{A_{k+1}} = \frac{2}{k+2}$. 
Consider mini-batch SGD with $\beta_k \equiv 1$ and $\mM_k = \gg(\mX_k)$ using   a batch size of $B$. Then $\E[\|\mE_k\|_F]  
\le \E[\|\mE_k \|_F^2]^{1/2} \le \frac{\sigma}{\sqrt{B}}$ for any $k \ge 0$.
It follows that: 
\begin{align*}
\E[F(\mX_K) - F^\star] 
&\le \frac{2}{K(K+1)} \frac{C_f}{2} \sum_{k=0}^{K-1} \frac{2(k+1)}{k+2}
+ \frac{2}{K(K+1)} \sum_{k=0}^{K-1} (k+1) \Bigl( \frac{q_1}{k+1}+q_2 \Bigr)
+ \frac{D_F \sigma}{\sqrt{B}}
\\
&\le \frac{2C_f K}{K(K+1)}  + \frac{2q_1 K}{K(K+1)} + \frac{2q_2}{K(K+1)}
\frac{(K+1)K}{2} + \frac{D_F \sigma}{\sqrt{B}} = \frac{2(C_f + q_1)}{K+1} + q_2
+ \frac{D_F \sigma}{\sqrt{B}} \;.
\end{align*}
Suppose now Assumption~\ref{assump:L-smooth} holds and 
$\beta_k = \frac{1}{(k+1)^{2/3}}$.
Applying Lemma~\ref{lemma:momentum-gradient-error-standard-noise} and 
substituting the bounds from Lemma~\ref{lemma:momentum-coefficients-bounds},
we have for any $k \ge 1$:
\[
\E[\|\mE_k\|_F]
\le L_FD_F \frac{3}{k^{1/3}} 
+ \frac{1}{k^{1/3}} \sigma
= \frac{3L_FD_F + \sigma}{k^{1/3}}
\;,
\]
where we used $\Pi_{i=0}^{k-1}(1-\beta_i) = 0$ since $\beta_0 = 1$.
If follows that for any $K \ge 2$,
\[
\frac{D_F}{A_K} \sum_{k=0}^{K-1} a_{k+1} \E[\|\mE_k\|_F]
\le \frac{D_F}{A_K} \sum_{k=1}^{K-1} (k+1) \frac{3L_FD_F + \sigma}{k^{1/3}}
+ \frac{D_F}{A_K}\E[\|\mE_0\|_F]
\le \frac{3L_FD_F^2 + \sigma D_F}{A_K} 2.1 K^{5/3}
+ \frac{D_F}{A_K}\E[\|\mE_0\|_F] \;,
\]
where we used Lemma~\ref{lemma:SimpleRecurrence1}. 
(For $K=1$, $\frac{D_F}{A_K} \sum_{k=0}^{K-1} a_{k+1} \E[\|\mE_k\|_F] = 
\frac{D_F}{A_K}\E[\|\mE_0\|_F]$.)
Substituting the previous display and 
$A_K = \frac{K(K+1)}{2} \ge \frac{K^2}{2}$ into the main recurrence and 
using $\E[\|\mE_0\|_F] = \E[\|\mM_0 - \nabla f(\mX_0)\|_F] = 
\E[\|\gg(\mX_0) - \nabla f(\mX_0)\|_F] \le \sigma / \sqrt{B}$, we conclude the proof.
\end{proof}

\subsection{Discussion of CFW under sparse noise with spikes}
\label{sec-CFW-Spike-Noise}
Suppose the stochastic gradient is corrupted by sparse noise with spikes as described in Assumption~\ref{assump:sparse-noise}. 
Recall the definition of the noise $N(\mX) = \ell \mU_{X} \mV_X^T$
which satisfies $\E[\| N(\mX) \|_F^2] = r \ell^2$.
According to Theorem~\ref{theorem:frank-wolfe-convex}, the convergence rate of 
standard CFW~\eqref{eq:frank-wolfe} under this assumption is of order 
$K = \cO(\frac{L^3 D_F^6 + D_F^3 r^{3/2}\ell^3}{\epsilon^3})$ (to reach 
$\E[F(\mX_K) - F^\star] \le \epsilon$). We next consider the variant of 
CFW~\eqref{eq:frank-wolfe} with pre-spectral clipping:

\begin{equation}
	\label{eq:clipped-momentum}
	\mM_{k+1} = (1 - \beta_k) \mM_k + \beta_k \tilde{\gg}(\mX_{k+1}),
	\quad 
	\tilde{\gg}(\mX_{k+1}) = \clipSp_{c_k}(\gg(\mX_{k+1})) \;.
\end{equation}
Recall that at any point $\mX$, $\gg(\mX) = \nabla f(\mX) + \ell \mU_{X} \mV_{X}^T$.
Intuivitively, we have $\clipSp_c(\gg(\mX)) 
\approx \nabla f(\mX) + c \mU_{X} \mV_{X}^T$ for large $\ell$.
The clipped stochastic gradient has noise level controlled by $c$ instead of $\ell$.
On the other hand, the estimator becomes biased, i.e., 
$\E[\clipSp_c(\gg(\mX))] \neq \nabla f(\mX)$. 
In certain cases, however, 
the bias can be negligible with high probability. 
For instance, suppose $\nabla f(\mX)$ has rank much smaller compared to $q \defeq \min(m,n)$. Then the injected low-rank noise component are likely orthogonal to the singular vectors of $\nabla f(\mX)$, which implies $\clipSp_c(\gg(\mX_k)) = \nabla f(\mX) + c\mU_X \mV_X^T$ with high probability. 
In general, let $\mR \defeq \gg(\mX) - \clipSp_c(\gg(\mX))$.
Suppose $\|\nabla f(\mX)\|_2 \le c 
\le \ell - \|\nabla f(\mX)\|_2$, then the bias can be written as:
\[
\text{bias}
=
\E[\clipSp_c(\gg(\mX))] - \nabla f(\mX)
= - \E[\mR] = - \E[\mU_{\gg} \mD \mV_{\gg}^T]
\]
where $\mD$ is a diagonal matrix with element $\mD_{ii} = \sigma_i(\gg) - c$ and
$\mU_g$ and $\mV_g$ are the top left and right $r$ singular vectors of $\gg$. 
From the perturbation theory of singular vectors, $\mU_g$ and $\mV_g$ are close 
to $\mU_N$ and $\mV_N$ which have zero mean (when $\ell$ is large enough).
We leave the rigorous theoretical analysis of the bias error to future work. We provide empirical performance of using momentum with and without pre-spectral clipping under spectral spike noise in Figure~\ref{fig:syn_noise_2}.

\subsection{Solution of the Subproblem with Other Regularizers}
\label{sec:other-regularizers-appendix}
Consider subproblem~\eqref{eq:general-subproblem}.
Suppose that $\psi$ acts only on the singular values of $\mX$, i.e.,
 $\psi(\mX) = \tilde{\psi}(\sigma(\mX))$ for some $\tilde{\psi}: \R^q \to \R$ 
 where $\sigma(\mX)$ denotes the vector of singular values with 
 $q \defeq \min(m,n)$.
Then $\psi$ is unitarily invariant, i.e., for any orthogonal matrices $\mU$ and $\mV$ of appropriate dimensions, we have $\psi(\mX) = \psi(\mU \mX \mV^T)$.
Let $\mG = \mU_G \mS_G \mV_G^T$ and $\mX = \mU_X \mS_X \mV_X^T$ be the SVD of $\mG$ and $\mX$, respectively. 
Then we can first minimize $\langle \mG, \mX \rangle$ over the choices of $\mU_X$ and $\mV_X$.
By Von Neumann's Trace Inequality, we have:
$\langle \mG, \mX \rangle \ge -\sum_{i=1}^q \sigma_i(\mG) \sigma_{i}(\mX)$.
Choosing $\mU_X = -\mU_G$ and $\mV_X = \mV_G$, we have:
$\langle \mG, \mX \rangle = -\sum_{i=1}^q \sigma_i(\mG)\sigma_i(\mS_X)$.
Then the original problem is reduced to:
\[  
\min_{\xx \in \R^q, \xx_i \in [0, D_2]}
\biggl\{ -\sum_{i=1}^q \sigma_i(\mG) \xx_i + \tilde{\psi}(\xx) \biggr\} \;.
\] 
If $\tilde{\psi}$ is separable, i.e., $\tilde{\psi}(\xx) = \sum_{i=1}^q \psi_i(\xx_i)$ for some $\psi_i: \R_{\ge0} \to \R$, then the problem further decomposes into $q$ independent one-dimensional problems:
\begin{equation}
    \label{eq:one-dimensional-subproblem}
\sigma_i^\star = \argmin_{x \in [0, D_2]}\{ -\sigma_i(\mG) \cdot x + \psi_i(x) \}, \quad i=1,\ldots,q \;.
\end{equation}
Then $\mX^\star = - \mU_G \mS^\star \mV_G^T$ where $\mS^\star$ is the diagonal 
rectangular matrix with $\mS^\star_{ii} = \sigma_i^\star$.
For non-separable $\tilde{\psi}$, one can still use standard convex optimization methods to solve the reduced $q$-dimensional problem, e.g. 
when $\tilde{\psi}(\xx) = \frac{b}{2}(\sum_{i=1}^q \xx_i)^2$, which we leave as future explorations.

\begin{algorithm}[tb]
    \caption{Matrix inverse square root via Newton-Schulz}
	\label{Alg:NS}
	\begin{algorithmic}[1]
		\STATE {\bfseries Input:} P.D. $\mX \in \R^{m \times m}$, $\alpha \ge \sigma_{\max}(\mX) > 0$, $K > 0$.
		\STATE {\bfseries Signature:}
		$\operatorname{MISR}(\mX,\alpha,K)$.
		\STATE
		Set $\hat{\mX} = \mX / \alpha$
		and initialize $\mY_0 = \hat{\mX}$ and $\mZ_0 = \mI$.
		\FOR{$k=0,1,2\ldots,K-1$}
		\STATE
		$\mT_k = \frac{1}{2}(3\mI - \mZ_k\mY_k)$.
		\STATE
		$\mY_{k+1} = \mY_k \mT_k$.
		\STATE
		$\mZ_{k+1} = \mT_k \mZ_k$.
		\ENDFOR
		\STATE {\bfseries Return:} $\mZ_{K} / \alpha^{1/2}$.
	\end{algorithmic}
\end{algorithm}

\begin{figure*}[tb!]
    \centering
    \includegraphics[width=0.6\textwidth]{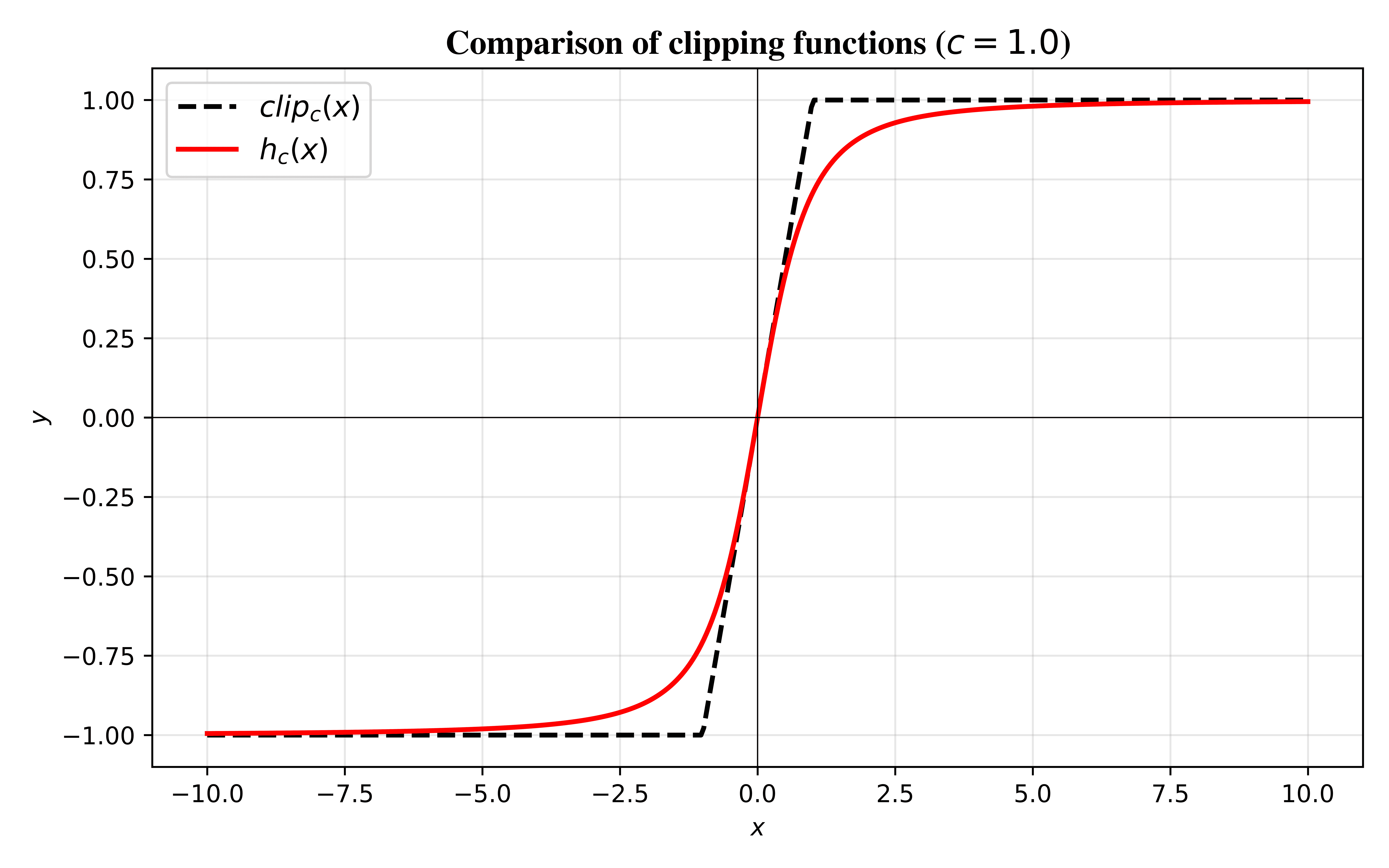}
    \vspace{-2mm}
    \caption{The plot of $\clip_c(x) \defeq \sign(x) \min(|x|, c)$ and $h_c(x) \defeq \frac{x}{\sqrt{1+x^2/c^2}}$.}
    \label{fig:clip-comparison}
\end{figure*}

\begin{figure*}[tb!]
    \centering
    \includegraphics[width=0.95\textwidth]{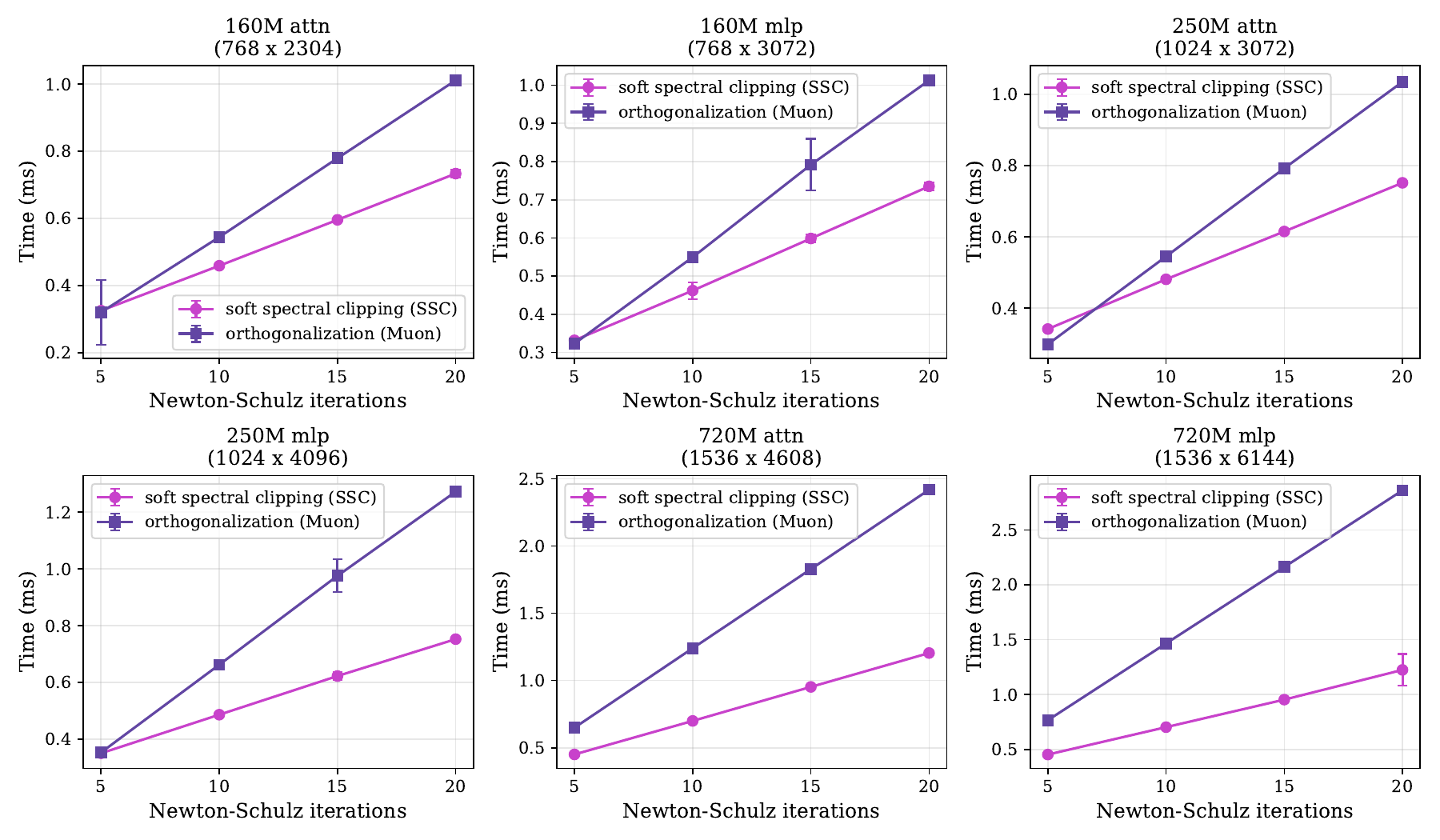}
    \vspace{-2mm}
    \caption{Runtime comparison between Soft Spectral Clipping (SSC) and the orthogonalization operator on matrices with various sizes. By default, we use 10 Newton-Schulz iterations for SSC and 5 iterations for the latter for running the corresponding optimizers. }
    \label{fig:ssc-vs-orth}
\end{figure*}

\section{Additional Experiment Results and Detailed Setup}

\subsection{Synthetic experiments}
We consider the binary classification task of minimizing the logistic loss:
$f(\mX) = \frac{1}{n}\sum_{i=1}^n \log (1 + \exp(-y_i\langle \mA_i, \mX \rangle))$,
	where $\mA_i \in \R^{d \times d}$ are data matrices and $y_i \in \{-1, 1\}$ are labels from the training dataset. 
    Let $x = \vect(\mX)$ and $a_i = \vect(\mA_i)$.
    The gradients and the Hessians are:
    \[
    \nabla f(\mX) = \frac{1}{n}\sum_{i=1}^n \biggl( \frac{-y_i}{1 + \exp(y_i \langle \mA_i, \mX \rangle)} 
    \biggr) \mA_i, 
    \quad 
    \nabla^2 f(x) = \frac{1}{n} \sum_{i=1}^n \left( \frac{\exp(y_i a_i^T x)}{(1 + \exp(y_i a_i^T x))^2} \right) a_i a_i^T \;,
    \]
Therefore, the loss $f$ satisfies Assumption~\ref{assump:bounded-gradient} and~\ref{assump:L-smooth} with
$M \le \frac{1}{n}\sum_{i=1}^n 
\|\mA_i\|_2$ and $L_F \le \frac{1}{4n}
\|\sum_{i=1}^n a_i a_i^T\|_2 = 
\frac{1}{4n}\sigma_{\max}(\tilde{\mA})^2$
where $\tilde{\mA} \in \R^{n \times d^2}$
is the matrix whose rows are $a_i^T$.

\subsubsection{Validation of constrained optimization with weight regularization} 
\label{sec:syn-exp-weight-reg}

\textbf{Data generation.}
We examine a standard over-parameterized regime with label noise, a setting where  unregularized optimization is typically prone to overfitting. We use $d=10$ and generate $n=80 < 100$ training samples and $n_{\text{test}}=200$ test samples ($D_{\text{test}}$) as follows. 
First, a weight matrix $\tilde{\mX}$ is sampled entry-wise from a standard normal distribution $\mathcal{N}(0, 1)$. 
The feature matrices $\mA_i$ are similarly sampled with entries from $\mathcal{N}(0, 1)$. The labels are then generated via:
$y_i = \operatorname{sign}(\langle \mA_i, \mX^\star \rangle + \xi_i)$, where $\xi_i \sim \mathcal{N}(0, \sigma^2_{\text{noise}})$.
We set the noise level $\sigma_{\text{noise}} = 5.0$. 

\textbf{Configuration.}
Let $F(\mX) = \min_{\mX \in Q_2} \{f(\mX) + \frac{b}{2} \|\mX\|_F^2 \}$. 
We evaluate the update rule~\eqref{eq:SGDM-spectral-clipping} to verify its convergence properties on the function $F$. We vary the regularization coefficient $b \in \{0,0.1,1.0\}$ and $D_2 \in \{0.2, 1, 5\}$. We note that the unconstrained solution of $f(\mX)$ has spectral norm close to $1$; thus, these three choices of $D_2$ correspond to the situations where the solution is outside, close to and inside the spectral ball,respectively. We adopt the same choice of hyper-parameters suggested by Theorem~\ref{theorem:SGDM-spectral-clipping-Frank-Wolfe}: 
\[ 
\frac{1}{\lambda} = D_2, \quad 
\alpha = \frac{\lambda}{b}, \quad
c_k \equiv b D_2, \quad
\lambda \eta_k = \frac{2}{k+2}
\;.
\]
\textbf{Results.}
For each pair of $b$ and $D_2$, 
we report four metrics:
1) Function residual $F(\mX_k) - F^\star$ where $F^\star$ is the optimal value.
2) Test loss: $f_{\text{test}}(\mX_k) = \frac{1}{n_{\text{test}}} \sum_{i \in D_{\text{test}}}\log (1 + \exp(-y_i\langle \mA_i, \mX_k \rangle))$.
3) Spectral norm of $\mX_k$. 
4) Frobenius norm of $\mX_k$. 
Figures~\ref{fig:syn-FW1},~\ref{fig:syn-FW2}, and~\ref{fig:syn-FW3} display results using full-batch gradient descent with $b_k \equiv 1$, while Figure~\ref{fig:syn-FW1 stochastic} utilizes a batch size of $20$ and $\beta_k = \frac{1}{(k+1)^{2/3}}$. 
These figures demonstrate that update rule~\eqref{eq:SGDM-spectral-clipping} exactly minimizes the composite function $F$ within the spectral ball of radius $D_2$ using the prescribed parameters. Furthermore, we observe that appropriate regularization ($b > 0$) improves generalization (lower test loss) compared to the unregularized case ($b=0$), and that larger values of $b$ result in solutions with smaller Frobenius norms, consistent with theoretical expectations.

\begin{figure*}[tb!]
    \centering
    \includegraphics[width=0.95\textwidth]{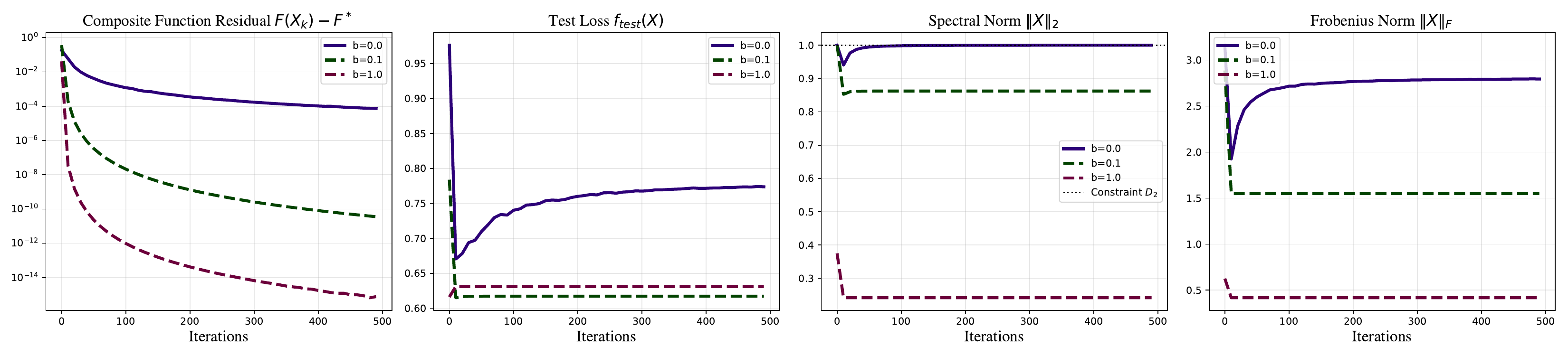}
    \vspace{-2mm}
    \caption{Convergence dynamics of update rule~\eqref{eq:SGDM-spectral-clipping} using full-batch gradient. We report the
    function residual of the constrained composite logistic loss $F(\mX)$ with $D_2=1.0$ under different $b$.
    The unconstrained solution of $f(\mX)$ has spectral norm close to $1$.}
    \label{fig:syn-FW1}
\end{figure*}

\begin{figure*}[tb!]
    \centering
    \includegraphics[width=0.95\textwidth]{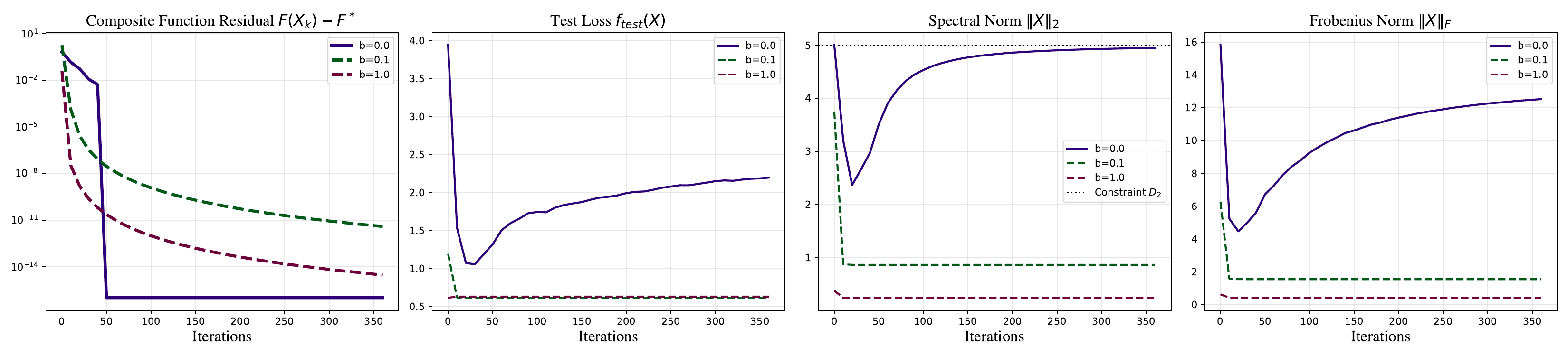}
    \vspace{-2mm}
    \caption{Convergence dynamics of update rule~\eqref{eq:SGDM-spectral-clipping} using full-batch gradient. We report the
    function residual of the constrained composite logistic loss $F(\mX)$ with $D_2=5.0$ under different $b$.
    The unconstrained solution of $f(\mX)$ has spectral norm close to $1$.}
    \label{fig:syn-FW2}
\end{figure*}

\begin{figure*}[tb!]
    \centering
    \includegraphics[width=0.95\textwidth]{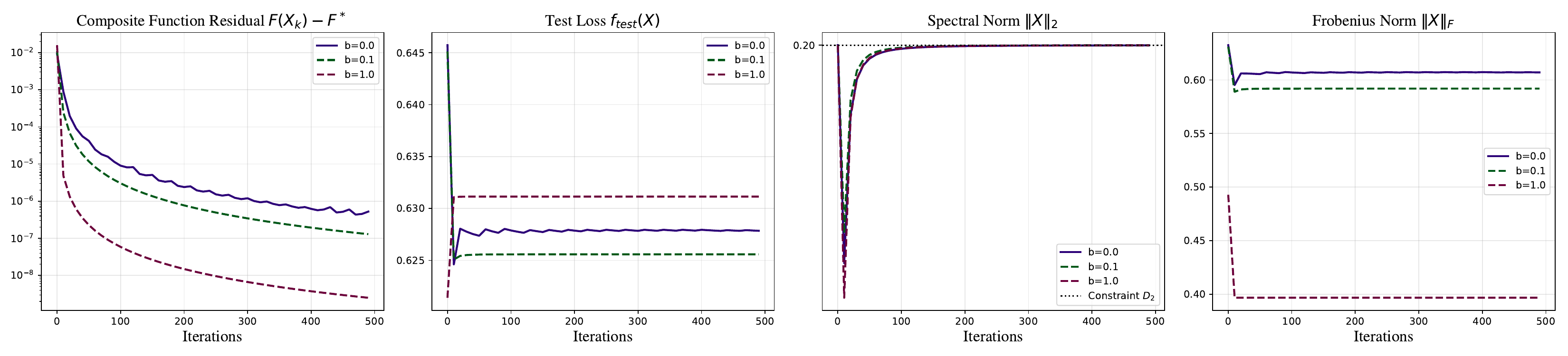}
    \vspace{-2mm}
    \caption{Convergence dynamics of update rule~\eqref{eq:SGDM-spectral-clipping} using full-batch gradient. We report the
    function residual of the constrained composite logistic loss $F(\mX)$ with $D_2=0.2$ under different $b$.
    The unconstrained solution of $f(\mX)$ has spectral norm close to $1$.}
    \label{fig:syn-FW3}
\end{figure*}

\begin{figure*}[tb!]
    \centering
    \includegraphics[width=0.95\textwidth]{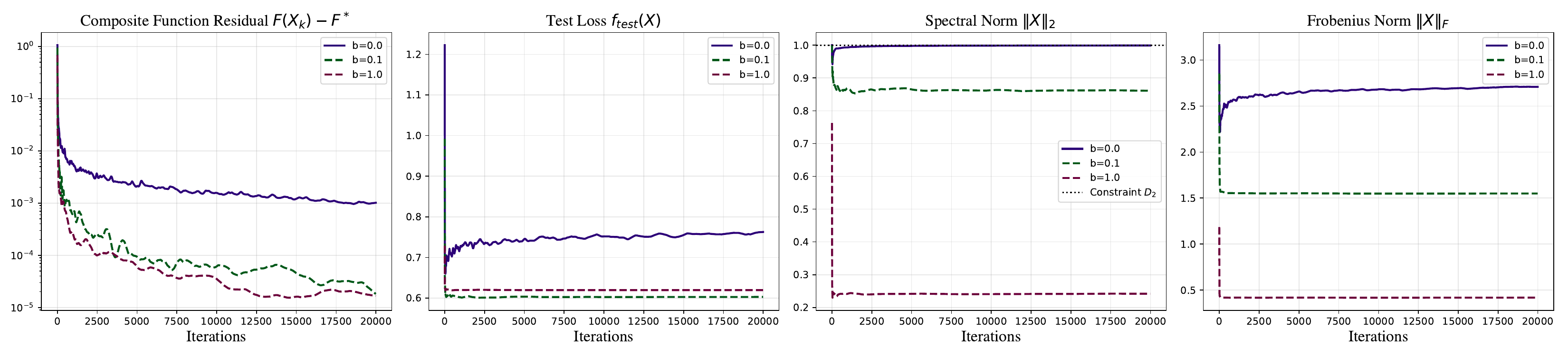}
    \vspace{-2mm}
    \caption{Convergence dynamics of update rule~\eqref{eq:SGDM-spectral-clipping} using stochastic gradient. We report the
    function residual of the constrained composite logistic loss $F(\mX)$ with $D_2=1.0$ under different $b$.
    The unconstrained solution of $f(\mX)$ has spectral norm close to $1$.}
    \label{fig:syn-FW1 stochastic}
\end{figure*}

\subsubsection{Validation of robustness to sparse spectral spikes} 
\label{sec:syn-exp-noise-spike}

\textbf{Data generation.}
We now investigate the robustness of spectral clipping against sparse spectral  noise with spikes. We consider the same matrix logistic regression as described previously with $d=50, n=100$. We simulate the stochastic gradient oracle of the form $\gg(\mX) = \nabla f(\mX) + \mN$ where $\mN = \ell \mU \mV^T$. 
Here, $\ell$ represents the noise magnitude, and the matrices $\mU, \mV \in \R^{d \times r}$ are sampled independently from the uniform distribution on the Stiefel manifold. We construct these matrices using the standard polar decomposition method~\citep{statistics-manifold}. Specifically, let $\mA \in \R^{d \times r}$ be a random matrix with independent entries $\mA_{ij} \sim \cN(0, 1)$. We compute $\mU = \mA (\mA^T \mA)^{-1/2}$ (and similarly for $\mV$). By construction, $\mU$ has orthonormal columns ($\mU^T \mU = \mI_r$). Moreover, due to the symmetry of the standard normal distribution, we have $\E[\mU] = \mathbf{0}$ (Theorem 2.2.1, \citep{statistics-manifold}). 
We next show that $\mU$ satisfies Definition~\ref{df:noise-rank-r} with $\kappa=1$.

Let $\mP \defeq \mU \mU^T = \mA(\mA^T \mA)^{-1}\mA^T$ be the projection matrix. Since $\mA$ has rank $r$ with probability $1$, we have $\trace(\mP) = r$.
Consider any fixed orthogonal matrix $\mQ \in \R^{d \times d}$. By the rotational invariance of the Gaussian distribution, $\mQ \mA$ has the same distribution as $\mA$~\citep{HighDimensionalP}. Consequently, the projection matrix constructed from $\mQ \mA$, $\mQ \mP \mQ^T = (\mQ\mA)((\mQ\mA)^T (\mQ\mA))^{-1}(\mQ\mA)^T$, shares the distribution of $\mP$. Taking the expectation yields $\E[\mP] = \mQ \E[\mP] \mQ^T$. Since this holds for any orthogonal $\mQ$, Schur's Lemma implies that $\E[\mP] = \lambda \mI_d$ for some scalar $\lambda \ge 0$. Using the linearity of the trace, we have:
\[ 
r = \E[\trace(\mP)] = \trace(\E[\mP]) = \trace(\lambda \mI_d) = \lambda d \implies \lambda = \frac{r}{d} 
\implies 
\E[\mU \mU^T] = \frac{r}{d} \mI_d \;.
\] 
\textbf{Results.}
For the problem instance, the gradient spectral norm is bounded by $M \approx 15$. We evaluate the algorithms under varying noise scales $\ell \in \{10, 100, 1000\}$. We compare three update rules: 1) Vanilla SGD, 2) SGD with global clipping, and 3) SGD with spectral clipping.
We employ a decay schedule $\eta_k = \frac{\eta_0}{\sqrt{k+1}}$ for all methods. We tune the initial learning rate $\eta_0$ individually for each method to maximize performance. For spectral clipping, we fix the threshold $c=15$ across all noise levels, matching the theory. 

Figure~\ref{fig:syn_noise_1} illustrates the convergence behavior (training loss). In the low-noise regime ($\ell=10$), all three methods perform comparably. As the noise scale $\ell$ increases, the performance of spectral clipping remains robust, maintaining fast convergence. In contrast, Vanilla SGD and SGD with global clipping require significantly reduced learning rates $\eta_0$ to prevent divergence, resulting in slower convergence.
We observe a similar phenomenon in the momentum setting. Figure~\ref{fig:syn_noise_2} compares Momentum SGD with and without spectral clipping (applied to the raw stochastic gradient). The results confirm that spectral clipping effectively filters sparse spectral spikes, enabling the momentum buffer to accumulate valid signal directions.

\begin{figure*}[tb!]
    \centering
    \includegraphics[width=0.95\textwidth]{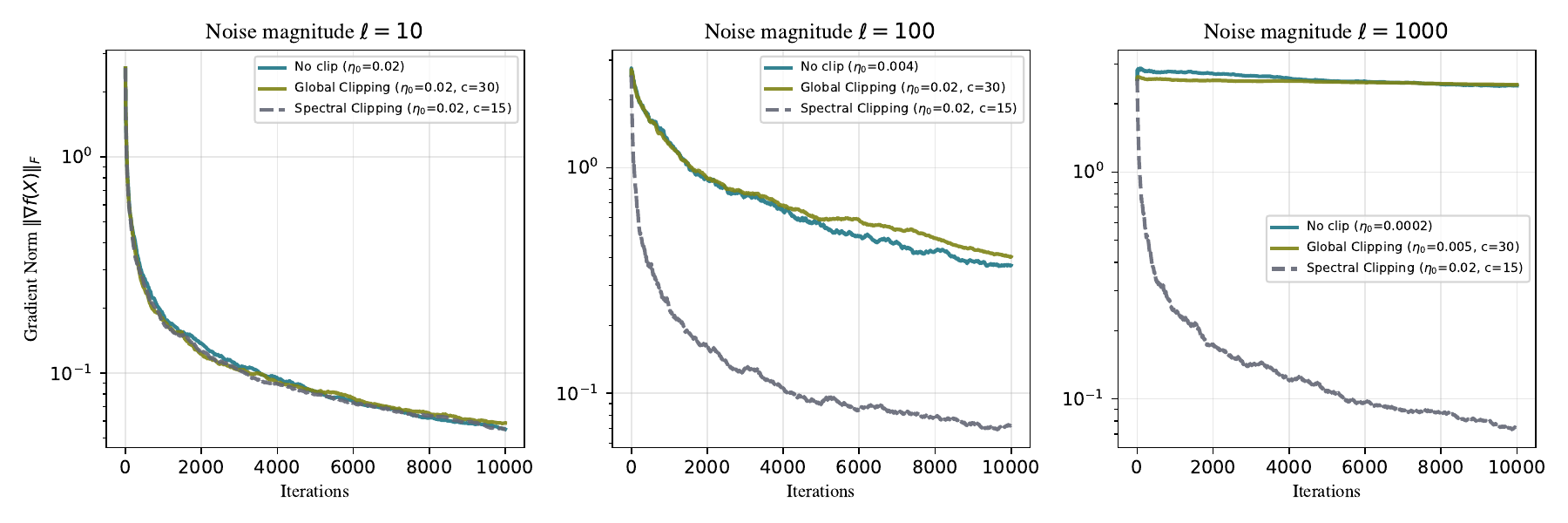}
    \vspace{-2mm}
    \caption{Convergence comparison of Vanilla SGD, SGD with Global Clipping, and SGD with Spectral Clipping under low-rank spectral noise. We vary the noise spike magnitude $\ell \in \{10, 100, 1000\}$ relative to the gradient spectral norm bound $M \approx 15$.}
    \label{fig:syn_noise_1}
\end{figure*}

\begin{figure*}[tb!]
    \centering
    \includegraphics[width=0.95\textwidth]{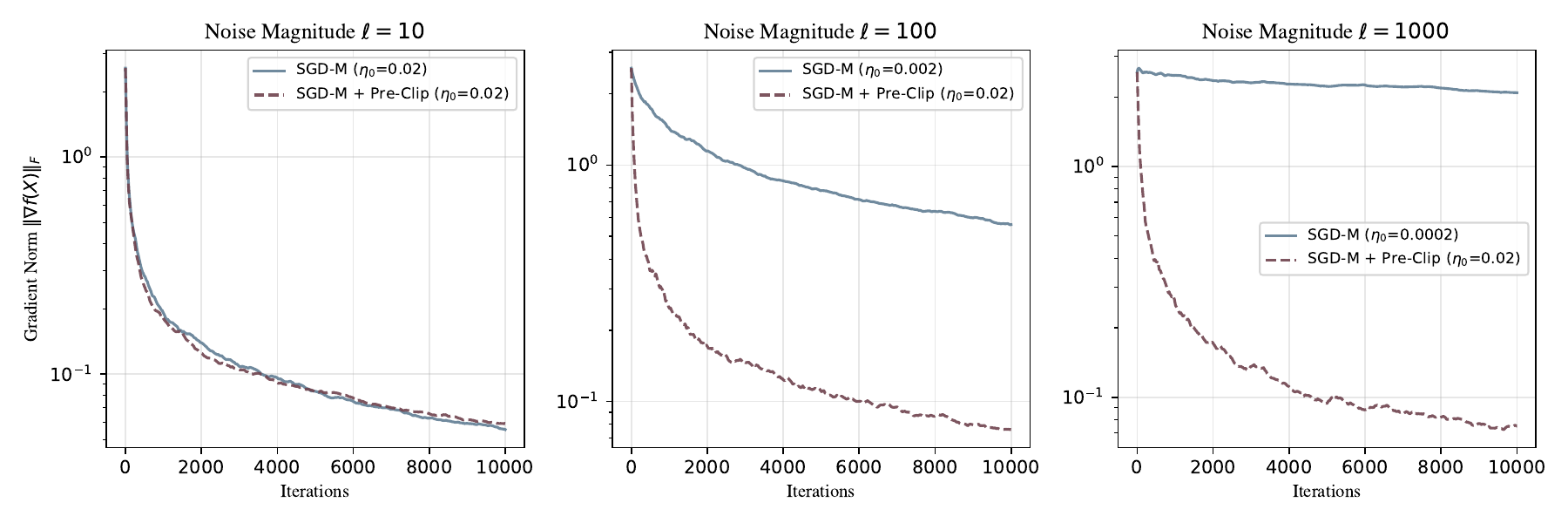}
    \vspace{-2mm}
    \caption{Convergence comparison of Vanilla SGD-M and SGD-M with Spectral Clipping under low-rank spectral noise. We vary the noise spike magnitude $\ell \in \{10, 100, 1000\}$ relative to the gradient spectral norm bound $M \approx 15$.}
    \label{fig:syn_noise_2}
\end{figure*}

\subsection{Recording Singular Value Distribution of Stochastic Gradients and Updates \& Noise Structure}
\label{sec:spike-noise-exp-appendix}

We conduct our analysis on a LLaMA-style transformer model with 12 layers, 12 attention heads, and embedding dimension 768 (approximately 124M parameters). The model is pretrained on the SlimPajama dataset~\citep{slimpajama} using a sequence length of 512 tokens and the batch size of 256. 
We use AdamW with the standard configuration: $\eta = 10^{-3}, \beta_1 = 0.8, \beta_2 = 0.999, \lambda = 0.1$ and we run $16000$ iterations with a cosine learning rate schedule and $2000$ warmup steps. 


\subsubsection{Singular Value Distribution of Stochastic Gradients and Updates.}
We record the full singular value distributions of both the raw stochastic gradient $\gg_k$ and the optimizer update direction $\mU_k$ at selected checkpoints: $0\%$, $5\%$, $50\%$, and $99\%$ of training (corresponding to iterations 1, 800, 8000, and 15840). We record these quantities for representative weight matrices across the network depth: the token embedding matrix, the attention and MLP weights from early (layer 0), middle (layer 6), and late (layer 11) transformer blocks.  

\textbf{Heavy-tailed stochastic gradient spectrum}. Figure~\ref{fig:adamw_sg} illustrates the singular value distributions of the raw stochastic gradients and the ratio of $\sigma_{\max} / \sigma_{\min}$ for each weight matrix. We observe the a small number of singular values are orders of magnitude larger than the rest, leading to high condition numbers. This phenomenon persists throughout the training trajectory and is ubiquitous across the recorded layers.

\textbf{Comparisons of update norms.}
As shown in Figure~\ref{fig:adamw_up}, the AdamW update shows a more uniform spectral distribution compared to the raw gradient, suggesting that coordinate-wise normalization partially mitigates the spectral imbalance. However, they fail to impose a hard ceiling. In contrast, Figure~\ref{fig:spc_adamw_up} demonstrates that SPECTRA-AdamW maintains a rigorously controlled bound on the spectral norm and exhibits a balanced spectral distribution. (The comparisons between Signum and Spectra-Signum can be found in Figure~\ref{fig:signum_up} and~\ref{fig:spc_signum_up}.)

\subsubsection{Noise Structure Analysis.}

We next investigate the structure of stochastic gradient noise.
Let $\mG \in \R^{m \times n}$ denote the `true' gradient signal, approximated by averaging gradients over a large batch of samples. Let $\gg$ denote a single-sample stochastic gradient. The \emph{stochastic noise} is defined as $\mN \defeq \gg - \mG$. 
Our goal is to determine the magnitude of the noise and how these directions align with the signal. To measure the alignment, we use the following two standard subspace distance metrics. 

For two orthonormal matrices $\mU_1, \mU_2 \in \R^{m \times r}$, we consider both the \emph{spectral distance} (worst-case misalignment) and the \emph{chordal distance} (average-case misalignment):
\[
d_{\mathrm{spec}}(\mU_1, \mU_2)  
\defeq \dist(\mU_1, \mU_2) = \|\mU_1\mU_1^T - \mU_2\mU_2^T\|_2 
=\max_{i \in [r]}\{ \sin \theta_i \}, 
\]
and 
\[
d_{\mathrm{chord}}(\mU_1, \mU_2) \defeq
\frac{1}{\sqrt{r}}\|\mU_1\mU_1^T - \mU_2\mU_2^T\|_F 
= \sqrt{\frac{1}{r}\sum_{i=1}^{r} \sin^2 \theta_i} \;,
\]
where $\theta_1, \ldots, \theta_k$ are the principal angles between the subspaces (Definition~\ref{df:PrincipalAngles}). 

These two distances can be efficiently computed by:
\[
d_{\mathrm{spec}}(\mU_1, \mU_2) = \sqrt{\lambda_{\max}(\mB)}, \quad
d_{\mathrm{chord}}(\mU_1, \mU_2) = \sqrt{\frac{\trace(\mB)}{r}}
\;,
\]
where $\mA = \mU_1^T \mU_2 \in \R^{r \times r}$ and $\mB = \mI_r - \mA \mA^T$. Both metrics range from $0$ (identical subspaces) to $1$ (orthogonal subspaces). The total distance $d_r(\mN, \mG)$
is defined as the maximum distance between the top-$r$ left subspaces and top-$r$ right subspaces:
\[
d_r(\mN, \mG) = \max\bigl\{ d(\mU_N, \mU_G), \, d(\mV_N, \mV_G) \bigr\} \;.
\]
At each checkpoint, we perform the following steps:
\begin{enumerate}
	\item Compute $\mG$ using a large batch-size of $B=4096$. 
    Compute its top-$r$ left and right singular vectors 
	$\mU_G \in \R^{m \times r}$ and $\mV_G \in \R^{n \times r}$
    with $r = 5$.
	\item For each of $b=50$ independent noise samples:
	\begin{enumerate}
		\item Compute a single sample stochastic gradient $\gg$.
		\item Compute the noise matrix $\mN = \gg - \mG$.
		\item Compute and record the top-$r$ singular values and vectors ($\mU_N$ and $\mV_N$) of $\mN$.
		\item Compute and record the subspace distance $d_r(\mN, \mG)$.
	\end{enumerate}
\end{enumerate}
The results are shown in Figure~\ref{fig:noise_adamw_1} and~\ref{fig:noise_adamw_2}. We observe two consistent phenomena:
1) The top singular values of the noise frequently exceed the signal norm $\|\mG\|_2$ by a large margin. This confirms the presence of "spectral spikes" in the noise. Moreover, since the bulk of the singular values of stochastic gradient is small, the high-magnitude components in the noise should be low-rank (according to~\ref{lemma:wely}). 2) The subspace distances between the noise and signal principal directions are consistently close to $1$. This indicates that the dominant noise spikes are nearly orthogonal to the principal directions of the true signal. 

\subsection{LLM Pretraining Tasks.}
\label{sec:lLM-pretrain-appendix}
\textbf{Hardware.} All experiments were conducted on a computation node equipped with $4 \times$ NVIDIA H100 (96GB) GPUs.

\textbf{Dataset.} We use a subset of the FineWeb-Edu dataset~\citep{fineweb-edu}, partitioned into training and validation sequences. After completion of training, we report the full validation loss and perplexity.

\textbf{Model architecture}. We adopt the standard decoder-only transformer following~\cite{semenov2025benchmarking}. Key components include SwiGLU activation functions~\citep{swiglu}, Rotary Positional Embeddings (RoPE)~\citep{rope} RMSNorm~\citep{rmsnorm} with $\epsilon=10^{-5}$. 
We utilize the GPT-2 tokenizer~\citep{GPT2} with a vocabulary size of 50,304. Unless otherwise specified, we employ untied embeddings (i.e., do not use weight tying~\citep{weight-tying}). Detailed configurations for all considered model architectures are presented in Table~\ref{tab:model-archt}.

\begin{table}[ht]
    \caption{Configuration of the considered model architecture.}
    \label{tab:model-archt}
\begin{center}
\begin{small}
\begin{sc}
\begin{tabular}{lcccc}
\toprule
\textbf{Parameters} & \textbf{124M} & \textbf{160M} & \textbf{780M} & \textbf{820M} \\
\midrule
Embed dimension    & 768 & 768 & 1280 & 2048 \\
\# Attention heads & 12 & 12 & 20 & 16 \\
\# Layers          & 12 & 12 & 32 & 12 \\
Untied embed       & False & True & True & True \\
\bottomrule
\end{tabular}
\end{sc}
\end{small}
\end{center}
\vskip -0.1in
\end{table}

\textbf{Optimizers.} As detailed in the main text, we benchmark SPECTRA against state-of-the-art optimizers, including AdEMAMix~\cite{ademamix}, AdamW~\cite{adamw}, Signum~\cite{signum}, D-Muon~\cite{d-muon}, Mars (AdamW)~\cite{mars}, and SOAP~\cite{soap}. We follow the evaluation protocol of~\cite{semenov2025benchmarking}, which provides comprehensive descriptions of these baselines. The \textbf{memory footprint} of each optimizer is reported in Table~\ref{tab:memory-optimizer}; notably, SOAP, Mars, and AdEMAMix incur the highest memory overhead. Applying SPECTRA does not increase memory requirements. The \textbf{per-step timing} comparison of each optimizer is reported in Table~\ref{tab:timing_combined}. The overhead of SPECTRA (due to SSC) shrinks dramatically as scale increases, as the forward/backward cost (scaling with batch size, accumulation steps, sequence length, number of layers, hidden dimension) dominates at larger scale.

\begin{table*}[t]
\centering
\caption{Per-step timing at 160M, 780M, and 820M scale (4$\times$ H100 GPU, averaged over 180 steps after warm-up).
SPECTRA overhead shrinks from $\sim$17--19\% at 160M to $\sim$2\% at 780M/820M as the forward/backward pass dominates at larger scale.
D-Muon has comparable overhead to SPECTRA due to its NS-based orthogonalization ($\sim$9\% at 160M, $\sim$1--2\% at 780M/820M).
SOAP is the most expensive optimizer, adding +49\% at 160M and +21\% at 780M vs.\ AdamW, due to its larger memory footprint (requiring more accumulation steps), periodic eigendecomposition, and per-step gradient projection.}
\label{tab:timing_combined}
\resizebox{\textwidth}{!}{%
\setlength{\tabcolsep}{5pt}
\begin{tabular}{lcccccccccc}
\toprule
& \multicolumn{3}{c}{160M} & \multicolumn{3}{c}{780M} & \multicolumn{3}{c}{820M} \\
\cmidrule(lr){2-4} \cmidrule(lr){5-7} \cmidrule(lr){8-10}
Method & Total (ms) & Opt Step (\%) & Overhead & Total (ms) & Opt Step (\%) & Overhead & Total (ms) & Opt Step (\%) & Overhead \\
\midrule
AdamW              & 200.7 &  1.0 & ---     & 6550 &  0.2 & ---    & 4412 &  0.2 & ---    \\
\quad + Pre-Clip   & 232.6 & 13.7 & +15.9\% & 6640 &  1.4 & +1.4\% & 4480 &  1.7 & +1.5\% \\
\quad + SPECTRA    & 238.3 & 15.6 & +18.7\% & 6654 &  1.7 & +1.6\% & 4504 &  2.4 & +2.1\% \\
\midrule
Signum             & 203.2 &  1.9 & ---     & 6536 &  0.3 & ---    & 4408 &  0.5 & ---    \\
\quad + SPECTRA    & 241.2 & 17.1 & +18.7\% & 6667 &  1.9 & +2.0\% & 4514 &  2.6 & +2.4\% \\
\midrule
AdEMAMix           & 207.9 &  4.3 & ---     & 6575 &  0.7 & ---    & 4456 &  1.3 & ---    \\
\quad + SPECTRA    & 242.5 & 18.2 & +16.6\% & 6691 &  2.2 & +1.8\% & 4568 &  3.3 & +2.5\% \\
\midrule
MARS               & 210.8 &  5.5 & ---     & 6591 &  0.8 & ---    & 4464 &  1.4 & ---    \\
\quad + SPECTRA    & 248.3 & 18.1 & +17.8\% & 6699 &  2.4 & +1.6\% & 4553 &  3.4 & +2.0\% \\
\midrule
D-Muon             & 220.5 &  8.8 & ---     & 6637 &  1.4 & ---    & 4496 &  2.1 & ---    \\
SOAP               & 299.1 & 33.2 & ---     & 7917 &  7.9 & ---    & 4868 & 11.0 & ---    \\
\bottomrule
\end{tabular}%
}

\vspace{4pt}
{\footnotesize Opt Step\,=\,optimizer step fraction of total step time.
Overhead\,=\,overhead vs.\ corresponding base optimizer.}
\end{table*}

\begin{table}[t]
\centering
\caption{Memory requirements of different optimizers. $M$ denotes total model parameters, $M_{2d}$ and $M_{1d}$ denote 2D and 1D parameters respectively, 
$m$ and $v$ denote the first and second momentum buffer,
and $d_i$ represents the size of each dimension of a parameter tensor (e.g., for a matrix of shape $(d_1, d_2)$, both $d_1$ and $d_2$ are considered)}
\label{tab:memory-optimizer}
\begin{minipage}{\linewidth}
    \renewcommand{\thempfootnote}{\arabic{mpfootnote}}
    \centering
    \begin{tabular}{llcc}
    \toprule
    \textbf{Optimizer} & \textbf{State Buffers} & \textbf{Memory} & \textbf{Factor} \\
    \midrule
    AdamW & $m, v$ & $2M$ & $2\times$ \\
    AdEMAMix & $m_{\text{fast}}, m_{\text{slow}}, v$ & $3M$ & $3\times$ \\
    D-Muon\footnote{D-Muon uses momentum-only updates for 2D matrices (with rows $<10000$) and AdamW for others including embeddings/biases.} & $m$ (2D), $m, v$ (1D) & $M_{2d} + 2M_{1d}$ & $< 2\times$ \\
    Signum & $m$ (momentum) & $M$  & $1\times$ \\
    SOAP\footnote{SOAP stores preconditioner matrices for any parameters with $d_i \leq D$ (default $D=10000$). For typical LLM architectures with $d_i \sim 4096$, preconditioner overhead is approximately $2M$.} & $m, v, \text{GG}, \text{Q}$ & $2M + \sum_{d_i \leq D} 2d_i^2$ & $\approx 4\times$ \\
    MARS (AdamW)\footnote{MARS stores gradient history for variance reduction where $g_{\text{prev}}$ represents the previous gradient.} & $m, v, g_{\text{prev}}$ (2D) & $\approx 3M$ & $\approx 3\times$ \\
    \bottomrule
    \end{tabular}
\end{minipage}
\end{table}

\textbf{Hyperparameters.} We report the specific hyperparameter settings used across all experiments in the following tables (the row 'Gradient Clipping' means the pre global Frobenius clipping threshold applied to all the parameters.). We adopt the majority of configurations directly from the benchmark protocol~\citep{semenov2025benchmarking}, while additionally tuning the learning rate, as it generally has the most significant influence on performance. For the WSD learning rate schedule, we consistently adopt the recommended square root decay to zero, initiating at 80\% of the total training steps.
For the spectral clipping schedule, as discussed in Section~\ref{sec:choice-parameters}, we use the following default choice:
\begin{align*}
c_k =
\begin{cases}
\frac{c \eta}{\eta_k} & k \le K_{\text{warmup}} ,
\\
c & \text{otherwise}
\end{cases}
\;.
\tag{standard}
\end{align*}

\begin{figure}
    \centering
    \includegraphics[width=0.9\linewidth]{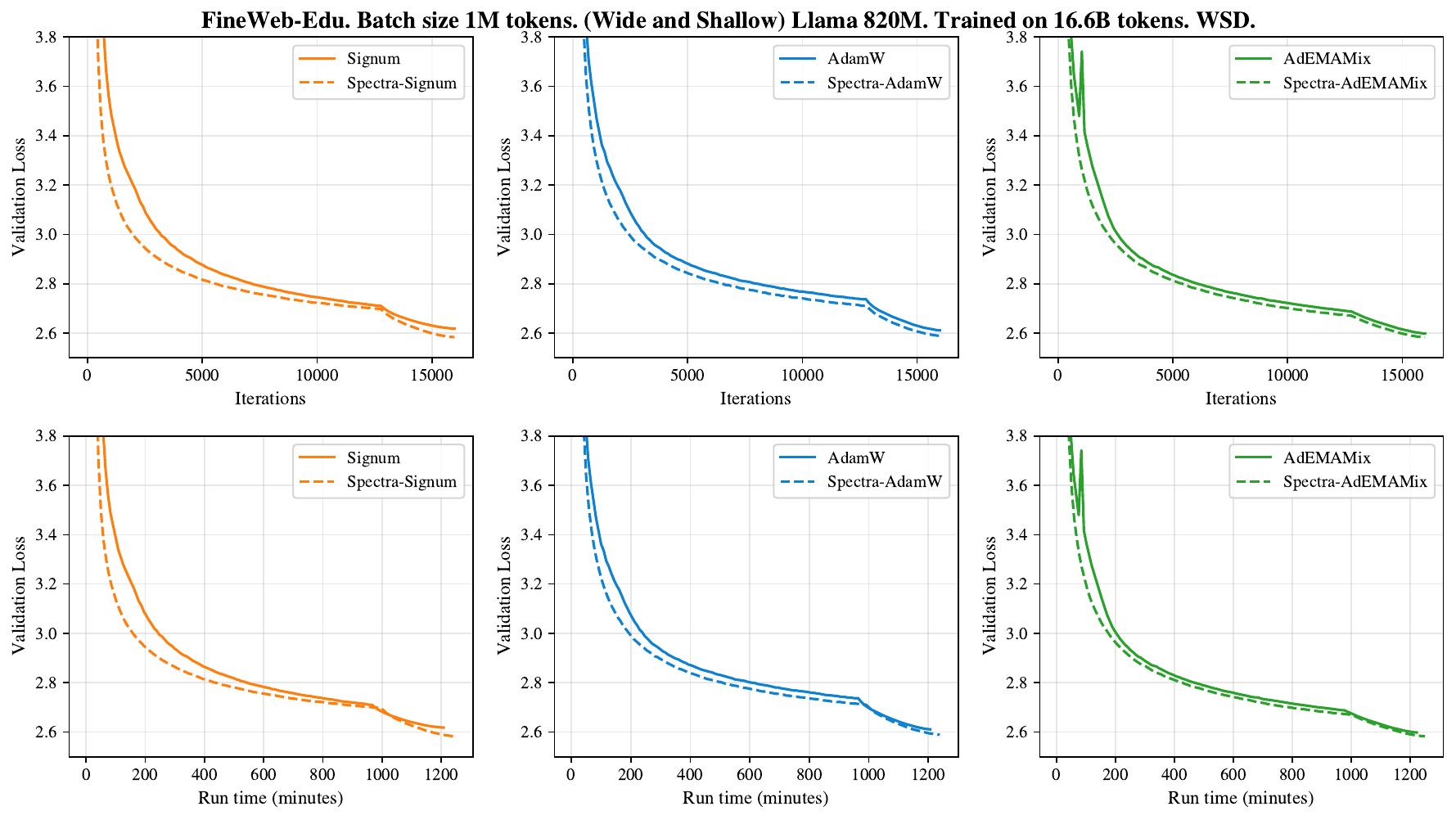}
    \caption{Training dynamics of the 820M-parameter model. The plots display validation loss with respect to training iterations (up) and wall-clock time (down), conducted on 4 H100 GPUs. 
    We use a micro batchsize of 31 with 8 accumulation steps on each machine. 
    The hyperparameters such as learning rate used for each method are reported in the tables in Section~\ref{sec:lLM-pretrain-appendix}.}
    \label{fig:run_time_iter_820M}
\end{figure}

\begin{figure*}[tb!]
    \centering
    \includegraphics[width=0.9\textwidth]{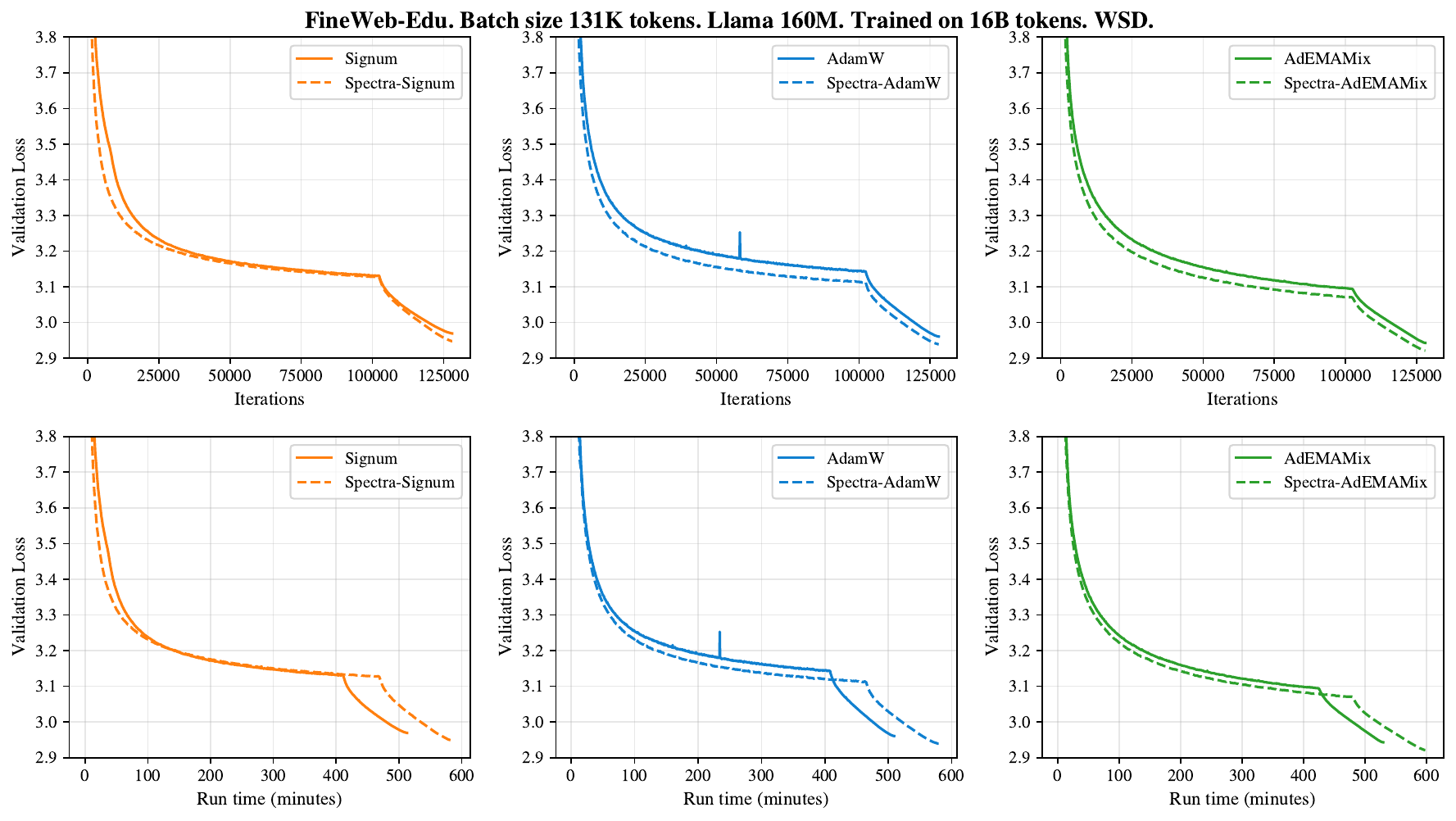}
    \vspace{-2mm}
    \caption{Training dynamics of the 160M-parameter model. The plots display validation loss with respect to training iterations (up) and wall-clock time (down), conducted on 4 H100 GPUs. 
    We use a micro batchsize of 64 with 1 accumulation step on each machine. 
    The hyperparameters such as learning rate used for each method are reported in the tables in Section~\ref{sec:lLM-pretrain-appendix}.}
    \label{fig:run_time_iter_160M_wsd}
\end{figure*}

\textbf{Comparison with Post-Orthogonalization.}
A natural alternative to spectral clipping is to apply explicit post-orthogonalization to the update matrix. This update rule is given by:
\begin{equation*}
\mX_{k+1} = (1-\lambda \eta_k) \mX_k - \alpha \eta_k \orth(\mU_k) ;,
\end{equation*}
where $\alpha = \max(\sqrt{\frac{m}{n}},1)$ scales the update as before, and the operator $\orth(\cdot)$ normalizes all singular values to 1. We implement $\orth(\cdot)$ using the same Newton-Schultz iterations employed by Muon~\cite{muon}.

We compare Spectra-AdamW against this AdamW-Post-Orth method on the 160M-parameter task trained for 16.8B tokens. The results, summarized in Table~\ref{tab:post-orth}, show the final validation loss across various learning rates. We observe that Spectra-AdamW outperforms the best-performing post-orthogonalized AdamW baseline in both schedule configurations.

\textbf{Discussion on pre-spectral clipping.}
We conduct an ablation on pre-soft spectral clipping under noisier training conditions using SlimPajama with a small batch size of 8 (sequence length 512, 160M model, 128k iterations). As shown in Table~\ref{tab:preclip}, pre-SSC improves SPECTRA-Signum with only post-SSC by 0.014 in this setting, compared to 0.005 on the cleaner FineWeb-Edu dataset with batch size 32, showing that pre-SSC could become more beneficial as gradient noise increases. We further observe that a decaying threshold schedule (cosine from $c=10$ to $c=0.01$) outperforms a constant threshold ($c=0.1$). This is expected: the spectral norm of both the large-batch gradient and the noise decreases over training (e.g, Figure~\ref{fig:adamw_sg}), and it also varies  across layers, making a fixed $c$ suboptimal — it inevitably over-clips in later stages when gradient norms get smaller. Despite these improvements, we note that pre-spectral clipping provides a fundamentally more modest benefit than post-clipping, because post-clipping directly controls the spectral norm of the optimizer update, which stabilizes training and provides implicit weight regularization, whereas pre-clipping only safeguards against gradient spikes before the optimizer processes them. At large scale, the combined overhead of pre-clipping and post-clipping remains under approximately 4\%. Investigating pre-clipping with adaptive, layer-wise thresholds and in high-noise settings such as reinforcement learning is an interesting direction for future work.

\begin{table}[!htbp]
\centering
\caption{Pre-spectral clipping ablation on SlimPajama (160M, batch size 8, seq len 512, 128k iterations). Pre-spectral clipping improves the baseline SPECTRA by 0.014 nats under noisy conditions, and a decaying threshold schedule outperforms a constant threshold.}
\label{tab:preclip}
\begin{tabular}{lc}
\toprule
Method & Val Loss \\
\midrule
Signum (lr=$2 \times 10^{-4}$) & 4.955 \\
Signum (lr=$10^{-4}$) & 3.688 \\
\midrule
SPECTRA-Signum (lr=$10^{-3}$) & 3.439 \\
\quad + pre-clip (constant $c=0.1$) & 3.431 \\
\quad + pre-clip (cos decay, $c$: $10 \to 0.01$) & \textbf{3.425} \\
\bottomrule
\end{tabular}
\end{table}

\subsection{Evaluation of downstream Tasks.}
\label{sec:downstream-appendix}
We evaluate the pretrained models on in-context learning tasks using the \texttt{lm-evaluation-harness} package~\citep{lm-harness}. We benchmark on the following standard datasets: ARC~\citep{allenai:arc}, BoolQ~\citep{boolq}, PIQA~\citep{piqa}, HellaSwag~\citep{hellaswag}, WinoGrande~\citep{winogrande}, OpenBookQA~\citep{OpenBookQA}, LambdaOpenAI~\citep{lambda_openai}, SciQ~\citep{SciQ}, and MMLU~\citep{mmlu1,mmlu2}. All tasks are evaluated in a 5-shot setting, where the model is provided with five complete question-answer examples before the test question. For each task, the model scores each candidate answer by computing the log-probability of the answer text conditioned on the question, and predicts the choice with the highest log-probability. We report two accuracy metrics: the raw accuracy based on total log-probability, and the length-normalized accuracy, which divides each log-probability by the number of tokens in the corresponding choice to mitigate bias toward shorter answers. Following standard practice, we report length-normalized accuracy for HellaSwag, ARC, PIQA, OpenBookQA, and SciQ, and the standard accuracy for the remaining tasks. 

\begin{table}[!htbp]
\centering
\caption{Short training ($8$B tokens or $8$k steps) of $1.5$B muP models. The details and scripts can be found in \url{https://github.com/mlolab/llm-spectral-clipping/tree/main/scripts/1.5b-mup-models}.}
\label{tab:mup-sweep}
\begin{tabular}{l l c @{\hspace{2em}} l l c}
\toprule
\textbf{Optimizer} & \textbf{LR} & \textbf{Val Loss} & \textbf{Optimizer} & \textbf{LR} & \textbf{Val Loss} \\
\midrule
SOAP     & $5 \times 10^{-3}$ & $2.653$ & Spectra-Signum   & $2 \times 10^{-3}$ & $2.642$ \\
D-Muon   & $2 \times 10^{-3}$ & $2.658$ &                  & $3 \times 10^{-3}$ & $2.638$ \\
MARS     & $2 \times 10^{-3}$ & $2.643$ &                  & $5 \times 10^{-3}$ & $2.643$ \\
AdEMAMix & $2 \times 10^{-3}$ & $2.647$ & Spectra-AdEMAMix & $2 \times 10^{-3}$ & $2.640$ \\
         & $3 \times 10^{-3}$ & $2.642$ &                  & $3 \times 10^{-3}$ & $\mathbf{2.635}$ \\
\bottomrule
\end{tabular}
\end{table}

\begin{table}[!htbp]
    \centering
    \caption{Hyperparameter configurations for \textbf{Spectra-AdamW}.}
    \label{tab:hyperparams-spectra-adamw}
    \begin{tabular}{lcccc}
        \toprule
        \textbf{Hyperparameter} & \textbf{160M (wsd)} & \textbf{160M (cos)} & \textbf{780M (wsd)} & \textbf{820M (wsd)} \\
        \midrule
        Learning Rate $\eta$         & 5e-4, \textbf{1e-3}, 2e-3                & 1e-3, \textbf{2e-3}, 3e-3                & 1e-3     &  1e-3                 \\
        Clipping Threshold  $c$  &  5.0, \textbf{10.0} & 5.0 & 10.0 & 10.0   \\
        Clipping Schedule & standard & standard &  standard & standard
        \\
        Others & & Same as in Table~\ref{tab:hyperparams-adamw} & & \\
        \bottomrule
    \end{tabular}
\end{table}

\begin{table}[!htbp]
\centering
\caption{SOAP vs.\ SPECTRA-Signum at larger scales (final validation loss).
SOAP is tuned over three learning rates.
Despite SOAP requiring $\sim$19\% more time per step at 780M (+8\% at 820M) and nearly 2$\times$ more memory than SPECTRA-Signum, it achieves comparable results at 780M and worse results at 820M.}
\label{tab:soap}
\begin{tabular}{lcc}
\toprule
Method (lr) & 780M & 820M \\
\midrule
SOAP ($10^{-3}$)          & 2.586 & 2.606 \\
SOAP ($3 \times 10^{-3}$) & \textbf{2.571} & 2.602 \\
SOAP ($5 \times 10^{-3}$) & 2.574 & 2.603 \\
\midrule
SPECTRA-Signum ($10^{-3}$) & 2.573 & \textbf{2.594} \\
\bottomrule
\end{tabular}
\end{table}

\begin{table}[!htbp]
    \centering
    \caption{Final validation loss comparison for the 160M Llama model trained on 16.8B tokens. Best results per schedule are \textbf{bolded}.}
    \label{tab:post-orth}
    \setlength{\tabcolsep}{5pt}
    \begin{tabular}{llccccc}
        \toprule
        & & \multicolumn{5}{c}{\textbf{Learning Rate}} \\
        \cmidrule(lr){3-7}
        \textbf{Schedule} & \textbf{Method} & 1e-3 & 2e-3 & 3e-3 & 4e-3 & 5e-3 \\
        \midrule
        wsd & AdamW-Post-Orth & 2.988 & 2.973 & 2.974 & -- & -- \\
        wsd & Spectra-AdamW   & $\mathbf{2.936}$ & -- & -- & -- & -- \\
        \midrule
        cos & AdamW-Post-Orth & 3.009 & 2.966 & 2.955 & 2.953 & 2.956 \\
        cos & Spectra-AdamW   & -- & $\mathbf{2.922}$ & -- & -- & -- \\
        \bottomrule
    \end{tabular}
\end{table}

\begin{table}[!htbp]
    \centering
    \caption{Hyperparameter configurations for \textbf{AdamW}.}
    \label{tab:hyperparams-adamw}
    \begin{tabular}{lcccc}
        \toprule
        \textbf{Hyperparameter} & \textbf{160M (wsd)} & \textbf{160M (cos)} & \textbf{780M (wsd)} & \textbf{820M (wsd)} \\
        \midrule
        Batch Size              & 256                 & 256                 & 1012                 & 992                \\
        Sequence Length         & 512                & 512                & 1024                & 1024                \\
        Warmup Steps            & 2000                & 2000                & 2000                & 2000                \\
        Weight Decay $\lambda$& 0.1                 & 0.1                 & 0.1                 & 0.1                 \\
        Gradient Clipping       & 0.5                 & 0.5                 & 0.1                 & 0.1                 \\
        Momentum $\beta_1$ & 0.8 & 0.8 & 0.9 & 0.9 \\
        Momentum $\beta_2$ & 0.999 & 0.999 & 0.999 & 0.999 \\
        Learning Rate $\eta$         & 5e-4, \textbf{1e-3}, 2e-3                & 5e-4, 1e-3, \textbf{2e-3}, 3e-3                & 5e-4, \textbf{1e-3}, 2e-3                 & 5e-4, \textbf{1e-3}, 2e-3                 \\
        \bottomrule
    \end{tabular}
\end{table}

\begin{table}[!htbp]
    \centering
    \caption{Hyperparameter configurations for \textbf{AdEMAMix}.}
    \label{tab:hyperparams-ademamix}
    \begin{tabular}{lcccc}
        \toprule
        \textbf{Hyperparameter} & \textbf{160M (wsd)} & \textbf{160M (cos)} & \textbf{780M (wsd)} & \textbf{820M (wsd)} \\
        \midrule
        Batch Size              & 256                 & 256                 & 1012                 & 992                \\
        Sequence Length         & 512                & 512                & 1024                & 1024                \\
        Warmup Steps            & 2000                & 2000                & 2000                & 2000                \\
        Weight Decay $\lambda$& 0.1                 & 0.1                 & 0.1                 & 0.1                 \\
        Gradient Clipping       & 0.5                 & 0.5                 & 0.1                 & 0.1                 \\
        Momentum $\beta_1$ & 0.9 & 0.9 & 0.9 & 0.9 \\
        Momentum $\beta_2$ & 0.999 & 0.999 & 0.999 & 0.999 \\
        Momentum $\beta_3$ & 0.999 & 0.999 & 0.999 & 0.999 \\
        Coefficient $\alpha$ & 8 & 8 & 8 & 8 \\
        Learning Rate $\eta$         & 5e-4, \textbf{1e-3}, 2e-3                & 1e-3, \textbf{2e-3}, 3e-3                & 2e-4, \textbf{5e-4}, 1e-3                 & 5e-4, \textbf{1e-3}, 2e-3                 \\
        \bottomrule
    \end{tabular}
\end{table}

\begin{table}[!htbp]
    \centering
    \caption{Hyperparameter configurations for \textbf{Spectra-AdEMAMix}.}
    \label{tab:hyperparams-spectra-ademamix}
    \begin{tabular}{lcccc}
        \toprule
        \textbf{Hyperparameter} & \textbf{160M (wsd)} & \textbf{160M (cos)} & \textbf{780M (wsd)} & \textbf{820M (wsd)} \\
        \midrule
        Learning Rate $\eta$   & 5e-4, \textbf{1e-3}, 2e-3                & 1e-3, \textbf{2e-3}, 3e-3    & 5e-4, \textbf{1e-3}     &  1e-3                 \\
        Clipping Threshold  $c$  &  5.0, \textbf{10.0} & 5.0 & 10.0 & 10.0   \\
        Clipping Schedule & standard & standard &  standard & standard
        \\
        Others & & Same as in Table~\ref{tab:hyperparams-ademamix} & & \\
        \bottomrule
    \end{tabular}
\end{table}

\begin{table}[!htbp]
    \centering
    \caption{Hyperparameter configurations for \textbf{Signum}.}
    \label{tab:hyperparams-signum}
    \begin{tabular}{lcccc}
        \toprule
        \textbf{Hyperparameter} & \textbf{160M (wsd)} & \textbf{160M (cos)} & \textbf{780M (wsd)} & \textbf{820M (wsd)} \\
        \midrule
        Batch Size              & 256                 & 256                 & 1012                 & 992                \\
        Sequence Length         & 512                & 512                & 1024                & 1024                \\
        Warmup Steps            & 8000                & 8000                & 2000                & 2000                \\
        Weight Decay $\lambda$& 0.1                 & 0.1                 & 0.1                 & 0.1                 \\
        Gradient Clipping       & 0.5                 & 0.5                 & 0.1                 & 0.1                 \\
        Momentum $\beta$ & 0.95 & 0.95 & 0.95 & 0.95 \\
        Nesterov momentum & True & True & True & True \\
        Learning Rate $\eta$ & 2e-4, \textbf{5e-4}, 1e-3                & 5e-4, \textbf{1e-3}, 2e-3   & \textbf{2e-4}, 5e-4  & 2e-4  \\
        \bottomrule
    \end{tabular}
\end{table}

\begin{table}[!htbp]
    \centering
    \caption{Hyperparameter configurations for \textbf{Spectra-Signum}.}
    \label{tab:hyperparams-spectra-ademamix}
    \begin{tabular}{lcccc}
        \toprule
        \textbf{Hyperparameter} & \textbf{160M (wsd)} & \textbf{160M (cos)} & \textbf{780M (wsd)} & \textbf{820M (wsd)} \\
        \midrule
        Warmup Steps & 2000 & 2000 & 2000 & 2000
        \\
        Learning Rate $\eta$   & 5e-4, \textbf{1e-3}, 2e-3                & 1e-3, \textbf{2e-3}, 3e-3    & 1e-3     &  1e-3                 \\
        Clipping Threshold  $c$  &  10.0 & 5.0 & 10.0 & 10.0   \\
        Clipping Schedule & standard & standard &  standard & standard
        \\
        Others & & Same as in Table~\ref{tab:hyperparams-signum} & & \\
        \bottomrule
    \end{tabular}
\end{table}

\begin{table}[!htbp]
    \centering
    \caption{Hyperparameter configurations for \textbf{Spectra-Mars}.}
    \label{tab:hyperparams-spectra-mars}
    \begin{tabular}{lccc}
        \toprule
        \textbf{Hyperparameter} & \textbf{160M (wsd)}  & \textbf{780M (wsd)} & \textbf{820M (wsd)} \\
        \midrule
        Warmup Steps & 2000 & 2000 & 2000
        \\
        Learning Rate Mars   & 3e-3                & 3e-3    & 3e-3                    \\
        Clipping Threshold  $c$  &  10.0 & 10.0 & 10.0   \\
        Clipping Schedule & standard & standard &  standard 
        \\
        Others & & Same as in Table~\ref{tab:hyperparams-mars} & \\
        \bottomrule
    \end{tabular}
\end{table}

\begin{table}[!htbp]
    \centering
    \caption{Hyperparameter configurations for \textbf{D-Muon}.}
    \label{tab:hyperparams-d-muon}
    \begin{tabular}{lcccc}
        \toprule
        \textbf{Hyperparameter} & \textbf{160M (wsd)} & \textbf{160M (cos)} & \textbf{780M (wsd)} & \textbf{820M (wsd)} \\
        \midrule
        Batch Size              & 256                 & 256                 & 1012                 & 992                \\
        Sequence Length         & 512                & 512                & 1024                & 1024                \\
        Warmup Steps            & 2000                & 2000                & 2000                & 2000                \\
        Weight Decay $\lambda$& 0.1                 & 0.1                 & 0.1                 & 0.1                 \\
        Gradient Clipping       & 0.5                 & 0.5                 & 0.1                 & 0.1                 \\
        Momentum $\beta$ & 0.95 & 0.95 & 0.95 & 0.95 \\
        AdamW $\beta_1$ (1D) & 0.8 & 0.8 & 0.9 & 0.9 \\ 
        AdamW $\beta_2$ (1D) & 0.999 & 0.999 & 0.99 & 0.99 \\ 
        Nesterov momentum & True & True & True & True \\
        Learning Rate $\eta$ & 5e-4, \textbf{1e-3}, 2e-3                & 5e-4, \textbf{1e-3}, 2e-3   & 5e-4, \textbf{1e-3}  & 1e-3 \\
        \bottomrule
    \end{tabular}
\end{table}

\begin{table}[!htbp]
    \centering
    \caption{Hyperparameter configurations for \textbf{Mars (AdamW)}.}
    \label{tab:hyperparams-mars}
    \begin{tabular}{lcccc}
        \toprule
        \textbf{Hyperparameter} & \textbf{160M (wsd)} & \textbf{160M (cos)} & \textbf{780M (wsd)} & \textbf{820M (wsd)} \\
        \midrule
        Batch Size              & 256                 & 256                 & 1012                 & 992                \\
        Sequence Length         & 512                & 512                & 1024                & 1024                \\
        Warmup Steps            & 2000                & 2000                & 2000                & 2000                \\
        Weight Decay $\lambda$& 0.1                 & 0.1                 & 0.1                 & 0.1                 \\
        Gradient Clipping       & 0.5                 & 0.5                 & 0.1                 & 0.1                 \\
        Learning Rate AdamW & 5e-4, \textbf{1e-3} & 1e-3 & 1e-3 & 1e-3 \\
        Learning Rate Mars & 3e-3 & 3e-3 & 1e-3, \textbf{3e-3} & 3e-3 \\
        AdamW $\beta_1$ (1D) & 0.8 & 0.8 & 0.8 & 0.9 \\ 
        AdamW $\beta_2$ (1D) & 0.999 & 0.999 & 0.999 & 0.999 \\ 
        Mars $\beta_1$ & 0.95 & 0.95 & 0.95 & 0.95 \\ 
        Mars $\beta_2$ & 0.99 & 0.99 & 0.99 & 0.99 \\ 
        VR Scaling Factor $\eta$ & 0.025 & 0.025 & 0.025 & 0.025 
        \\
        \bottomrule
    \end{tabular}
\end{table}

\begin{table}[!htbp]
    \centering
    \caption{Hyperparameter configurations for \textbf{SOAP}. 
    (Due to the large memory footprint of SOAP, we need larger gradient accumulation steps to fit the model in memory, which largely extended the training runtime.}
    \label{tab:hyperparams-soap}
    \begin{tabular}{lcccc}
        \toprule
        \textbf{Hyperparameter} & \textbf{160M (wsd)} & \textbf{160M (cos)}
        & \textbf{780M (wsd)}
        & \textbf{820M (wsd)}
        \\
        \midrule
        Batch Size  & 256        & 256  & 1012 & 992       \\
        Sequence Length         & 512                & 512  & 1024 & 1024   \\
        Warmup Steps            & 2000                & 2000  & 2000 & 2000     \\
        Weight Decay $\lambda$& 0.1                 & 0.1  & 0.1 & 0.1                 \\
        Gradient Clipping       & 0.5                 & 0.5   & 0.1 & 0.1        \\
        Preconditioner dimension  & 10000 & 10000 &
        10000 & 10000
        \\ 
        Preconditioning frequency  & 10 & 10 &
        10 & 10
        \\ 
        SOAP $\beta_1$ & 0.9 & 0.9 & 0.95 & 0.95
        \\ 
         SOAP $\beta_2$ & 0.999 & 0.999 & 0.95 & 0.95
         \\
         Learning Rate $\eta$ & 5e-4, \textbf{1e-3}, 2e-3 
         & 1e-3, \textbf{2e-3}, 3e-3  
         & 1e-3, \textbf{3e-3}, 5e-3
         & 1e-3, \textbf{3e-3}, 5e-3
         \\
        \bottomrule
    \end{tabular}
\end{table}

\begin{figure*}[tb!]
    \centering
    \includegraphics[width=0.95\textwidth]{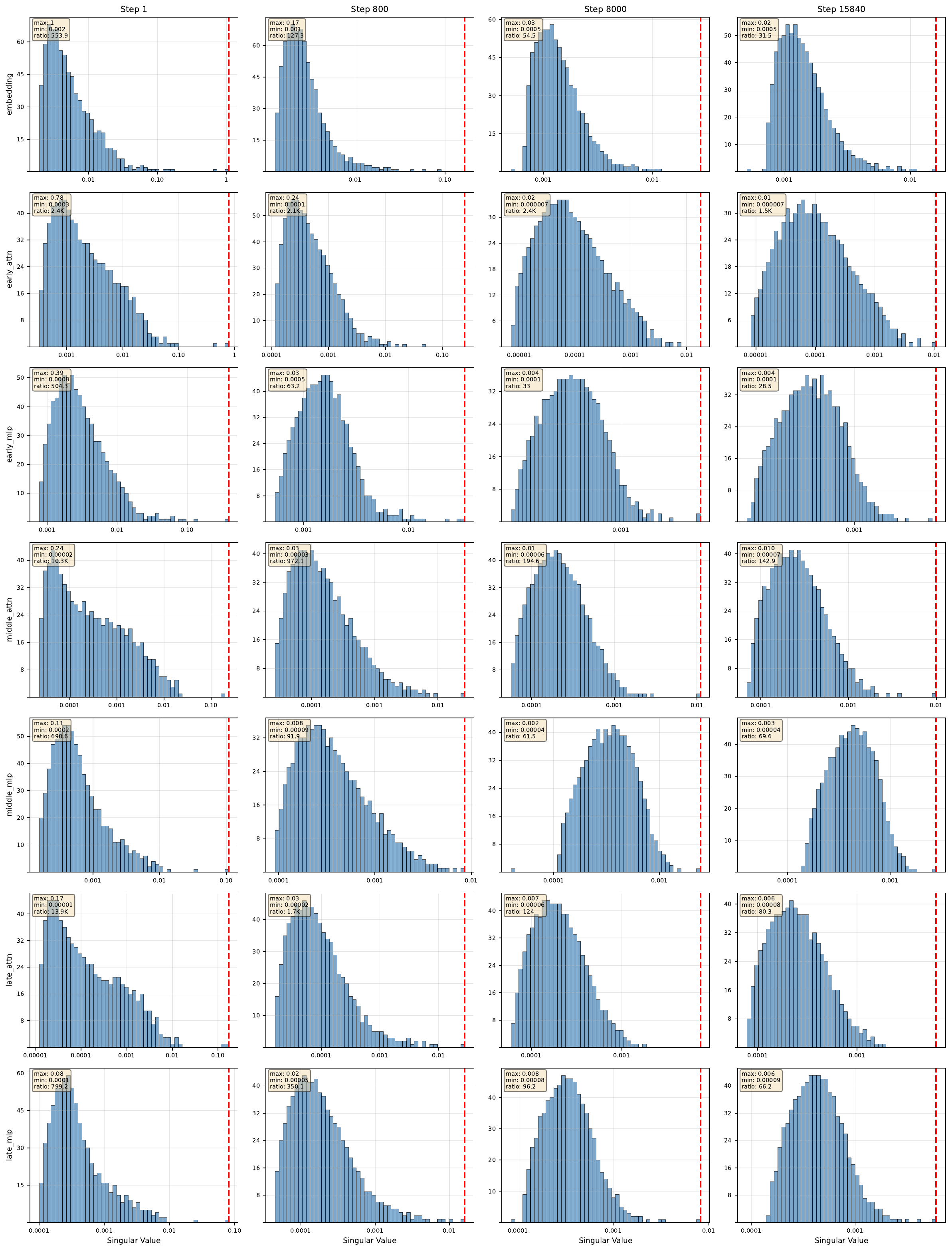}
    \vspace{-2mm}
    \caption{Singular value distribution of raw stochastic gradients for the 124M-parameter model trained with AdamW. We visualize the spectrum at four training stages (0\%, 5\%, 50\%, and 99\%) for representative layers across the network depth. Each panel reports the maximum and minimum singular values along with their ratio (condition number). 
    We observe that a small number of singular values are orders of magnitude larger than the rest, leading to high condition numbers.}
    \label{fig:adamw_sg} 
\end{figure*}

\begin{figure*}[tb!]
    \centering
    \includegraphics[width=0.95\textwidth]{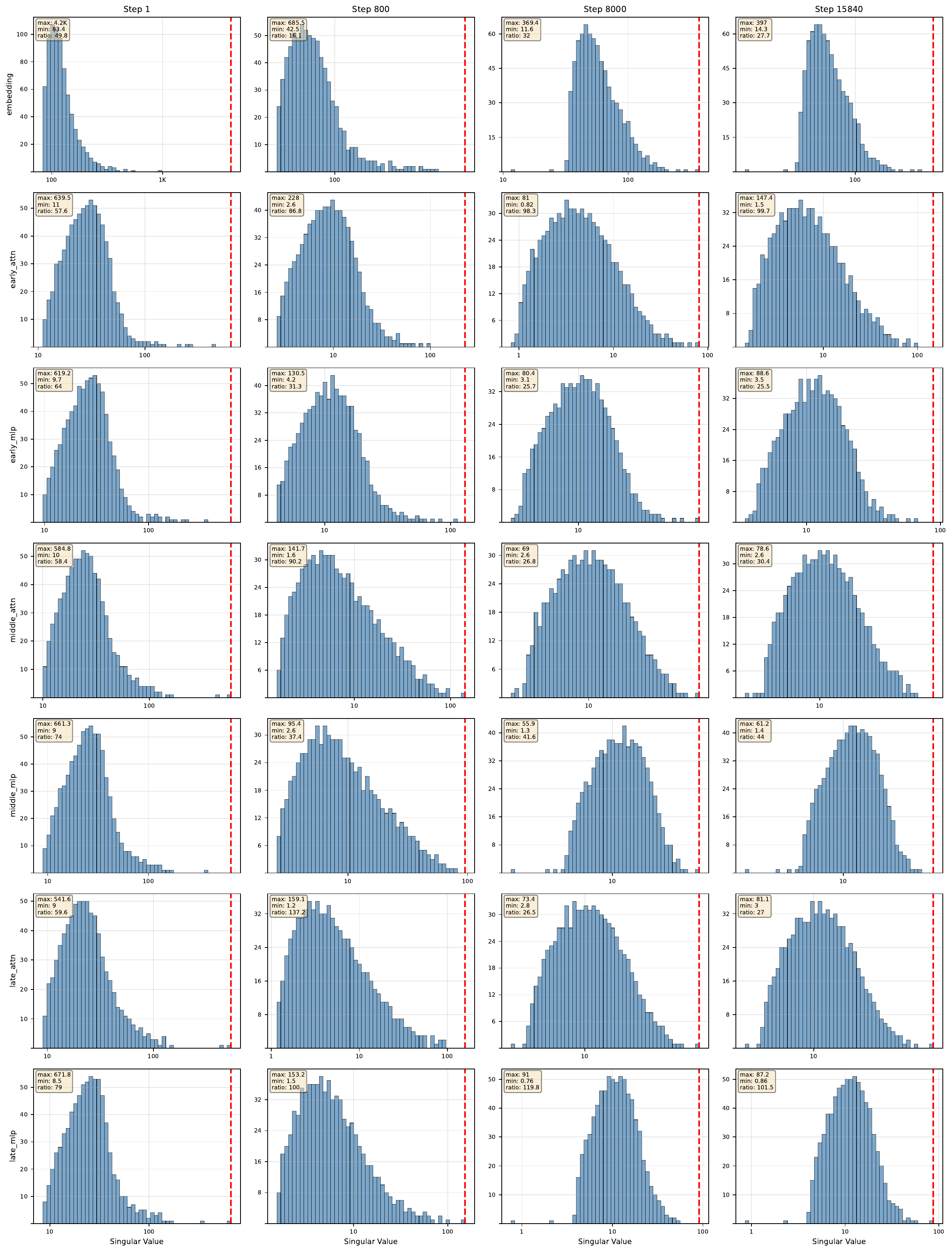}
    \vspace{-2mm}
    \caption{Singular value distribution of update matrices ($\mU_k$) trained with AdamW on the 124M-parameter model. We visualize the spectrum at four training stages (0\%, 5\%, 50\%, and 99\%) for representative layers across the network depth. Each panel reports the maximum and minimum singular values along with their ratio (condition number). 
    AdamW updates show a more uniform spectral distribution compared to the raw gradient. However, they fail to impose a hard ceiling.}
    \label{fig:adamw_up}
\end{figure*}

\begin{figure*}[tb!]
    \centering
    \includegraphics[width=0.95\textwidth]{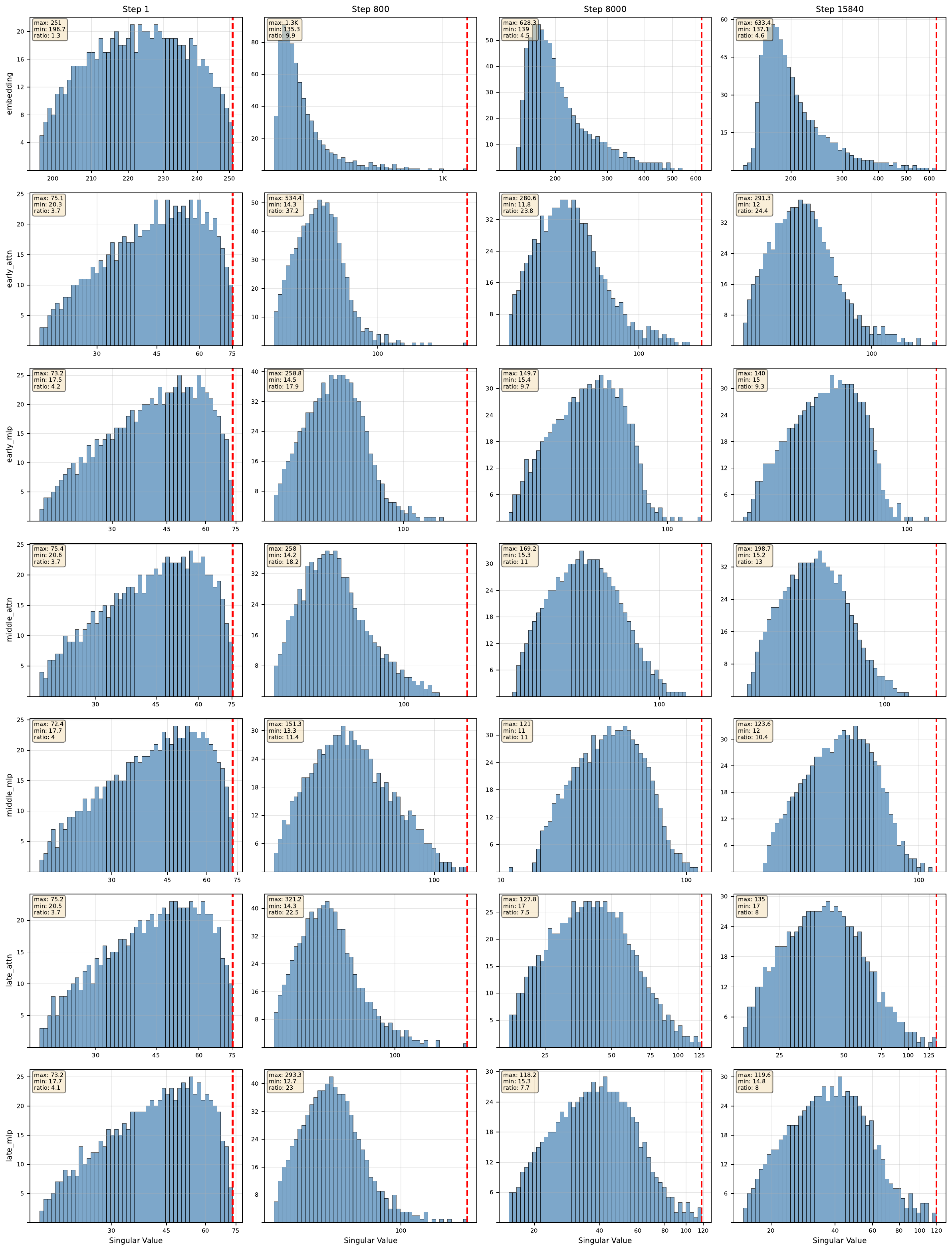}
    \vspace{-2mm}
    \caption{Singular value distribution of update matrices ($\mU_k$) trained with Signum on the 124M-parameter model. We visualize the spectrum at four training stages (0\%, 5\%, 50\%, and 99\%) for representative layers across the network depth. Each panel reports the maximum and minimum singular values along with their ratio (condition number). 
    The spectral norm of the update matrices ranges from 70 to 1300.}
    \label{fig:signum_up}
\end{figure*}

\begin{figure*}[tb!]
    \centering
    \includegraphics[width=0.95\textwidth]{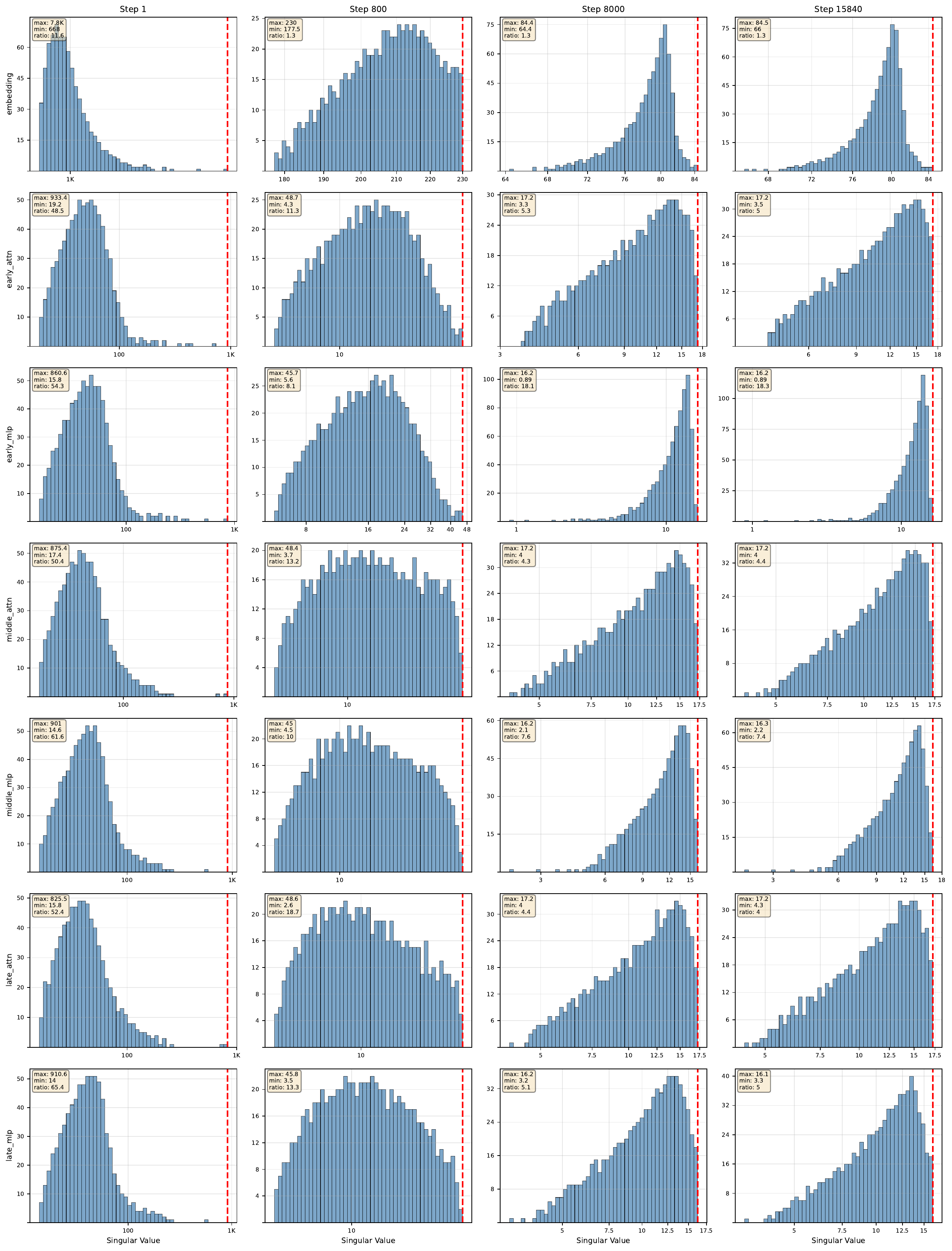}
    \vspace{-2mm}
    \caption{Singular value distribution of update matrices $\mU_k=\alpha\clipSp_{c_k}(\mU_k^{\text{AdamW}})$ trained with Spectra-AdamW on the 124M-parameter model. We visualize the spectrum at four training stages (0\%, 5\%, 50\%, and 99\%) for representative layers across the network depth. Each panel reports the maximum and minimum singular values along with their ratio (condition number). 
    The spectral norms of the update matrices exhibit hard ceilings and the condition numbers are small.}
    \label{fig:spc_adamw_up}
\end{figure*}

\begin{figure*}[tb!]
    \centering
    \includegraphics[width=0.95\textwidth]{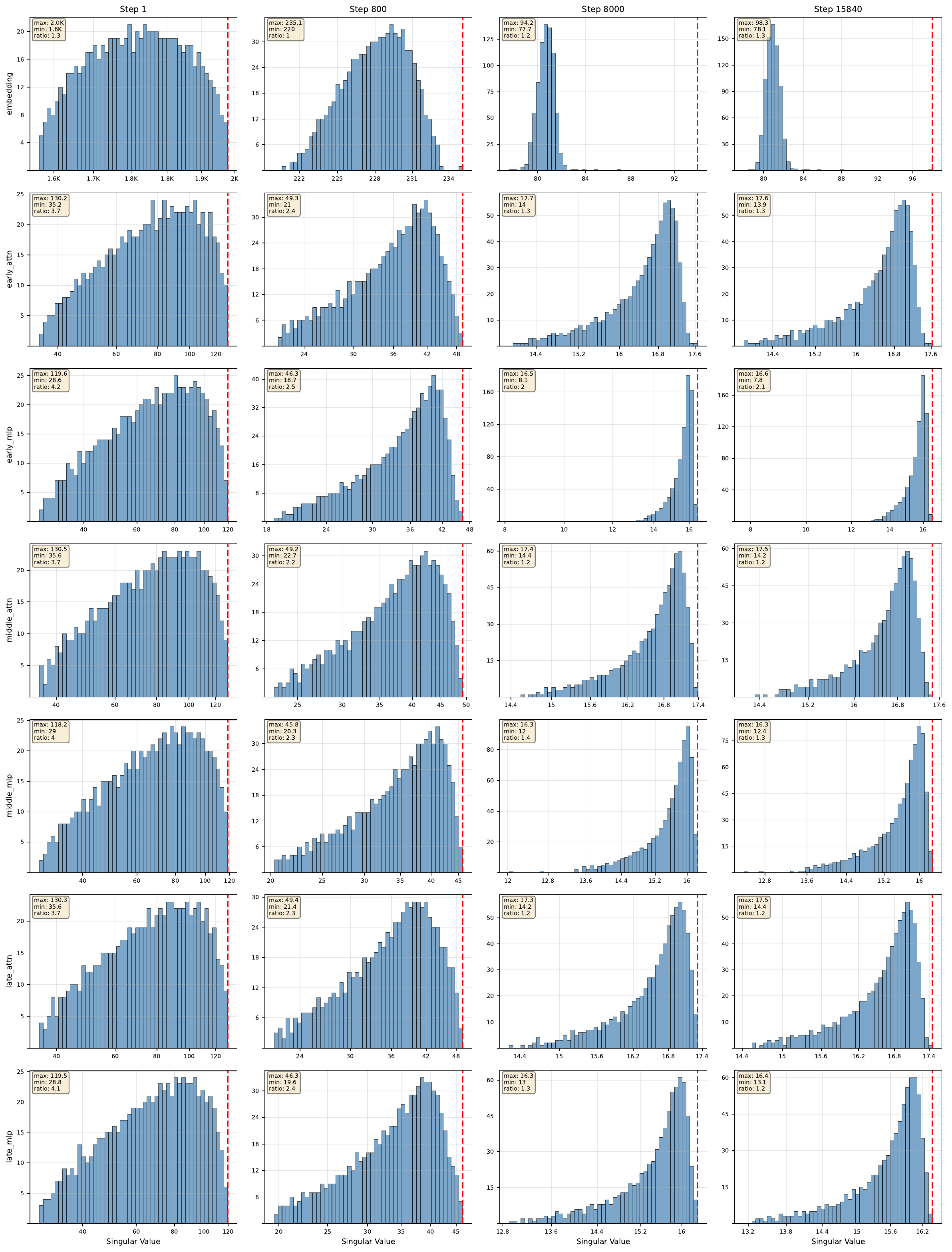}
    \vspace{-2mm}
    \caption{Singular value distribution of update matrices $\mU_k=\alpha\clipSp_{c_k}(\mU_k^{\text{Signum}})$ trained with Spectra-Signum on the 124M-parameter model. We visualize the spectrum at four training stages (0\%, 5\%, 50\%, and 99\%) for representative layers across the network depth. Each panel reports the maximum and minimum singular values along with their ratio (condition number). The spectral norms of the update matrices exhibit hard ceilings and the condition numbers are small.}
    \label{fig:spc_signum_up}
\end{figure*}

\begin{figure*}[tb!]
    \centering
    \includegraphics[width=0.95\textwidth]{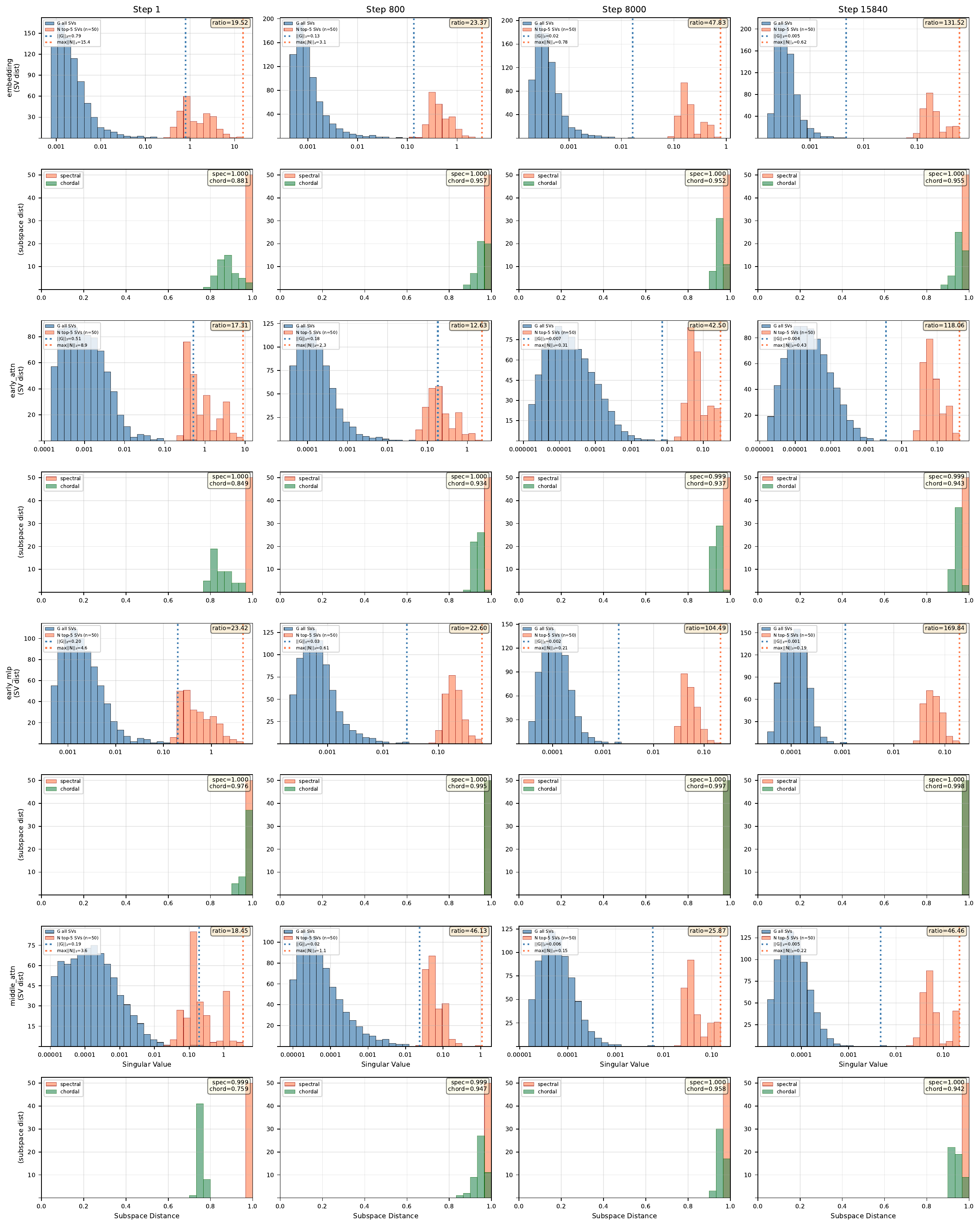}
    \vspace{-2mm}
    \caption{
    For each layer, we report the
    singular value distribution of large-batch
    gradient $\mG$ (blue bar) and top-5 singular value distribution of the noise matrices $\{\mN_i\}_{i=1}^{50}$ (red) in the first row. In the second row, 
    we report the distribution of both the spectral and chordal subspace distances of the top-5 principal directions between the noise matrices and the large-batch signal. Results are shown for representative layers of the 124M model (AdamW) at four training stages 0\%, 5\%, 50\%, and 99\%. (Part 1) This confirms the presence of "sparse spectral spikes" in the noise
    and indicates that the dominant noise spikes are nearly orthogonal to the principal directions of the true signal.} 
    \label{fig:noise_adamw_1}
\end{figure*}

\begin{figure*}[tb!]
    \centering
    \includegraphics[width=0.95\textwidth]{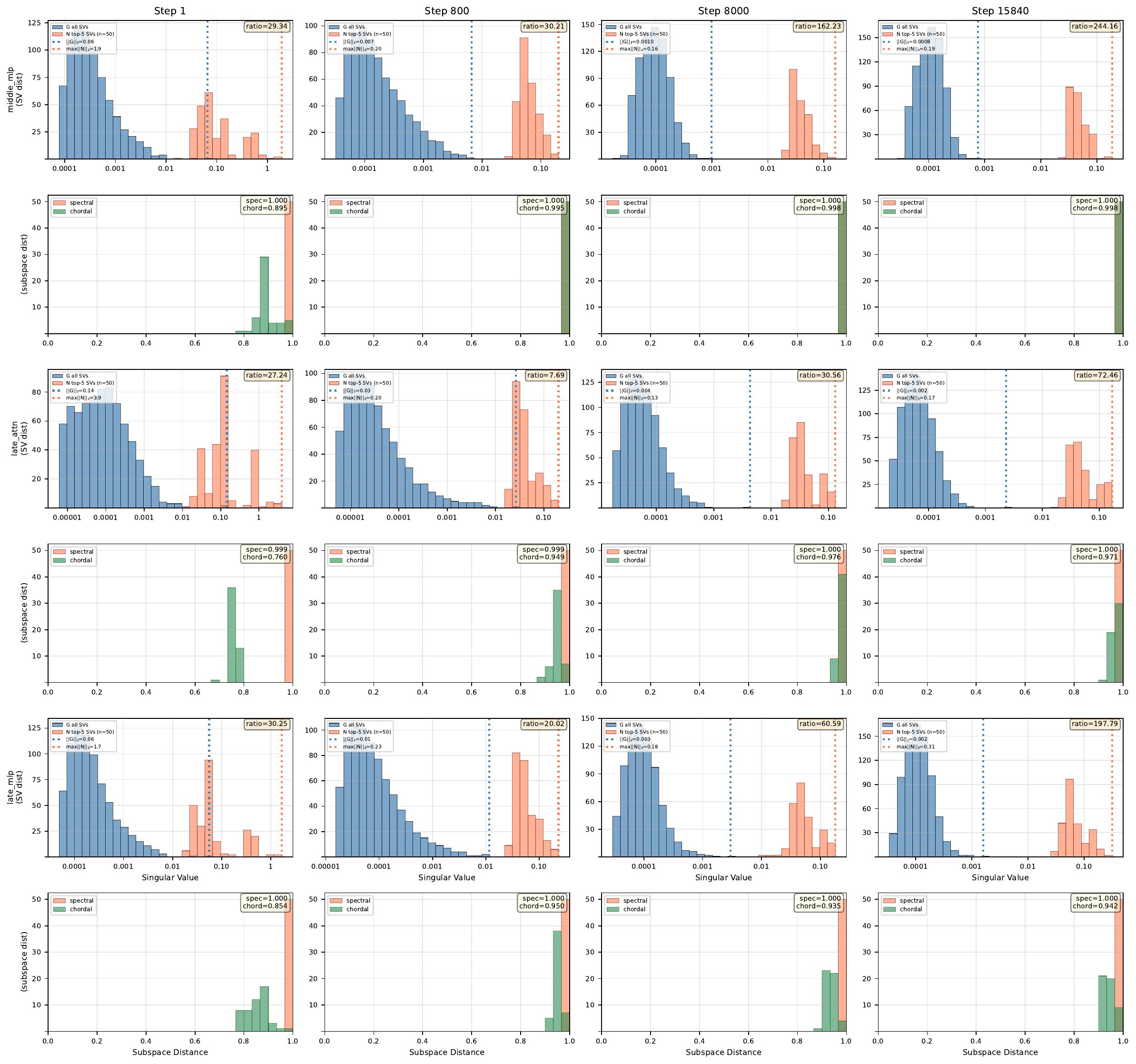}
    \vspace{-2mm}
    \caption{Part 2 of Figure~\ref{fig:noise_adamw_1}. 
    We observe a consistent phenomenon as in part 1.}
    \label{fig:noise_adamw_2}
\end{figure*}

\begin{figure*}[h]
    \centering
    \includegraphics[width=0.9\textwidth]{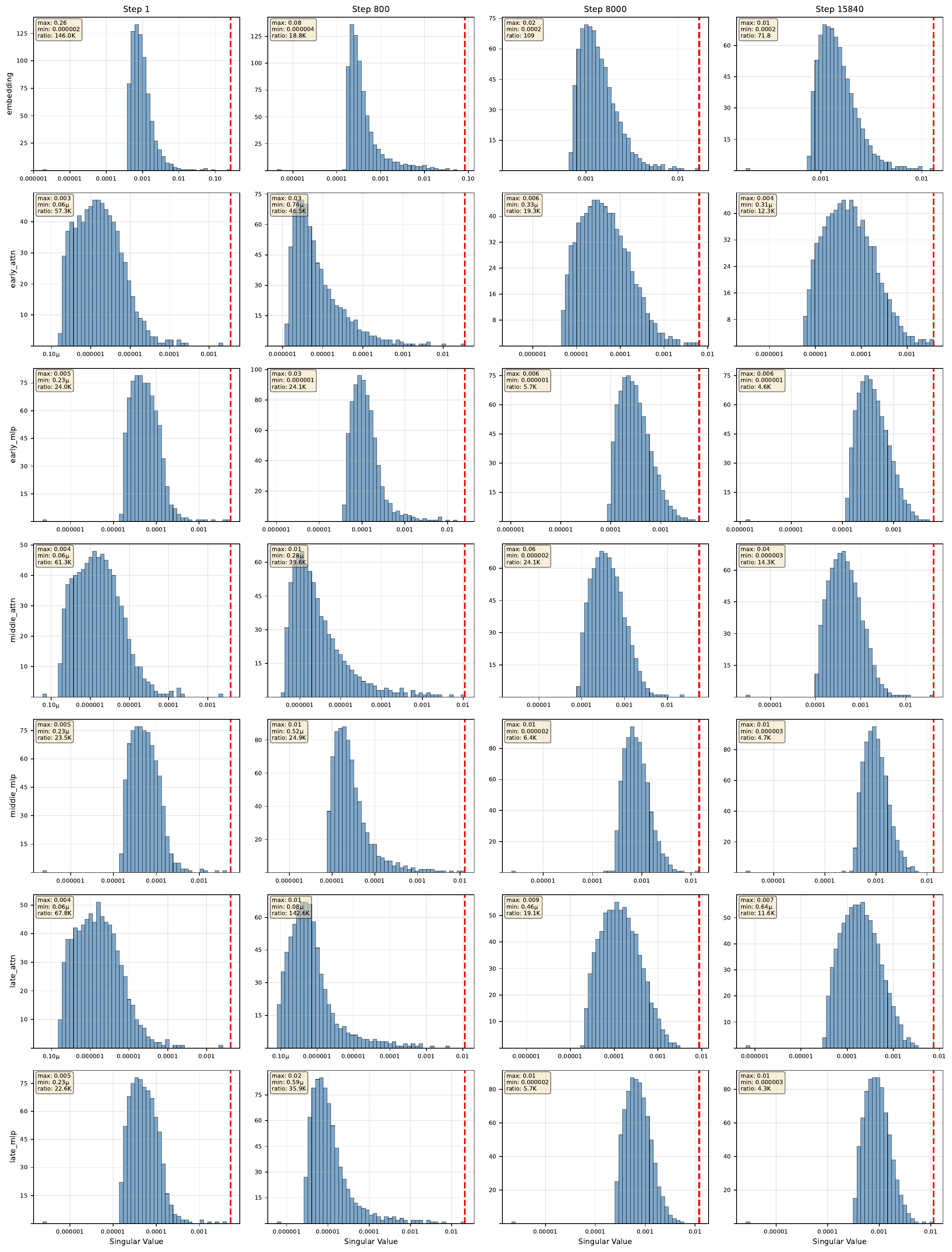}
    \caption{Singular value distribution of raw stochastic gradients for the 124M-parameter GPT with muP trained with AdamW. The heavy-tailed 'spectral outlier' phenomenon is less significant than 124M-llama model without muP.}
    \label{fig:svd_grad_mup_gpt}
\end{figure*}

\begin{figure*}[h]
    \centering
    \includegraphics[width=0.9\textwidth]{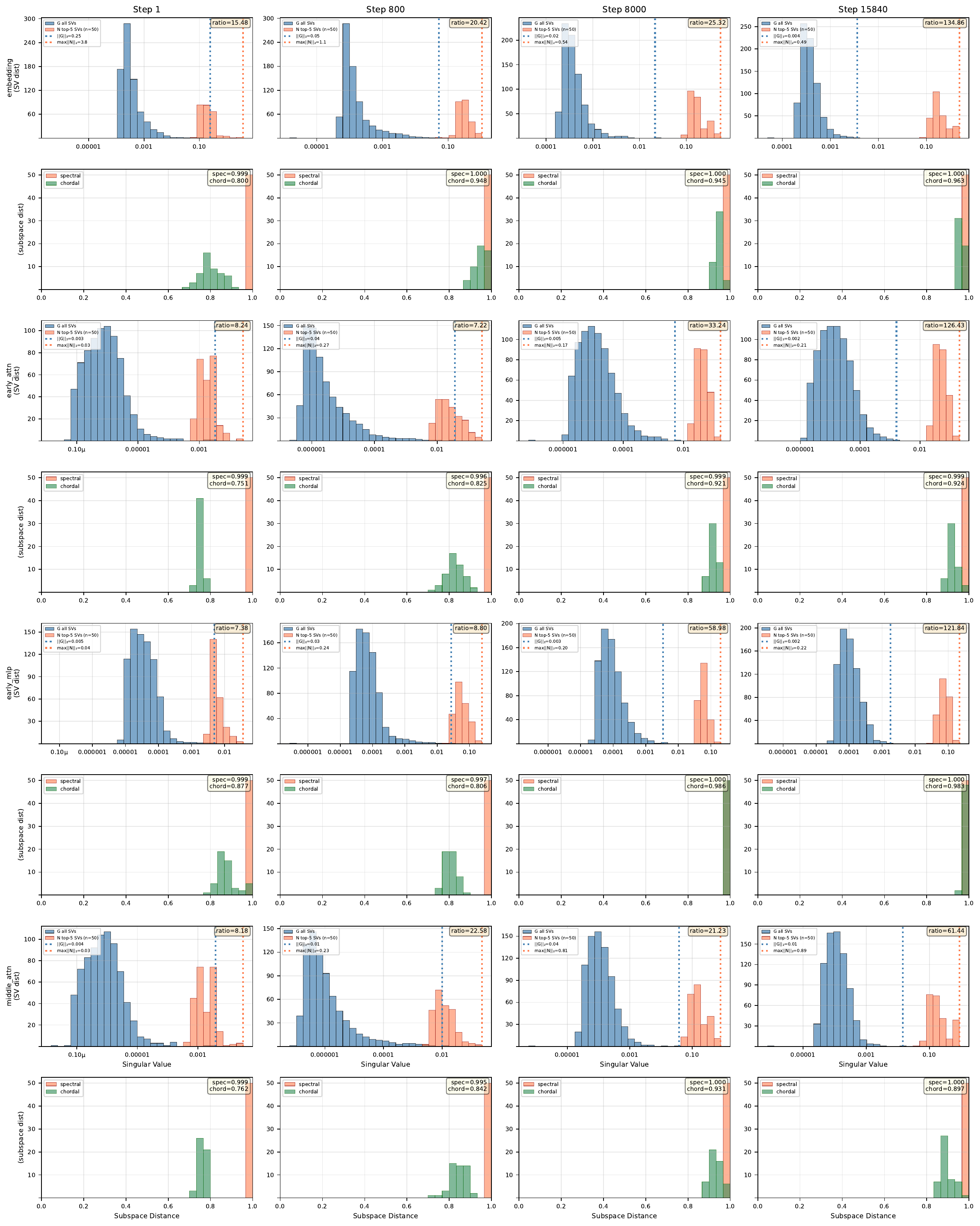}
    \caption{For each layer, we report the
    singular value distribution of large-batch
    gradient $\mG$ (blue bar) and top-5 singular value distribution of the noise matrices $\{\mN_i\}_{i=1}^{50}$ (red) in the first row. In the second row, 
    we report the distribution of both the spectral and chordal subspace distances of the top-5 principal directions between the noise matrices and the large-batch signal. Results are shown for representative layers of the 124M GPT model with muP (AdamW) at four training stages 0\%, 5\%, 50\%, and 99\%.  The presence of "sparse spectral spikes" in the noise appears mostly only in the end of the training.}
    \label{fig:noise_grad_mup_gpt}
\end{figure*}


\end{document}